\acrodef{SDP}[SDP]{semidefinite programming}
\newtheorem{Theorem}{Theorem}
\newtheorem{Lemma}{Lemma}
\newtheorem{Remark}{Remark}
\title{Semidefinite Programming for Community Detection with Side Information}
\author{ Mohammad~Esmaeili,
Hussein~Metwaly~Saad,~\IEEEmembership{Member,~IEEE}, and~Aria~Nosratinia,~\IEEEmembership{Fellow,~IEEE}
\thanks{M. Esmaeili and A. Nosratinia are with the Department of Electrical and Computer Engineering, The University of Texas at Dallas, Richardson, TX 75083-0688, USA, Email: esmaeili@utdallas.edu, aria@utdallas.edu. H. Metwaly Saad is with the Department of Electrical and Computer Engineering, Virginia Tech, Blacksburg, VA 24061, USA. Email: husseinm19@vt.edu}
\thanks{This work was supported in part by the grant 2008684 from the National Science Foundation.}
}
\def\Zsdp{\widehat{Z}}
\begin{document}

\maketitle

\begin{abstract}
This paper produces an efficient \ac{SDP} solution for community detection that incorporates non-graph data, which in this context is known as side information.
\ac{SDP} is an efficient solution for standard community detection on graphs. 
We formulate a semi-definite relaxation for the maximum likelihood estimation of node labels, subject to observing {\em both} graph and non-graph data. This  formulation is distinct from the \ac{SDP} solution of standard community detection, but maintains its desirable properties. We calculate the exact recovery threshold for three types of non-graph information, which in this paper are called side information: partially revealed labels, noisy labels, as well as multiple observations (features) per node with arbitrary but finite cardinality. We find that \ac{SDP} has the same exact recovery threshold in the presence of side information as maximum likelihood with side information. Thus, the methods developed herein are computationally efficient as well as asymptotically accurate for the solution of community detection in the presence of side information. Simulations show that the asymptotic results of this paper can also shed light on the performance of \ac{SDP} for graphs of modest size.
\end{abstract}

\begin{IEEEkeywords}
Community Detection, \ac{SDP}, Stochastic Block Model, Censored Block Model, Side Information
\end{IEEEkeywords}

\section{Introduction} 
\label{Introduction}
Detecting communities (or clusters) in graphs is a fundamental problem that has many applications, such as finding like-minded people in social networks~\cite{Ref5}, and improving recommendation systems~\cite{Ref6}.
Community detection is affiliated with various problems in network science such as network structure reconstruction~\cite{Keke-Huang-2019}, networks with dynamic interactions~\cite{Zhen-Wang-2020}, and complex networks~\cite{Keke-Huang-2020}. 
Random graph models~\cite{CommunityDetectionInGraphs2010, esmaeili2021community} are frequently used in the analysis of community detection, prominent examples of which include the stochastic block model~\cite{esmaeili2021community,Ref1,Ref4} and the censored block model~\cite{saade2015spectral, hajek2015exact}. 
In the context of these models, community detection recovers latent node labels (communities) by observing the edges of a graph.

Community detection utilizes several metrics for residual error as the size of the graph grows: correlated recovery \cite{Ref8, Ref9, Ref10, Ref11} (recovering the hidden community better than random guessing), weak recovery \cite{Ref12, Ref13, Ref15} (the fraction of misclassified labels in the graph vanishes with probability converging to one), and exact recovery~\cite{Ref4, Ref16, Ref17} (all nodes are classified correctly with high probability).  Recovery techniques include spectral methods~\cite{Ref4,Statistical.computational.Tradeoffs.in.Planted.Problems.and.Submatrix.Localization.with.a.Growing.Number.of.Clusters.and.Submatrices}, belief propagation~\cite{mossel2014belief}, and \ac{SDP} relaxation~\cite{amini2018semidefinite}.

Semidefinite programming is a computationally efficient convex optimization technique that has shown its utility in solving signal processing problems~\cite{SDP.SignalProcessing, ConvexOptimization.SignalProcessing}.
In the context of community detection, \ac{SDP} was introduced in~\cite{Ref24}, where it was used for solving a minimum bisection problem, obtaining a sufficient condition that is not optimal. In~\cite{Ref25}, a \ac{SDP} relaxation was considered for a maximum bisection problem. For the binary symmetric stochastic block model,~\cite{Ref18} showed that the \ac{SDP} relaxation of maximum likelihood can achieve the optimal exact recovery threshold with high probability. These results were later extended to more general models in~\cite{Ref19}.
Also, \cite{esmaeili2020community} showed the power of \ac{SDP} for solving a community detection problem in graphs with a secondary latent variable for each node. 

Community detection on graphs has been widely studied in part because the graph structure is amenable to analysis and admits efficient algorithms. In practice, however, the
available information for inference is often not purely graphical. For instance, in a citation network, beside the names of authors, there are some additional {\em non-graph} information such as keywords and abstract that can be used and improve the performance of community detection algorithms.
For illustration, consider public-domain libraries such as Citeseer and Pubmed. Citation networks in these libraries have been the subject of several community detection studies, which can be augmented by incorporating individual (non-graph) attributes of the documents that affect the likelihood of community memberships.

The non-graph data assisting in the solution of graph problems is called {\em side information.} 
In~\cite{Ref7,our2}, the effect of side information on the phase transition of the exact recovery was studied for the binary symmetric stochastic block model. In~\cite{our3,ISIT-2018-1,ISIT-2018-2}, the effect of side information was studied on the phase transition of the weak and exact recovery as well as the phase transition of belief propagation in the single community stochastic block model. 
The impact of side information on the performance of belief propagation was further studied in~\cite{Ref20,ISIT-2018-1}. 

The contribution of this paper is the analysis \color{black} of  the impact of side information on \ac{SDP} solutions for community detection. More specifically, we study the behavior of the \ac{SDP} detection threshold under the exact recovery metric. We consider graphs following the binary censored block model and the binary symmetric stochastic block model. 
We begin with the development of \ac{SDP} for partially-revealed labels and noisy labels, which are easier to grasp and visualize. This builds intuition for the more general setting, in which we study side information with multiple features per node, each of which is a random variable with arbitrary but finite cardinality. The former results also facilitate the understanding and interpretation of the latter. 
%
%
Most categories of side information give rise to a complete quadratic form in the likelihood function, which presents challenges in the analysis of their semidefinite programming relaxation. Overcoming these challenges is one of the main technical contributions of the present work.

Simulation results show that the thresholds calculated in this paper can also shed light on the understanding of the behavior of \ac{SDP} in graphs of modest size.

Notation: 
Matrices and vectors are denoted by capital letters, and their elements with small letters. $\mathbf{I}$ is the identity matrix and $\mathbf{J}$ the all-one matrix. 
$S \succeq 0$ indicates a positive semidefinite matrix and  $S \ge 0$ a matrix with non-negative entries. $||S||$ is the spectral norm, $\lambda_{2}(S)$ the second smallest eigenvalue (for a symmetric matrix), and $\langle \cdot,\cdot\rangle$ is the inner product. We abbreviate $[n] \triangleq \{ 1, \cdots,n \}$. Probabilities are denoted by $\mathbb{P}(\cdot)$ and random variables with Bernoulli and Binomial distribution are indicated by $\mathrm{Bern}(p)$ and $\mathrm{Binom}(n,p)$, respectively.

\section{System Model}

This paper analyzes community detection in the presence of a graph observation as well as individual node attributes. The graphs in this paper follow the binary stochastic block model and the censored block model, and side information is in the form of either partially revealed labels, noisy labels, or an alphabet other than the labels.

This paper considers a fully connected regime, guaranteeing that exact recovery is possible.
Throughout this paper, the graph adjacency matrix is denoted by $G$. Node labels are independent and identically distributed across $n$, with labels $+1$ and $-1$. The vector of node labels is denoted by $X$, and a corresponding vector of side information is denoted by $Y$.
The log-likelihood of the graph and side information is
\begin{equation*}
\log \mathbb {P}(G,Y|X) = \log\mathbb{P}(G|X)+ \log\mathbb {P}(Y|X) ,
\end{equation*}
i.e., $G$ and $Y$ are independent given $X$. 

\subsection{Binary Censored Block Model}
The model consists of an Erd\H{o}s-R\'enyi graph with $n$ nodes and edge probability $p=a\frac{\log n}{n}$ for a fixed $a>0$. The nodes belong to two communities represented by the binary node labels, which are latent. The entries $G_{ij}\in \{-1,0,1\}$ of the weighted adjacency matrix of the graph have a distribution that depends on the community labels $x_i$ and $x_j$ as follows:
\[
G_{ij} \sim \begin{cases}
p(1-\xi) \delta_{+1} + p\xi \delta_{-1} +(1-p)\delta_{0} & \text{ when } x_i = x_j\\
p(1-\xi) \delta_{-1} + p\xi \delta_{+1} +(1-p)\delta_{0} & \text{ when } x_i \neq x_j
\end{cases}
\]
where $\delta$ is Dirac delta function and $\xi \in [0, \frac{1}{2}]$ is a constant. Further, $G_{ii}=0$ and $G_{ij}=G_{ji}$. For all $j>i$, the edges $G_{ij}$ are mutually independent conditioned on the node labels.  The log-likelihood of $G$ is
\begin{equation}
\label{BCBM-equ1}
\log \mathbb{P}(G|X) = \frac{1}{4}T_{1}X^{T}GX+C_{1}, 
\end{equation}
where $T_{1} \triangleq \log \Big( \frac{1-\xi}{\xi} \Big)$ and $C_{1}$ is a deterministic scalar.

\subsection{Binary Symmetric Stochastic Block Model}
In this model, if nodes $i,j$ belong to the same community, 
$G_{i,j}\sim \mathrm{Bern}(p)$, otherwise $G_{ij} \sim \mathrm{Bern}(q)$ with
\begin{equation*}
    p=a\frac{\log n}{n},\qquad q=b\frac{\log n}{n},
\end{equation*}
and $a\geq b>0$. 
Then the log-likelihood of $G$ is 
\begin{equation}
\label{BSSBM-equ1}
\log \mathbb{P}(G|X) = \frac{1}{4}T_{1}X^{T}GX+C_{2} ,
\end{equation}
where $T_{1}\triangleq \log \Big (\frac{p(1-q)}{q(1-p)} \Big )$ and $C_{2}$ is a deterministic scalar.

\subsection{Side Information: Partially Revealed Labels}

Partially-revealed side information vector $Y$ consists of elements that with probability $1-\epsilon$ are  equal to the true label and with probability $\epsilon$ take value $0$, i.e., are erased.

Conditioned on each node label, the corresponding side information is assumed independent from other labels and from the graph edges. Thus, the log-likelihood of $Y$ is
\begin{equation}
\label{P-equ1}
\log \mathbb{P}(Y|X)= Y^{T}Y \log \bigg( \frac{1-\epsilon}{\epsilon} \bigg)+n \log(\epsilon).
\end{equation}

\subsection{Side Information: Noisy Labels }

Noisy-label side information vector $Y$ consists of elements that with probability $1-\alpha$ agree with the true label ($y_i=x^*_i$) and with probability $\alpha$ are erroneous ($y_i=-x^*_i$), where $\alpha \in (0, 0.5)$. Then the log-likelihood of $Y$ is
\begin{equation}
\label{N-equ1}
\log \mathbb{P}(Y|X)=\frac{1}{2}T_{2}X^{T}Y+T_{2}\frac{n}{2}+n \log \alpha  ,
\end{equation}
where $T_{2} \triangleq \log \big( \frac{1-\alpha}{\alpha} \big)$.

\subsection{Side Information: Multiple Variables \& Larger Alphabets}

In this model, we disengage the cardinality of side information alphabet from the node latent variable, and also allow for more side information random variables per node. This is motivated by practical conditions where the available non-graph information may be different from the node latent variable, and there may be multiple types of side information with varying utility for the inference.

Formally, $y_{i,k}$ is the random variable representing feature $k$ at node $i$. Each feature has cardinality $M_k$ that is finite and fixed across the graph. We group these variables into a vector $y_i$ of dimension $K$, representing side information for node $i$, and group the vectors into a matrix $Y$ representing all side information for the graph.\footnote{If vectors $y_i$ have unequal dimension, matrix $Y$ will accommodate the largest vector, producing vacant entries that are defaulted to zero.} 

Without loss of generality, the alphabet of each feature $k$ is the set of integers $\{1, \ldots,M_k\}$. The posterior probability of the features are denoted by
\begin{align*}
\alpha_{+,m_{k}}^{k} &\triangleq \mathbb{P} ( y_{i,k}=m_{k} | x_{i}=1 ),\\ 
\alpha_{-,m_{k}}^{k} &\triangleq \mathbb{P}( y_{i,k}=m_{k} | x_{i}=-1  ) ,
\end{align*}
where $m_{k}$ indexes the alphabet of feature $k$.
Then the log-likelihood of $Y$ is
\begin{align*}
\log \mathbb{P} & ( Y|X )= \sum_{i=1}^{n} \log \mathbb{P} ( y_{i}|x_{i} ) \\
=&\frac{1}{2} \sum_{i=1}^{n} x_{i} \sum_{k=1}^{K} \sum_{m_{k}=1}^{M_{k}} \mathbbm{1}_{y_{i,k} = m_{k} } \log\bigg( \frac{\alpha_{+,m{k}}^{k}}{\alpha_{-,m_{k}}^{k}}\bigg) \\
&+\frac{1}{2} \sum_{i=1}^{n} \sum_{k=1}^{K} \sum_{m_{k}=1}^{M_{k}} \mathbbm{1}_{y_{i,k} = m_{k} } \log (\alpha_{+,m_{k}}^{k} \alpha_{-,m_{k}}^{k} ) ,
\end{align*}
where $\mathbbm{1}$ is the indicator function. 
Define 
\[
\tilde{y}_{i} \triangleq \sum_{k=1}^{K} \sum_{m_{k}=1}^{M_{k}} \mathbbm{1}_{\{y_{i,k} = m_{k}\} } \log \bigg(\frac{\alpha_{+,m_{k}}^{k}}{\alpha_{-,m_{k}}^{k}}\bigg),
\]
and $\tilde{Y}\triangleq [\tilde{y}_{1}, \tilde{y}_{2}, \ldots , \tilde{y}_{n}]^{T}$.
Then the log-likelihood of $Y$ is
\begin{equation}
\label{G-equ1}
\log \mathbb{P} ( Y|X  )= \frac{1}{2} X^{T}\tilde{Y} + C_3 , 
\end{equation}
for some constant $C_3$. In the remainder of this paper, side information thus defined is referred to as {\em general side information}.
\section{Detection via \ac{SDP}}

For organizational convenience, the main results of the paper are concentrated in this section.

For the formulation of \ac{SDP}, we utilize the additional variables $Z\triangleq XX^{T}$ and $W \triangleq YY^{T}$. Also, let $Z^{*} \triangleq X^{*}X^{*T}$.

\subsection{Censored Block Model with Partially Revealed Labels}
Combining~\eqref{BCBM-equ1} and~\eqref{P-equ1}, the maximum likelihood detector is
\begin{align}
\label{BCBM-P-equ1}
\hat{X} =&  \underset{X}{\arg\max} ~X^{T}GX  \nonumber\\
&\text{subject to} \quad x_{i} \in \{\pm  1 \},\quad i\in[n] \nonumber\\
&\quad \quad \quad \quad \quad X^{T}Y=Y^{T}Y,
\end{align}
where the constraint $X^{T}Y=Y^{T}Y$ ensures that detected values agree with available side information.
This is a non-convex problem, therefore we consider a convex relaxation~\cite{Ref16,Ref26}. 
Replacing $x_{i} \in \{\pm  1 \}$ with $Z_{ii}=1$, and $X^{T}Y= \pm Y^{T}Y$ with $\langle Z,W\rangle =(Y^{T}Y)^2$,
\begin{align}
\label{BCBM-P-equ1-1}
\widehat{Z}=&\underset{Z}{\arg\max} ~\langle Z,G\rangle \nonumber\\
&\text{subject to} \quad Z=XX^{T} \nonumber\\
& \quad \quad \quad \quad \quad Z_{ii}=1,\quad i\in [n] \nonumber\\
& \quad \quad \quad \quad \quad \langle Z,W\rangle =(Y^{T}Y)^2.
\end{align}
By relaxing the rank-one constraint introduced via $Z$, we obtain the following \ac{SDP} relaxation:
\begin{align}
\label{BCBM-P-equ2}
\Zsdp=&\underset{Z}{\arg\max} ~\langle Z,G\rangle \nonumber\\
&\text{subject to} \quad Z\succeq 0 \nonumber\\
& \quad \quad \quad \quad \quad Z_{ii}=1,\quad i\in [n] \nonumber\\
& \quad \quad \quad \quad \quad \langle Z,W\rangle =(Y^{T}Y)^2.
\end{align}
Let $\beta \triangleq \lim_{n \rightarrow \infty} -\frac{\log \epsilon}{\log n}$, where $\beta \geq 0$.
\begin{Theorem} 
\label{Theorem 1}
Under the binary censored block model and partially revealed labels, if
\begin{equation*}
    a(\sqrt{1-\xi}-\sqrt{\xi})^2+\beta>1 ,
\end{equation*}
then the \ac{SDP} estimator is asymptotically optimal, i.e., $\mathbb{P}(\Zsdp=Z^{*})\geq 1-o(1)$.
\end{Theorem}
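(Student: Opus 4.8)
The plan is to certify that $Z^{*}$ is the unique optimum of the \ac{SDP} in \eqref{BCBM-P-equ2} by constructing a Lagrangian dual certificate. Introduce a diagonal matrix of multipliers $D=\mathrm{diag}(d_{1},\ldots,d_{n})$ for the constraints $Z_{ii}=1$ and a scalar multiplier $\mu$ for the side-information constraint $\langle Z,W\rangle=(Y^{T}Y)^{2}$, and set $S\triangleq D-G+\mu W$. By \ac{SDP} duality and complementary slackness, the theorem follows once we exhibit, with probability $1-o(1)$, multipliers for which $S\succeq 0$, $SX^{*}=0$, and $\lambda_{2}(S)>0$: then every optimal $\widetilde{Z}$ obeys $S\widetilde{Z}=0$, so its columns lie in $\mathrm{span}(X^{*})$, and $\widetilde{Z}_{ii}=1$ forces $\widetilde{Z}=Z^{*}$.

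The equation $SX^{*}=0$ fixes $D$. Writing $r\triangleq Y^{T}Y$ for the number of revealed labels and using $(WX^{*})_{i}=r\,y_{i}$ together with $y_{i}x^{*}_{i}=\mathbbm{1}\{\text{node }i\text{ is revealed}\}$, one is forced to take $d_{i}=x^{*}_{i}(GX^{*})_{i}$ for unrevealed $i$ and $d_{i}=x^{*}_{i}(GX^{*})_{i}-\mu r$ for revealed $i$. For this choice the dual objective $\sum_{i}d_{i}+\mu r^{2}$ equals $X^{*T}GX^{*}=\langle Z^{*},G\rangle$, so strong duality holds automatically and only $S\succeq 0$ and $\lambda_{2}(S)>0$ remain. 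If $\beta=0$ we take $\mu=0$, so $d_{i}=x^{*}_{i}(GX^{*})_{i}$ for every $i$. If $\beta>0$, then $\epsilon=n^{-\beta+o(1)}$ and $r=n\,(1-o(1))$; we take $\mu<0$ with $|\mu|\,r\asymp\log n$ (the implied constant to be taken large), which will make every revealed node certified deterministically.

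Since $S$ is symmetric and $SX^{*}=0$, the condition $\lambda_{2}(S)>0$ is equivalent to $v^{T}Sv>0$ for all $v\perp X^{*}$ with $v\neq 0$, and $v^{T}Sv=\sum_{i}d_{i}v_{i}^{2}-v^{T}Gv+\mu(Y^{T}v)^{2}$. From $\mathbb{E}[G]=p(1-2\xi)(Z^{*}-\mathbf{I})$ and $v\perp X^{*}$ one gets $-v^{T}Gv\ge p(1-2\xi)\|v\|^{2}-\|G-\mathbb{E}G\|\,\|v\|^{2}$, where $\|G-\mathbb{E}G\|\le C_{0}\sqrt{np}$ w.h.p.\ for an absolute constant $C_{0}$ (after the standard pruning of atypically high-degree vertices when $a<1$). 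Using $v\perp X^{*}$ once more, $Y^{T}v=\sum_{i\text{ rev}}x^{*}_{i}v_{i}=-\sum_{i\text{ unrev}}x^{*}_{i}v_{i}$, so $|\mu|\,(Y^{T}v)^{2}\le|\mu|\,n_{\epsilon}\,\|v\|^{2}$ where $n_{\epsilon}$ is the number of unrevealed nodes, which is $n^{1-\beta+o(1)}$; hence this term is $o(\sqrt{np})\,\|v\|^{2}$ when $\beta>0$ (and absent when $\mu=0$). It then suffices to show $\min_{i}d_{i}\ge (C_{0}+1)\sqrt{np}$ w.h.p. For a revealed $i$ with $\beta>0$, $d_{i}=x^{*}_{i}(GX^{*})_{i}+|\mu|\,r\ge|\mu|\,r-\Delta$, with $\Delta=O(\log n)$ the maximum degree, so this holds deterministically once the constant in $|\mu|\,r\asymp\log n$ is large. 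For an unrevealed $i$ (and, when $\beta=0$, for every $i$), $x^{*}_{i}(GX^{*})_{i}$ is a sum of $n-1$ i.i.d.\ terms equal to $+1,-1,0$ with probabilities $p(1-\xi),\,p\xi,\,1-p$, and a Chernoff bound gives $\mathbb{P}\bigl(x^{*}_{i}(GX^{*})_{i}<(C_{0}+1)\sqrt{np}\bigr)\le n^{-a(\sqrt{1-\xi}-\sqrt{\xi})^{2}+o(1)}$, the deviation level $(C_{0}+1)\sqrt{np}=o(\log n)$ being negligible next to the mean $\asymp\log n$. A union bound over the $n^{1-\beta+o(1)}$ relevant nodes is $o(1)$ precisely when $a(\sqrt{1-\xi}-\sqrt{\xi})^{2}+\beta>1$ (for $\beta=0$ the union bound is over all $n$ nodes and needs $a(\sqrt{1-\xi}-\sqrt{\xi})^{2}>1$, the same condition). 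Combining the three estimates gives $v^{T}Sv\ge\bigl(\sqrt{np}-o(\sqrt{np})\bigr)\|v\|^{2}>0$.

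The crux is this spectral step, where the side-information multiplier $\mu$ must meet two competing demands: $|\mu|\,r$ large enough (order $\log n$) that the diagonal entry of each revealed node alone dominates the $O(\sqrt{np})$ noise matrix $G-\mathbb{E}G$, yet $|\mu|\,n_{\epsilon}$ small enough ($o(\sqrt{np})$) that the rank-one term $\mu W$ does not destroy positivity on the orthogonal complement of $X^{*}$. Both are met because $n_{\epsilon}/r\asymp\epsilon=n^{-\beta+o(1)}$, and it is exactly the resulting reduction of the union bound from $n$ nodes to the $n^{1-\beta}$ unrevealed ones that upgrades the graph-only exponent $a(\sqrt{1-\xi}-\sqrt{\xi})^{2}$ to $a(\sqrt{1-\xi}-\sqrt{\xi})^{2}+\beta$. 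A supporting technical point is extracting the Hellinger-type rate $(\sqrt{1-\xi}-\sqrt{\xi})^{2}$ from the Chernoff bound in the sparse regime $p=a\log n/n$ and verifying that moving the deviation level from $0$ to $(C_{0}+1)\sqrt{np}=o(\log n)$ leaves the exponent unchanged to leading order.
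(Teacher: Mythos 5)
Your proposal is correct and is built on the same dual-certificate architecture as the paper: the certificate $S = D + \mu W - G$ with $SX^{*}=0$ forcing exactly the paper's diagonal choice $d_{i} = x_{i}^{*}(GX^{*})_{i} - \mu r\,\mathbbm{1}\{i \text{ revealed}\}$, a spectral-norm bound of order $\sqrt{\log n}$ for $G-\mathbb{E}[G]$, and a Chernoff bound with the exponent $a(\sqrt{1-\xi}-\sqrt{\xi})^{2}$. Where you genuinely diverge is in how the side-information multiplier and the $W$-perturbation are handled. The paper keeps $\mu^{*}$ a fixed negative constant, controls $V^{T}WV \le \sqrt{\log n}$ by a separate probabilistic lemma, runs the union bound over all $n$ nodes, and obtains the $+\beta$ because each node's bad event carries the factor $\epsilon = n^{-\beta+o(1)}$ of being unrevealed, revealed nodes being rescued by the term $-\mu^{*}U \asymp n$. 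You instead let $\mu$ scale with $n$ so that $|\mu|\,r \asymp \log n$, certify revealed nodes via the maximum degree, bound the perturbation by the deterministic Cauchy--Schwarz step $(Y^{T}v)^{2} \le n_{\epsilon}\|v\|^{2}$ exploiting $v \perp X^{*}$, and recover the $+\beta$ from the union bound running only over the $n^{1-\beta+o(1)}$ unrevealed nodes. The two accountings yield the same threshold; yours has the advantage that the bound on the $W$-term is deterministic given the revealed set and hence uniform in $v$ (which is exactly what the infimum over $v\perp X^{*}$ requires), and it needs no separate lemma for $V^{T}WV$, at the cost of an $n$-dependent multiplier. Two small cleanups: the aside about pruning high-degree vertices is unnecessary (and would actually be problematic, since pruning alters $G$ and hence the certificate); in the regime $p=a\log n/n$ the bound $\|G-\mathbb{E}[G]\| \le c'\sqrt{\log n}$ holds as stated in the paper's Lemma~\ref{BCBM-P-Lemma 2}, which is all you use. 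Also, the revealed-node bound is not deterministic but holds on the high-probability event that the maximum degree is $O(\log n)$; with that wording fixed, the argument is complete.
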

\begin{proof}
See Appendix~\ref{Proof-Theorem-1}.
\end{proof}

\begin{Theorem} 
\label{Theorem 2}
Under the binary censored block model and partially revealed labels, if
\begin{equation*}
    a(\sqrt{1-\xi}-\sqrt{\xi})^2+\beta<1 , 
\end{equation*}
then for any sequence of estimators $\widehat{Z}_{n}$, $\mathbb{P}(\widehat{Z}_{n}=Z^{*}) \rightarrow 0$ as $n \rightarrow \infty $.
\end{Theorem}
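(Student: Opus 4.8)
The plan is an information-theoretic converse of the classical ``two-moment / confusable-vertex'' type. The starting point is that, for recovering $Z^{*}$ under the $0$--$1$ loss, the Bayes-optimal estimator is the posterior mode; writing $\pi(x):=\mathbb{P}(X^{*}=x\mid G,Y)$ and $\pi_{Z}(z):=\mathbb{P}(Z^{*}=z\mid G,Y)=\pi(x_{z})+\pi(-x_{z})$, where $\pm x_{z}$ are the two square roots of an admissible $z$, every estimator obeys $\mathbb{P}(\widehat{Z}_{n}=Z^{*})=\mathbb{E}_{G,Y}[\pi_{Z}(\widehat{Z}_{n})]\le\mathbb{E}_{G,Y}[\max_{z}\pi_{Z}(z)]=\mathbb{P}(\text{posterior mode}=Z^{*})$. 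Hence it suffices to prove that, with probability tending to one, the posterior mode differs from $Z^{*}$. (If $\xi=\tfrac12$ the graph carries no information about the labels and the statement collapses to the partially-revealed-labels threshold $\beta<1$, so assume $\xi<\tfrac12$, i.e.\ $T_{1}>0$.)

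Next I would exhibit, with high probability, a single ``bad'' vertex that topples the mode. Since the labels are i.i.d.\ uniform, posterior ratios equal likelihood ratios, and from \eqref{BCBM-equ1} and \eqref{P-equ1} flipping the label of a vertex $v$ with $y_{v}=0$ multiplies $\mathbb{P}(G,Y\mid X)$ by $\exp(-T_{1}W_{v})$, where $W_{v}:=x_{v}^{*}\sum_{j}G_{vj}x_{j}^{*}$, whereas flipping a vertex with $y_{v}\neq0$ sends the likelihood to $0$. Call $v$ \emph{bad} if $y_{v}=0$ and $W_{v}<0$. If a bad vertex exists, let $X'$ be $X^{*}$ with $v$ flipped; since $v$ is unrevealed, flipping $v$ in either $X^{*}$ or $-X^{*}$ scales the likelihood by the same factor $e^{T_{1}|W_{v}|}>1$, so $\pi_{Z}(X'X'^{T})=\pi(X')+\pi(-X')=e^{T_{1}|W_{v}|}\,\pi_{Z}(Z^{*})>\pi_{Z}(Z^{*})$, and as $X'\neq\pm X^{*}$ this means $Z^{*}$ is not the posterior mode.

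It remains to show $\mathbb{P}(\exists\text{ bad }v)\to1$, which is a two-moment estimate for $N:=|\{v:y_{v}=0,\ W_{v}<0\}|$. Conditioned on the community of $v$, the $n-1$ summands of $W_{v}$ are independent, each equal to $+1$ with probability $p(1-\xi)$, to $-1$ with probability $p\xi$, and to $0$ otherwise (the two community cases coincide by the symmetry of the model), so $W_{v}$ is, up to $o(1)$ in total variation, the difference of independent Poisson variables of means $a(1-\xi)\log n$ and $a\xi\log n$. An exponential-tilting (Chernoff) bound, matched up to an $n^{o(1)}$ factor by a local limit estimate, gives $\mathbb{P}(W_{v}<0)=n^{-a(\sqrt{1-\xi}-\sqrt{\xi})^{2}+o(1)}$; together with $\mathbb{P}(y_{v}=0)=\epsilon=n^{-\beta+o(1)}$ this yields $\mathbb{E}[N]=n^{\,1-a(\sqrt{1-\xi}-\sqrt{\xi})^{2}-\beta+o(1)}$, which diverges exactly under the hypothesis $a(\sqrt{1-\xi}-\sqrt{\xi})^{2}+\beta<1$. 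A second-moment bound then gives $N\to\infty$, hence $N\ge1$, with probability tending to one, because for $v\neq w$ the events $\{y_{v}=0,W_{v}<0\}$ and $\{y_{w}=0,W_{w}<0\}$ are nearly independent: $W_{v}$ and $W_{w}$ share only the single edge term $G_{vw}x_{v}^{*}x_{w}^{*}$ and $y_{v},y_{w}$ are independent, so $\mathbb{E}[N^{2}]=(1+o(1))\mathbb{E}[N]^{2}$.

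The main obstacle is the sharp tail estimate $\mathbb{P}(W_{v}<0)=n^{-a(\sqrt{1-\xi}-\sqrt{\xi})^{2}+o(1)}$ and its companion second-moment bound. The summands of $W_{v}$ are Bernoulli rather than Poisson, so one must either Poissonize with explicit control of the coupling error or estimate a Binomial-difference tail directly; and the $n^{o(1)}$-precise \emph{lower} bound (not merely the Chernoff upper bound) is essential, since it is what pins $\mathbb{E}[N]$ to the exponent $1-a(\sqrt{1-\xi}-\sqrt{\xi})^{2}-\beta$ and hence identifies the exact threshold. The weak-dependence estimate must moreover be made uniform over all $\binom{n}{2}$ pairs. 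The reduction in the first two paragraphs is, by contrast, routine once the Bayes-optimality identity is in place.
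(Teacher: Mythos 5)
Your argument is correct in substance and shares the paper's core mechanism -- exact recovery fails because, below the threshold, there exists with high probability an \emph{unrevealed} node whose signed weighted degree $W_v=x_v^*\sum_j G_{vj}x_j^*$ is negative, so flipping it strictly increases the likelihood while remaining consistent with $Y$; your reduction via the posterior mode over $Z$ is the same as the paper's ``ML minimizes error under a uniform prior, so it suffices that ML fails.'' Where you genuinely diverge is in how the existence of such a node is established. The paper sidesteps all correlation issues: it restricts attention to a block $H$ of the first $\lfloor n/\log^2 n\rfloor$ nodes, splits $W_i$ into the sum over $H^c$ (these are \emph{exactly} independent across $i\in H$, since they use disjoint edges) plus the intra-$H$ edge count, controls the latter uniformly by a binomial tail bound (event $E_1$), and then needs only a product bound $1-[1-\epsilon\, n^{-a(\sqrt{1-\xi}-\sqrt{\xi})^2+o(1)}]^{|H|}$ together with the sharp \emph{lower} tail estimate (the paper's Lemma~6, cited from prior work). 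You instead run a second-moment argument over all $n$ nodes, which is also standard and sound: the cross terms factor exactly when the single shared edge $G_{vw}$ is zero, and the pairs with a nonzero shared edge contribute an extra factor $p=a\log n/n$ times an $n^{o(1)}$ threshold-shift correction, which is negligible. What the paper's block trick buys is that no correlation estimate and no Chernoff upper bound are needed for the converse at all; what your route buys is that you avoid the decomposition and the auxiliary event $E_1$, at the price of carrying out the (routine but real) uniform pair-correlation bound you flag. Both routes rest on the same nontrivial ingredient, the $n^{o(1)}$-sharp lower bound on $\mathbb{P}(W_v<0)$, which you correctly identify as the main obstacle and which the paper imports by citation rather than proving. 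Two minor loose ends in your write-up: the $\xi=\tfrac12$ (equivalently $T_1=0$) boundary case is asserted rather than argued (the flipping step needs $T_1>0$ for strictness, so the erased-node tie-counting argument you sketch should be spelled out), and the Paley--Zygmund step needs $\mathbb{E}[N]\to\infty$, which does follow since the hypothesis makes the exponent $1-a(\sqrt{1-\xi}-\sqrt{\xi})^2-\beta$ strictly positive.
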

\begin{proof}
See Appendix~\ref{Proof-Theorem-2}.
\end{proof}

\subsection{Censored Block Model with Noisy Labels}
Combining~\eqref{BCBM-equ1} and~\eqref{N-equ1}, the maximum likelihood detector is 
\begin{align}
\label{BCBM-N-equ1}
\hat{X} = & \underset{X}{\arg\max}~ T_{1}X^{T}GX+2T_{2}X^{T}Y  \nonumber\\
&\text{subject to} \quad x_{i} \in \{\pm  1 \},\quad i\in[n].
\end{align}
Then~\eqref{BCBM-N-equ1} is equivalent to
\begin{align}
\label{BCBM-N-equ2}
\widehat{Z}=&\underset{Z,X}{\arg\max} ~T_{1}   \langle G, Z   \rangle + 2T_{2} X^{T}Y  \nonumber\\
&\text{subject to} \quad Z = XX^{T} \nonumber\\
& \quad \quad \quad \quad \quad Z_{ii} =1, \quad i \in  [n ]. 
\end{align}
Relaxing the rank-one constraint, using
\begin{equation*}
Z-XX^{T}\succeq 0  \Leftrightarrow \begin{bmatrix} 1 & X^{T}\\  X & Z \end{bmatrix} \succeq 0 , 
\end{equation*}
yields the \ac{SDP} relaxation of~\eqref{BCBM-N-equ2}:
\begin{align}
\label{BCBM-N-equ3}
\Zsdp=&\underset{Z,X}{\arg\max} ~T_{1}   \langle G, Z   \rangle + 2T_{2} X^{T}Y  \nonumber\\
&\text{subject to} \quad \begin{bmatrix} 1 & X^{T}\\  X & Z \end{bmatrix} \succeq 0 \nonumber\\
& \quad \quad \quad \quad \quad Z_{ii}=1,\quad i\in [n] .
\end{align}
Let $\beta \triangleq \lim_{n \rightarrow \infty} \frac{T_{2}}{\log n}$, where $\beta \geq 0$. Also, for convenience define
\begin{equation*}
\eta(a,\beta) \triangleq a-\frac{\gamma}{T_{1}} +\frac{\beta}{2T_{1}} \log  \Bigg( \frac{ (1-\xi) (\gamma + \beta)}{\xi (\gamma - \beta)}  \Bigg) ,
\end{equation*}
where $\gamma \triangleq \sqrt{\beta^{2}+4\xi(1-\xi)a^{2}T_{1}^{2}}$.
\begin{Theorem}
\label{Theorem 3}
Under the binary censored block model and noisy labels, if
\begin{equation*}
\begin{cases}
\eta(a,\beta)>1 &   \text{when } 0\leq \beta<aT_{1}(1-2\xi)\\
\beta>1 & \text{when } \beta \geq aT_{1}(1-2\xi)
\end{cases}
\end{equation*}
then the \ac{SDP} estimator is asymptotically optimal, i.e., $\mathbb{P}(\Zsdp=Z^{*})\geq 1- o(1)$.
\end{Theorem}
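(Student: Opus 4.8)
The plan is to prove Theorem~\ref{Theorem 3} by the dual-certificate (KKT) method for exact recovery of semidefinite programs, adapted to the coupled graph/side-information objective. First I would put \eqref{BCBM-N-equ3} in homogeneous lifted form: stacking the variables into an $(n+1)\times(n+1)$ matrix $\mathcal{Z}$ with $\mathcal{Z}_{00}=1$, $\mathcal{Z}_{0i}=\mathcal{Z}_{i0}=x_{i}$, $\mathcal{Z}_{ij}=Z_{ij}$, and the data into $\widetilde{G}$ with $\widetilde{G}_{00}=0$, $\widetilde{G}_{0i}=\widetilde{G}_{i0}=T_{2}y_{i}$, $\widetilde{G}_{ij}=T_{1}G_{ij}$, the program becomes $\max\,\langle\widetilde{G},\mathcal{Z}\rangle$ over $\mathcal{Z}\succeq 0$ with unit diagonal, so the side information is folded into an augmented data matrix. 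The candidate optimum is $\mathcal{Z}^{*}=u^{*}u^{*T}$ with $u^{*}=(1,X^{*T})^{T}$, which is primal feasible. The Lagrangian dual attaches a diagonal $D=\mathrm{diag}(\lambda_{0},\ldots,\lambda_{n})$ with dual slack $S=D-\widetilde{G}$, and by the KKT conditions $\mathcal{Z}^{*}$ is the \emph{unique} primal optimum as soon as $S\succeq 0$, $Su^{*}=0$, and $\mathrm{rank}(S)=n$. Forcing $Su^{*}=0$ solves for the multipliers: $\lambda_{0}=T_{2}\,X^{*T}Y$ and, for $i\in[n]$, $\lambda_{i}=T_{1}x^{*}_{i}(GX^{*})_{i}+T_{2}x^{*}_{i}y_{i}$ (using $x^{*2}_{i}=1$). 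What remains is to show $v^{T}Sv>0$ for every $v\perp u^{*}$, $v\neq 0$, with probability $1-o(1)$.

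The second step reduces this positivity to a vertexwise bound on the $\lambda_{i}$. Writing $v=(v_{0},w)$ with $v_{0}\in\mathbb{R}$, $w\in\mathbb{R}^{n}$, the constraint $v\perp u^{*}$ forces $v_{0}=-\langle w,X^{*}\rangle$, and I would further split $w=\mu X^{*}+w_{\perp}$ with $w_{\perp}\perp X^{*}$. Substituting the explicit $\lambda_{i}$ together with $\mathbb{E}[G]=p(1-2\xi)(X^{*}X^{*T}-\mathbf{I})$, all terms quadratic and linear in $\mu$ collapse, after completing the square, into $T_{2}(X^{*T}Y)(n+1)^{2}(\mu-\mu_{0})^{2}\ge 0$, and one is left with
\[
v^{T}Sv\;\ge\;\sum_{i}\lambda_{i}(w_{\perp})_{i}^{2}\;-\;T_{1}w_{\perp}^{T}Gw_{\perp}\;-\;\frac{T_{2}\langle Y,w_{\perp}\rangle^{2}}{X^{*T}Y}.
\]
Three routine concentration facts then finish this step: $X^{*T}Y=n(1+o(1))$; the number of corrupted labels is $O(\alpha n)$, so $\langle Y,w_{\perp}\rangle^{2}=O(\alpha n)\|w_{\perp}\|^{2}$ and, since $\alpha T_{2}$ is bounded, the last term is $o(\log n)\|w_{\perp}\|^{2}$; and $\|G-\mathbb{E}G\|=O(\sqrt{\log n})$ in the logarithmic-degree regime, so $T_{1}w_{\perp}^{T}Gw_{\perp}\le O(\sqrt{\log n})\|w_{\perp}\|^{2}$. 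Hence $v^{T}Sv\ge(\min_{i\in[n]}\lambda_{i}-o(\log n))\|w_{\perp}\|^{2}$ plus a nonnegative term that is strictly positive whenever $w_{\perp}=0$ and $v\ne 0$, so everything reduces to showing $\min_{i}\lambda_{i}\ge\varepsilon\log n$ for a fixed $\varepsilon>0$ with probability $1-o(1)$.

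The core probabilistic step is a union bound: it suffices that $\mathbb{P}(\lambda_{i}\le\varepsilon\log n)=o(1/n)$. Here $\lambda_{i}$ has the law of $T_{1}(N_{+}-N_{-})+T_{2}B_{i}$, where $N_{+}\sim\mathrm{Binom}(n-1,p(1-\xi))$ and $N_{-}\sim\mathrm{Binom}(n-1,p\xi)$ are independent (hence essentially Poisson with means $a(1-\xi)\log n$ and $a\xi\log n$) and $B_{i}=\pm1$ with $\mathbb{P}(B_{i}=-1)=\alpha\doteq n^{-\beta}$. I would condition on $B_{i}$. On $\{B_{i}=+1\}$ the event needs $N_{+}-N_{-}$ roughly below $-(\beta/T_{1})\log n$, a deep lower-tail deviation of the Poisson difference whose Chernoff exponent I would compute and simplify — using $\gamma^{2}-\beta^{2}=4\xi(1-\xi)a^{2}T_{1}^{2}$ and $T_{1}=\log\frac{1-\xi}{\xi}$ — to be exactly $\eta(a,\beta)$. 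On $\{B_{i}=-1\}$ the contribution is $\alpha$ times $\mathbb{P}(N_{+}-N_{-}\lesssim(\beta/T_{1})\log n)$: when $\beta<aT_{1}(1-2\xi)$ this is again a lower-tail event and the same identities make its exponent plus $\beta$ equal $\eta(a,\beta)$, so the two modes have identical rate; when $\beta\ge aT_{1}(1-2\xi)$ this probability tends to $1$, so that mode has rate $\beta$ and dominates. Thus $\mathbb{P}(\lambda_{i}\le 0)\doteq n^{-\eta(a,\beta)}$ in the first case and $\doteq n^{-\beta}$ in the second; since the exponent is continuous in the threshold, the \emph{strict} hypothesis gives $\mathbb{P}(\lambda_{i}\le\varepsilon\log n)=o(1/n)$ for $\varepsilon$ small, so $\min_{i}\lambda_{i}\ge\varepsilon\log n$ with high probability and the dual certificate exists.

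I expect the main obstacle to be this large-deviation computation: identifying the two competing failure modes — a correct side-information bit overturned by a misleading graph neighborhood, versus an incorrect bit insufficiently corrected by the graph — and verifying that their exponents combine to exactly $\eta(a,\beta)$, with the phase boundary $\beta=aT_{1}(1-2\xi)$ emerging precisely where the second mode's conditional tail probability ceases to be exponentially small. A secondary technical point, used throughout, is the spectral estimate $\|G-\mathbb{E}G\|=O(\sqrt{\log n})$ for the censored adjacency matrix at edge probability $\Theta(\log n/n)$, which requires the sharp sparse random-matrix bounds rather than a crude trace argument. The remaining pieces — primal feasibility of $\mathcal{Z}^{*}$, the KKT sufficiency statement for uniqueness, and the elementary concentration inequalities — are routine, and the same certificate works verbatim in both cases of the theorem.
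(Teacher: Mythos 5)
Your proposal is correct and follows essentially the same route as the paper: the same dual certificate (your multipliers are exactly the paper's $d_{i}^{*}=T_{1}\sum_{j}G_{ij}x_{i}^{*}x_{j}^{*}+T_{2}x_{i}^{*}y_{i}$ and $S_{A}^{*}=T_{2}Y^{T}X^{*}$), the same reduction of $\inf_{v\perp u^{*}}v^{T}Sv>0$ to a vertexwise bound $\min_{i}d_{i}^{*}\gtrsim \log n/\log\log n$ via $\|G-\mathbb{E}[G]\|=O(\sqrt{\log n})$ and concentration of $Y^{T}X^{*}$, and the same conditional Chernoff computation plus union bound giving the exponents $\eta(a,\beta)$ and $\beta$ in the two regimes. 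The one step you assert rather than prove --- that the $\beta$-mode dominates when $\beta\geq aT_{1}(1-2\xi)$, i.e.\ $\eta(a,\beta)\geq\beta$ --- is exactly the paper's short convexity argument (Lemma~\ref{BCBM-N-Lemma 3}), so nothing essential is missing.
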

\begin{proof}
See Appendix~\ref{Proof-Theorem-3}.
\end{proof}

\begin{Theorem} 
\label{Theorem 4}
Under the binary censored block model and noisy labels, if
\begin{equation*}
\begin{cases}
\eta(a,\beta)<1 &   \text{when } 0\leq \beta<aT_{1}(1-2\xi)\\
\beta<1 & \text{when } \beta \geq aT_{1}(1-2\xi)
\end{cases}
\end{equation*}
then for any sequence of estimators $\widehat{Z}_{n}$, $\mathbb{P}(\widehat{Z}_{n}=Z^{*}) \rightarrow 0$ as $n \rightarrow \infty $.
\end{Theorem}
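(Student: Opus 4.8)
\textbf{Proof proposal for Theorem~\ref{Theorem 4}.}

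The plan is to prove the impossibility claim by the standard route for exact-recovery converses. Since the maximum a posteriori (MAP) estimator of $Z^{*}$ minimizes the error probability, it suffices to show $\mathbb{P}(\Zsdp_{\mathrm{MAP}} = Z^{*}) \to 0$; for this it is enough to exhibit, with probability tending to one, a single node whose label-flip strictly increases the joint objective $T_{1}X^{T}GX + 2T_{2}X^{T}Y$. Because the noisy labels satisfy $\log\bigl[\mathbb{P}(Y|X^{*})/\mathbb{P}(Y|-X^{*})\bigr] = T_{2}\sum_i x_i^{*}y_i$, which concentrates around $T_{2}n(1-2\alpha)>0$, the global sign ambiguity is broken with high probability, so it is legitimate to reason with the MAP estimator of $X^{*}$ and its single-vertex perturbations: if some flip of $X^{*}$ beats $X^{*}$, then $X^{*}$ is not the MAP point and $\Zsdp_{\mathrm{MAP}}\neq Z^{*}$, hence no estimator recovers $Z^{*}$.

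First I would isolate the per-vertex criterion. Flipping $x_i^{*}$ changes the objective by $-4x_i^{*}\bigl(T_{1}\sum_{j\neq i}G_{ij}x_j^{*} + T_{2}y_i\bigr) = -4(T_{1}S_i + T_{2}\sigma_i)$, where $S_i \triangleq \sum_{j\neq i} x_i^{*}x_j^{*}G_{ij}$ and $\sigma_i \triangleq x_i^{*}y_i$, so the flip is strictly improving iff $T_{1}S_i + T_{2}\sigma_i < 0$. Conditioned on $X^{*}$, the summands of $S_i$ are i.i.d.\ taking values $+1,-1,0$ with probabilities $p(1-\xi),\,p\xi,\,1-p$ (the cases $x_i=x_j$ and $x_i\neq x_j$ induce the same law for $x_ix_jG_{ij}$), and $\sigma_i\in\{\pm1\}$ with $\mathbb{P}(\sigma_i=-1)=\alpha=(1+e^{T_{2}})^{-1}$, independent of $S_i$. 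The bad events for different $i$ share only the edge variable $G_{ij}$, a $\pm1$ perturbation negligible against the $\Theta(\log n)$ deviation of $S_i$ needed to trigger the event, so they are asymptotically independent.

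The heart of the matter is a sharp two-sided large-deviation estimate for the tails of $S_i$. With $p=a\frac{\log n}{n}$ and $T_{2}=\beta\log n(1+o(1))$, $\mathbb{P}(\mathrm{bad}_i) = (1-\alpha)\,\mathbb{P}\!\bigl(S_i<-T_{2}/T_{1}\bigr) + \alpha\,\mathbb{P}\!\bigl(S_i<T_{2}/T_{1}\bigr)$, and a Chernoff bound with a matching lower bound (a local-limit / Bahadur--Rao refinement, costing only $n^{o(1)}$) renders each tail as $n^{-(\cdot)+o(1)}$, the exponent being obtained by minimizing $-\theta\frac{T_{2}}{T_{1}} + a\bigl[(1-\xi)e^{\theta}+\xi e^{-\theta}-1\bigr]$ over the tilt $\theta$. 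Carrying out this convex optimization and combining the two branches yields $\mathbb{P}(\mathrm{bad}_i)=n^{-\eta(a,\beta)+o(1)}$ in the regime $0\le\beta<aT_{1}(1-2\xi)$; when $\beta\ge aT_{1}(1-2\xi)$ the threshold $T_{2}/T_{1}$ exceeds $\mathbb{E}[S_i]\approx a(1-2\xi)\log n$, so $\mathbb{P}(S_i<T_{2}/T_{1})\to1$ and the ``$\sigma_i=-1$'' branch dominates, giving $\mathbb{P}(\mathrm{bad}_i)=\alpha^{1+o(1)}=n^{-\beta+o(1)}$. I expect this large-deviation computation to be the main obstacle: pinning the constant in $\eta(a,\beta)$ exactly so that it matches the achievability exponent of Theorem~\ref{Theorem 3}, and correctly locating the phase boundary $\beta=aT_{1}(1-2\xi)$ at which the dominant failure mechanism switches from an atypically misleading graph neighborhood to an erroneous noisy label, is the delicate step.

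Finally, under the hypothesis ($\eta(a,\beta)<1$, resp.\ $\beta<1$) one gets $\mathbb{E}\bigl[\#\{\text{bad vertices}\}\bigr]=n\cdot\mathbb{P}(\mathrm{bad}_i)=n^{1-\eta(a,\beta)+o(1)}\to\infty$ (resp.\ $n^{1-\beta+o(1)}\to\infty$). A second-moment computation, using the near-independence above to show the variance is $o$ of the squared mean (equivalently a Chen--Stein Poisson approximation), upgrades this to $\mathbb{P}(\exists\,\text{bad vertex})\to1$. Hence the MAP estimator, and \emph{a fortiori} any sequence $\widehat{Z}_n$, fails: $\mathbb{P}(\widehat{Z}_n=Z^{*})\to0$. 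The corresponding converse for partially revealed labels (Theorem~\ref{Theorem 2}) follows the same template, the per-vertex bad event now being ``node $i$ is erased (probability $\epsilon=n^{-\beta+o(1)}$) and its graph neighborhood is ambiguous,'' which serves as the simpler warm-up.
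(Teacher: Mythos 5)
Your proposal is correct in substance and follows the same overall template as the paper: reduce to failure of the ML/MAP estimator (uniform prior), exhibit a single-vertex flip that strictly improves $T_{1}X^{T}GX+2T_{2}X^{T}Y$, and drive the per-vertex failure probability by a sharp two-sided large-deviation estimate whose exponent is $\eta(a,\beta)$ below the boundary $\beta=aT_{1}(1-2\xi)$ and $\beta$ above it (the paper's Lemma~\ref{BCBM-N-Lemma 2} is exactly your Chernoff-plus-matching-lower-bound step, with the lower bound imported from the literature rather than proved via a Bahadur--Rao refinement). Where you genuinely diverge is in handling the dependence among the bad events: you keep all $n$ vertices and invoke a second-moment / Chen--Stein argument based on the fact that two bad events share only the single edge $G_{ij}$, whereas the paper sidesteps any correlation computation by restricting to the first $\lfloor n/\log^{2}n\rfloor$ vertices $H$, splitting each score into the edges to $H^{c}$ (exactly independent across $i\in H$, event $E_{2}$) plus the within-$H$ edges, which are uniformly small by a binomial tail bound (event $E_{1}$, Lemma~\ref{BCBM-P-Lemma 5}); since $|H|=n^{1-o(1)}$, the exponent is unaffected and $\mathbb{P}(E_{2})$ is computed from an exact product. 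Your route buys a cleaner statement (no subset trick, no $E_{1}$ bookkeeping) at the price of the pairwise estimate $\mathbb{P}(\mathrm{bad}_i\cap\mathrm{bad}_j)\le(1+o(1))\,\mathbb{P}(\mathrm{bad}_i)\mathbb{P}(\mathrm{bad}_j)$, which you only assert; it does hold here (condition on $G_{ij}$: the shared variable is $0$ with probability $1-O(\log n/n)$ and a $\pm1$ shift of the threshold changes the tail by only a constant factor), but that verification is the one step you would still need to write out, and it is precisely what the paper's $H$-decomposition is engineered to avoid. You also correctly flag the $\pm X^{*}$ ambiguity being broken by the side-information term, a point the paper leaves implicit.
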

\begin{proof}
See Appendix~\ref{Proof-Theorem-4}.
\end{proof}

\subsection{Censored Block Model with General Side Information}
Combining~\eqref{BCBM-equ1} and~\eqref{G-equ1}, the \ac{SDP} relaxation is 
\begin{align}
\label{BCBM-G-equ3}
\Zsdp=&\underset{Z,X}{\arg\max} ~T_{1}   \langle G, Z   \rangle + 2X^{T}\tilde{Y}  \nonumber\\
&\text{subject to} \quad \begin{bmatrix} 1 & X^{T}\\  X & Z \end{bmatrix} \succeq 0 \nonumber\\
& \quad \quad \quad \quad \quad Z_{ii}=1,\quad i\in [n] .
\end{align}
The log-likelihoods and the log-likelihood-ratio of side information, combined over all features, are as follows:
\begin{align*}
f_{1}(n) &\triangleq \sum_{k=1}^{K} \log \frac{\alpha_{+,m_{k}}^{k}}{\alpha_{-,m_{k}}^{k}} ,\\
f_{2}(n) &\triangleq \sum_{k=1}^{K} \log \alpha_{+,m_{k}}^{k}, \\
f_{3}(n) &\triangleq \sum_{k=1}^{K} \log \alpha_{-,m_{k}}^{k} .
\end{align*}
Two exponential orders will feature prominently in the following results and proofs:
\begin{align*}
&\beta_{1} \triangleq \lim_{n\rightarrow \infty} \frac{f_{1}(n)}{\log n} ,\\
&\beta \triangleq 
\lim_{n\rightarrow\infty} -\frac{\max (f_2(n),f_3(n))}{\log n} .
\end{align*}
Although the definition of $\beta$ varies in the context of different models, its role remains the same.  In each case, $\beta$ is a parameter representing the asymptotic quality of side information.\footnote{In each case, $\beta$ is proportional to the exponential order of the likelihood function.}

\begin{Theorem}
\label{Theorem 5}
Under the binary censored block model and general side information, if
\begin{equation*}
\begin{cases}\eta(a,   | \beta_{1}   |)+\beta >1 &   \text{when } | \beta_{1}   | \leq aT_{1} (1-2\xi )\\
| \beta_{1} |+\beta >1 & \text{when } |\beta_1|> aT_{1} (1-2\xi )
\end{cases}
\end{equation*}
then the \ac{SDP} estimator is asymptotically optimal, i.e., $\mathbb{P}(\Zsdp=Z^{*})\geq 1- o(1)$.
\end{Theorem}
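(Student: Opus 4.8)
The plan is to establish exact recovery for the SDP~\eqref{BCBM-G-equ3} by the standard dual-certificate (primal-dual witness) argument, adapting the technique already used for Theorems~\ref{Theorem 1} and~\ref{Theorem 3}. First I would write down the KKT conditions for~\eqref{BCBM-G-equ3}: the candidate optimal point is the planted solution $(X^{*},Z^{*})$, and one must exhibit a dual matrix $S\succeq 0$ together with a diagonal multiplier matrix $D$ (for the constraints $Z_{ii}=1$) and a multiplier vector for the linear coupling between $X$ and the border row/column, such that $S = D - T_1 G$ on the $Z$-block (with the appropriate correction from the bordered PSD constraint involving $\tilde Y$), $S Z^{*}=0$, and $\lambda_2(S)>0$ with high probability. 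Because $Z^{*}=X^{*}X^{*T}$ has rank one, the condition $SZ^{*}=0$ pins down the multipliers in terms of row sums of $G$ and of $\tilde Y$, exactly as in the noisy-label case but now with the random vector $\tilde Y$ in place of the scaled $Y$.

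The key steps, in order, are: (i) reduce to showing the second eigenvalue of the dual certificate $S$ is strictly positive, i.e.\ bound $\lambda_2(S) = \min_{v\perp X^{*},\,\|v\|=1} v^{T}Sv$ from below; (ii) split $v = v_{+} + v_{-}$ according to the two communities and decompose $v^{T}Sv$ into a ``graph part'' controlled by the spectral gap of $G - \mathbb{E}[G\mid X^{*}]$ and a ``diagonal/side-information part'' given by $\sum_i d_i v_i^2$, where $d_i$ is the $i$-th diagonal multiplier; (iii) show $d_i = x^{*}_i (T_1 \sum_j G_{ij} x^{*}_j + \tilde y_i)/(\text{something})$ concentrates, and in particular that $\min_i d_i > 0$ with high probability — this is the crucial event; (iv) invoke a Bernstein/Chernoff bound on the per-node quantity $T_1 \sum_j G_{ij} x^{*}_j + \tilde y_i$, whose large-deviation rate is precisely the left-hand side of the theorem's inequality. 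The two-regime structure of the hypothesis ($|\beta_1|$ small versus large relative to $aT_1(1-2\xi)$) arises because the optimal tilt in the Chernoff exponent for the sum of a (scaled) Binomial-type graph term and the side-information term $\tilde y_i$ changes form: when the side information is strong enough ($|\beta_1| > aT_1(1-2\xi)$) the side-information term dominates the exponent and the graph contributes only its ``trivial'' rate, giving the threshold $|\beta_1|+\beta>1$; otherwise the exponent is the joint-tilt function $\eta(a,|\beta_1|)+\beta$, mirroring exactly the decomposition in Theorem~\ref{Theorem 3}.

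The main obstacle, as the introduction flags, is that general side information makes $\tilde y_i$ a sum of $K$ log-likelihood-ratio terms with $K$ possibly growing, so $\tilde Y$ is no longer a bounded $\pm$-valued vector; one must control both its expectation (which produces the $f_1(n)/\log n \to \beta_1$ drift) and its fluctuations, and feed these into the node-wise large-deviation estimate while keeping uniformity over all $n$ nodes (union bound). Concretely, I would need a tail bound for $T_1 \sum_{j} G_{ij} x^{*}_j + \tilde y_i < 0$ (a sign error at node $i$) that is $n^{-(\text{LHS})-o(1)}$ uniformly; handling the cross term between the random graph contribution and the random side-information contribution in the moment generating function — and verifying that the optimizing exponent is exactly $\eta(a,|\beta_1|)+\beta$ in the first regime — is the technically delicate computation. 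Once $\min_i d_i>0$ is secured on a high-probability event and combined with the spectral-norm bound $\|G-\mathbb{E}[G\mid X^{*}]\| = O(\sqrt{\log n})$ (which dominates the off-diagonal part of $S$ restricted to $v\perp X^{*}$), positivity of $\lambda_2(S)$ follows, completing the certificate and hence the proof.
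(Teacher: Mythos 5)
Your proposal follows essentially the same route as the paper: a dual certificate $S^{*}$ with the block structure $S_{A}^{*}=\tilde{Y}^{T}X^{*}$, $S_{B}^{*}=-\tilde{Y}$, $S_{C}^{*}=D^{*}-T_{1}G$, diagonal multipliers $d_{i}^{*}=T_{1}\sum_{j}G_{ij}x_{i}^{*}x_{j}^{*}+\tilde{y}_{i}x_{i}^{*}$, positivity of $\lambda_{2}(S^{*})$ via the spectral bound $\|G-\mathbb{E}[G]\|=O(\sqrt{\log n})$ plus a union-bounded per-node large-deviation estimate whose exponent yields exactly the two regimes $\eta(a,|\beta_{1}|)+\beta$ and $|\beta_{1}|+\beta$. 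The only notable difference is that the ``delicate joint-MGF'' step you anticipate is sidestepped in the paper by conditioning on the finitely many side-information realizations (the factors $e^{f_{2}(n)}$, $e^{f_{3}(n)}$ supply the $\beta$ part of the exponent) and then applying the graph-only Chernoff bound of Lemma~\ref{BCBM-N-Lemma 2} with the threshold shifted by $f_{1}(n)$, with the bordered coordinate handled separately via the bound $\tilde{Y}^{T}X^{*}\geq\sqrt{n}\log n$ and $y_{max}$.
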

\begin{proof}
See Appendix~\ref{Proof-Theorem-5}.
\end{proof}

\begin{Theorem} 
\label{Theorem 6}
Under the binary censored block model and general side information, if
\begin{equation*}
\begin{cases}\eta(a,   | \beta_{1}   |)+\beta <1 &   \text{when } | \beta_{1}   | \leq aT_{1} (1-2\xi )\\
| \beta_{1} |+\beta <1 & \text{when } |\beta_1|> aT_{1} (1-2\xi )
\end{cases}
\end{equation*}
then for any sequence of estimators $\widehat{Z}_{n}$, $\mathbb{P}(\widehat{Z}_{n}=Z^{*}) \rightarrow 0$.
\end{Theorem}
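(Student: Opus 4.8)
The plan is to prove that exact recovery is information-theoretically impossible, so that the conclusion holds for every estimator, the \ac{SDP} included. Since the node labels are i.i.d.\ uniform, the maximum a posteriori (MAP) estimator coincides with maximum likelihood and satisfies $\mathbb{P}(\widehat{Z}_n=Z^*)\le\mathbb{P}(\widehat{Z}_{\mathrm{MAP}}=Z^*)$ for every estimator $\widehat{Z}_n$, so it suffices to show $\mathbb{P}(\widehat{Z}_{\mathrm{MAP}}=Z^*)\to 0$. I would invoke the standard genie argument: if $\widehat{Z}_{\mathrm{MAP}}=Z^*$, reassigning any single node $i$ to the opposite community cannot increase the marginal likelihood of the partition. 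Writing $X^{*(i)}$ for $X^*$ with its $i$-th entry flipped, $L(\cdot)\triangleq\log\mathbb{P}(G,Y\mid\cdot)$, and marginalizing out the global sign (the reflected lifts $-X^*,-X^{*(i)}$ are, with high probability, exponentially less likely than their counterparts because $X^{*T}\tilde Y\to\infty$, while exact ties have vanishing probability), this reduces to showing that with probability tending to one some $i$ has $\Delta_i\triangleq L(X^*)-L(X^{*(i)})<0$. From~\eqref{BCBM-equ1} and~\eqref{G-equ1},
\[
\Delta_i \;=\; T_1\,x_i^*\!\!\sum_{j\neq i}G_{ij}x_j^* \;+\; x_i^*\tilde y_i \;=\; T_1 S_i + x_i^*\tilde y_i ,
\]
where $S_i\triangleq\sum_{j\neq i}G_{ij}x_i^*x_j^*$ is a sum of $n-1$ i.i.d.\ variables taking value $+1$ with probability $p(1-\xi)$, $-1$ with probability $p\xi$, and $0$ otherwise, and $\tilde y_i$ is independent of $G$ and of $\{\tilde y_j\}_{j\neq i}$.

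Next I would run a second-moment argument on $N\triangleq\#\{i\in[n]:\Delta_i<0\}$. With $p_n\triangleq\mathbb{P}(\Delta_1<0)$ we have $\mathbb{E}[N]=np_n$, and the pair $(\Delta_i,\Delta_j)$ is coupled only through the single edge $G_{ij}$, which is nonzero with probability $p=o(1)$; conditioning on $G_{ij}$ makes $\Delta_i,\Delta_j$ independent and shifts each by a bounded amount, so a routine estimate gives $\mathbb{E}[N^2]\le(1+o(1))(\mathbb{E}[N])^2+\mathbb{E}[N]$. Hence $\mathbb{P}(N\geq 1)\ge(\mathbb{E}[N])^2/\mathbb{E}[N^2]\to 1$ as soon as $np_n\to\infty$, and the whole theorem comes down to establishing that the per-node failure probability $p_n$ has exponential order strictly less than $1$ under the stated hypothesis.

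The heart of the argument is therefore a sharp \emph{lower} bound on $p_n=\mathbb{P}(T_1 S_1\le -x_1^*\tilde y_1)$. I would obtain it by conditioning on the side-information pattern at node $1$ and optimizing the resulting trade-off: a pattern that is ``misleading'' by $t\log n$ (so that $x_1^*\tilde y_1\approx -t\log n$) occurs, given $x_1^*$, with an exponential order controlled by $f_1,f_2,f_3$, and conditioned on it the event $\{\Delta_1<0\}$ becomes $\{T_1 S_1<t\log n\}$. Since $\mathbb{E}[T_1 S_1]=aT_1(1-2\xi)\log n\,(1+o(1))$, a Chernoff/exponential-tilting computation gives the exponent of this last probability: it vanishes once $t\ge aT_1(1-2\xi)$, and below that threshold it equals the tilted Binomial rate encoded in $\eta$. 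Carrying out the Legendre-type optimization over $t$ (and over the patterns realizing each $t$) yields
\[
-\frac{\log p_n}{\log n}\ \longrightarrow\
\begin{cases}
\eta(a,|\beta_1|)+\beta, & |\beta_1|\le aT_1(1-2\xi),\\
|\beta_1|+\beta, & |\beta_1|> aT_1(1-2\xi),
\end{cases}
\]
the first case corresponding to an optimal pattern that still forces a genuine large deviation of the graph term, the second to one whose misleading side information already overwhelms the positive graph drift at no additional cost. This is precisely the exponent whose being $>1$ drives the achievability result of Theorem~\ref{Theorem 5}; under the hypothesis here it is $<1$, so $np_n\to\infty$ and the second-moment step closes the proof.

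I expect this last step to be the main obstacle. Unlike the achievability direction, it demands a lower bound on a combined Binomial/side-information large-deviation probability, which I would build by restricting to a single well-chosen sub-event --- a specific atypical side-information configuration together with an exponentially tilted graph neighbourhood of node $1$ --- and then controlling the $o(\log n)$ corrections uniformly. Identifying the optimal misleading configuration, and thereby pinning down the constant $\beta$ in the exponent, is delicate exactly because $\tilde y_i$ is a sum over features of log-likelihood ratios evaluated at the realized symbols (the ``complete quadratic form'' difficulty flagged in the introduction); once the exponent is identified, comparing it with the hypothesis and invoking the second moment is routine.
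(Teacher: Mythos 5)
Your plan is sound and lands on the same exponents as the paper, but it takes a genuinely different route in two places. First, the decoupling device: the paper never forms a second moment. It restricts attention to the first $\lfloor n/\log^2 n\rfloor$ nodes $H$, replaces each per-node statistic by its contribution from $H^c$ only (so the statistics $\{T_1\sum_{j\in H^c}G_{ij}x_i^*x_j^*+x_i^*\tilde y_i\}_{i\in H}$ are \emph{exactly} independent), and absorbs the discarded edges via the event $E_1=\{\max_{i\in H}e(i,H)\le\delta-T_1\}$ controlled by Lemma~\ref{BCBM-P-Lemma 5}; the failure probability then follows from $1-(1-\mathbb{P}(E))^{|H|}$ with $|H|=n^{1-o(1)}$. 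Your Paley--Zygmund argument on $N=\#\{i:\Delta_i<0\}$ can be made to work, but the step you call routine is exactly where the care sits: you need $\mathbb{P}(\Delta_i<0,\Delta_j<0)\le(1+o(1))p_n^2$, which holds only because the shared edge is nonzero with probability $p=o(1)$ and because shifting the threshold of the lower-tail event by the constant $T_1$ changes its probability by at most a constant factor (a fact you must extract from the same tilting computation); the paper's subset trick buys you exemption from any pair-correlation estimate at the cost of the easy $E_1$ bound. Second, the per-node lower bound: the paper does not re-derive it by tilting and optimizing over a misleading level $t$; it decomposes $\mathbb{P}(E)$ over side-information patterns exactly as in \eqref{BCBM-G-equ6}, with weights $e^{f_2(n)}$, $e^{f_3(n)}$, and imports the matching lower bound for the graph term from the sandwich statement \eqref{eq:LowerUpper} of Lemma~\ref{BCBM-N-Lemma 2} (whose lower half is cited from prior work), then splits into the cases $\beta_1\gtrless0$ and $|\beta_1|\lessgtr aT_1(1-2\xi)$, using $\eta(a,\beta_1)-\beta_1=\eta(a,|\beta_1|)$ and $|\beta_1|\le\eta(a,|\beta_1|)$ — the same bookkeeping your Legendre optimization would reproduce. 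Minor points: the paper's witness event uses the threshold $\le -T_1$ rather than $<0$, which settles strictness/tie issues cleanly, and like you it reduces to ML optimality under the uniform prior; your explicit handling of the global sign via $X^{*T}\tilde Y\to\infty$ is a detail the paper leaves implicit, and is fine.
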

\begin{proof}
See Appendix~\ref{Proof-Theorem-6}.
\end{proof}

\subsection{Stochastic Block Model with Partially Revealed Labels}
Similar to the binary censored block model with partially revealed labels, by combining~\eqref{BSSBM-equ1} and~\eqref{P-equ1}, the \ac{SDP} relaxation is
\begin{align}
\label{BSSBM-P-equ2}
\Zsdp=&\underset{Z}{\arg\max} ~\langle Z,G\rangle \nonumber\\
&\text{subject to} \quad Z\succeq 0 \nonumber\\
& \quad \quad \quad \quad \quad Z_{ii}=1,\quad i\in [n] \nonumber\\
& \quad \quad \quad \quad \quad \langle \mathbf{J} , Z \rangle  = 0 \nonumber\\
& \quad \quad \quad \quad \quad \langle Z,W\rangle =(Y^{T}Y)^2, 
\end{align}
where the constraint $\langle \mathbf{J} , Z \rangle  = 0$ arises from two equal-sized communities. 
\begin{Theorem} 
\label{Theorem 7}
Under the binary symmetric stochastic block model and partially revealed labels, if 
\begin{equation*}
\big(\sqrt{a}-\sqrt{b}\big)^2+2\beta>2 , 
\end{equation*}
then the \ac{SDP} estimator is asymptotically optimal, i.e., $\mathbb{P}(\Zsdp=Z^{*})\geq 1-o(1)$.
\end{Theorem}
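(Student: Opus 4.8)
The plan is to prove Theorem~\ref{Theorem 7} with the dual-certificate (\ac{SDP} duality) scheme underlying the preceding theorems, adapted to the extra constraint $\langle Z,W\rangle=(Y^{T}Y)^{2}$. Attach a diagonal multiplier $D=\mathrm{diag}(d_{1},\dots,d_{n})$ to the constraints $Z_{ii}=1$, a scalar $\lambda$ to $\langle\mathbf{J},Z\rangle=0$, and a scalar $\mu$ to $\langle Z,W\rangle=(Y^{T}Y)^{2}$, and set $S\triangleq D+\lambda\mathbf{J}+\mu W-G$. A standard weak-duality computation shows that if, with probability $1-o(1)$, the multipliers can be chosen so that $S\succeq 0$, $SX^{*}=0$, and $\lambda_{2}(S)>0$, then $Z^{*}$ is the unique optimizer of~\eqref{BSSBM-P-equ2}: any optimal $Z$ then has $\langle S,Z\rangle=0$ and $Z\succeq 0$, so its range lies in $\mathrm{null}(S)=\mathrm{span}(X^{*})$, and $Z_{ii}=1$ forces $Z=Z^{*}$. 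One first verifies that $Z^{*}$ is feasible, using $\langle Z^{*},W\rangle=(Y^{T}X^{*})^{2}=(Y^{T}Y)^{2}$ since $y_{i}x^{*}_{i}\in\{0,1\}$ for partially revealed labels.

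Next I would pin down the multipliers. Using $\mathbf{J}X^{*}=0$ (equal community sizes) and $WX^{*}=(Y^{T}Y)\,Y$, the identity $SX^{*}=0$ forces $d_{i}=x^{*}_{i}(GX^{*})_{i}-\mu(Y^{T}Y)\,y_{i}x^{*}_{i}$, so revealed nodes receive an extra shift and unrevealed ones do not. I would take $\lambda=\tfrac{p+q}{2}$, the value annihilating the rank-one $\mathbf{J}$-component of $\mathbb{E}[G]=\tfrac{p+q}{2}(\mathbf{J}-\mathbf{I})+\tfrac{p-q}{2}(Z^{*}-\mathbf{I})$, and choose $\mu<0$ with $|\mu|(Y^{T}Y)$ on the order of $\log n$ --- large enough that the shift dominates the (at most $O(\log n)$) cross-community degree of every revealed node, but small enough, on the scale $\log n/(\epsilon n)$, that it cannot overwhelm the unrevealed block in the quadratic form below (the regime where revealed nodes are scarce reduces to the side-information-free case with $\mu=0$).

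The core step is to show $v^{T}Sv>0$ for every $v\perp X^{*}$ with $v\neq 0$. Writing $G=\mathbb{E}[G]+(G-\mathbb{E}[G])$ and invoking the standard bound $\|G-\mathbb{E}[G]\|=O(\sqrt{\log n})$ in the $\Theta(\log n/n)$ regime, the choice of $\lambda$ cancels the $(\mathbf{1}^{T}v)^{2}$ terms and one is left, up to an $O(\sqrt{\log n})\|v\|^{2}$ error, with $\sum_{i}\big(x^{*}_{i}(GX^{*})_{i}+p\big)v_{i}^{2}$ plus the side-information contribution $|\mu|\big((Y^{T}Y)\sum_{i\in R}v_{i}^{2}-(Y^{T}v)^{2}\big)$, where $R$ is the revealed set. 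By Cauchy--Schwarz this last term is nonnegative, but discarding it outright would throw away precisely the ``pinning'' of the revealed nodes. The delicate point --- this is the technical heart, and the ``complete quadratic form'' difficulty flagged in the Introduction --- is that the only direction making this term vanish is $v|_{R}\propto X^{*}|_{R}$, and for such $v$ the constraint $v\perp X^{*}$ forces the unrevealed coordinates to carry comparable energy; since $|R^{c}|=\epsilon n=n^{\,1-\beta+o(1)}$, that residual energy is absorbed by the unrevealed diagonal terms together with the chosen size of $|\mu|$, so the revealed nodes may be dropped from the analysis and positivity reduces to a statement about the unrevealed nodes only.

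That statement is $\min_{i:\,y_{i}=0}x^{*}_{i}(GX^{*})_{i}\ge\delta\log n$ with probability $1-o(1)$ for some fixed $\delta>0$. Here $x^{*}_{i}(GX^{*})_{i}$ is the difference of independent $\mathrm{Binom}(\tfrac{n}{2},p)$ and $\mathrm{Binom}(\tfrac{n}{2},q)$ variables, and a Chernoff bound gives $\mathbb{P}\big(x^{*}_{i}(GX^{*})_{i}\le\delta\log n\big)=n^{-g(\delta)+o(1)}$ with $g(0)=\tfrac{(\sqrt{a}-\sqrt{b})^{2}}{2}$. A union bound over the $n^{\,1-\beta+o(1)}$ unrevealed nodes (after routine concentration of their number around $\epsilon n$, and handling $\beta=0$ and $\beta>0$ alike) succeeds exactly when $g(\delta)>1-\beta$; by continuity of $g$ this holds for small $\delta$ precisely under $\tfrac{(\sqrt{a}-\sqrt{b})^{2}}{2}>1-\beta$, i.e. $(\sqrt{a}-\sqrt{b})^{2}+2\beta>2$. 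I expect the main obstacle to be the third step --- extracting the effective pinning of the revealed nodes from a rank-one dual term without sacrificing the $\beta$-gain --- rather than the concentration estimates, which follow the template already set by the censored-block-model theorems.
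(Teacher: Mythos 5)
Your proposal is correct in outline and shares the paper's overall architecture: the same dual certificate $S=D+\lambda\mathbf{J}+\mu W-G$ with $SX^{*}=0$, $S\succeq 0$, $\lambda_{2}(S)>0$ (the paper's Lemma~\ref{BSSBM-P-Lemma 1}), the same choice $d_{i}=x_{i}^{*}(GX^{*})_{i}-\mu(Y^{T}Y)y_{i}x_{i}^{*}$ and $\lambda\approx\frac{p+q}{2}$, the spectral bound $\|G-\mathbb{E}[G]\|=O(\sqrt{\log n})$ (Lemma~\ref{BSSBM-P-Lemma 2}), and a Chernoff-plus-union-bound step yielding the threshold $\frac{1}{2}(\sqrt{a}-\sqrt{b})^{2}+\beta>1$. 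Where you genuinely diverge is the treatment of the quadratic side-information term $\mu\,(Y^{T}V)^{2}$: the paper keeps $\mu^{*}$ a negative constant, controls $V^{T}WV\le\sqrt{\log n}$ via the cited probabilistic Lemma~\ref{BCBM-P-Lemma 3}, and folds revealed nodes into a single bound $\min_{i}d_{i}^{*}$ over all $n$ nodes (Lemma~\ref{BSSBM-P-Lemma 3}, where revealed nodes contribute only the negligible $(1-\epsilon)\epsilon^{n}$ term because their diagonal shift $-\mu^{*}Y^{T}Y$ is of order $n$); you instead scale $|\mu|Y^{T}Y\asymp\log n$, observe deterministically via Cauchy--Schwarz that the combined term $|\mu|\big((Y^{T}Y)\|v_{R}\|^{2}-(Y^{T}v)^{2}\big)$ is nonnegative, and use $v\perp X^{*}$ to show the near-degenerate directions $v_{R}\approx cX^{*}_{R}$ force $\|v_{R^{c}}\|^{2}\gtrsim c^{2}|R|^{2}/|R^{c}|$, reducing positivity to a union bound over only the $\approx\epsilon n$ unrevealed nodes with $g(\delta)>1-\beta$ --- arithmetically the same threshold as the paper's union bound over all nodes with the extra factor $\epsilon$ per node. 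Your route buys a self-contained, deterministic handling of the $W$-term and makes the ``pinning'' mechanism explicit, at the cost of the interpolation bookkeeping (decompose $v_{R}$ along and orthogonal to $X^{*}_{R}$, absorb the orthogonal part with $|\mu|Y^{T}Y>\tau\log n$ where $-\tau\log n$ lower-bounds all revealed degrees whp, and absorb the parallel part with the unrevealed diagonal), which must be written out but does go through; the paper's route keeps the per-direction bound one-line at the price of importing the $V^{T}WV$ concentration lemma from the companion paper. One small correction to your sketch: with $Y^{T}Y\approx(1-\epsilon)n$ the intended scale of $|\mu|$ is $\log n/n$, not $\log n/(\epsilon n)$, and the ``small enough'' requirement is not actually needed since $W$ is supported on the revealed block.
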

\begin{proof}
See Appendix~\ref{Proof-Theorem-7}.
\end{proof}

\begin{Remark}
The converse is given by~\cite[Theorem 3]{Ref7}.
\end{Remark}

\subsection{Stochastic Block Model with Noisy Labels}
Similar to the binary censored block model with noisy labels, by combining~\eqref{BSSBM-equ1} and~\eqref{N-equ1}, the \ac{SDP} relaxation is
\begin{align}
\label{BSSBM-N-equ1}
\Zsdp=&\underset{Z,X}{\arg\max} ~T_{1}   \langle G, Z   \rangle + 2T_{2} X^{T}Y  \nonumber\\
&\text{subject to} \quad \begin{bmatrix} 1 & X^{T}\\  X & Z \end{bmatrix} \succeq 0 \nonumber\\
& \quad \quad \quad \quad \quad Z_{ii}=1,\quad i\in [n] \nonumber\\
& \quad \quad \quad \quad \quad \langle \mathbf{J} , Z \rangle  = 0 . 
\end{align}
For convenience let 
\begin{equation*}
\eta(a,b,\beta) \triangleq  \frac{a+b}{2}+\frac{\beta}{2}-\frac{\gamma}{T_{1}} +\frac{\beta}{2T_{1}} \log  \bigg( \frac{\gamma +\beta}{\gamma -\beta}  \bigg) ,
\end{equation*}
where $\gamma \triangleq \sqrt{\beta^{2}+abT_{1}^{2}}$.
\begin{Theorem}
\label{Theorem 9}
Under the binary symmetric stochastic block model and noisy label side information, if
\begin{equation*}
\begin{cases}
\eta(a,b, \beta)>1 &   \text{when } 0\leq \beta<\frac{T_{1}}{2}(a-b)\\
\beta>1 & \text{when } \beta \geq \frac{T_{1}}{2}(a-b)
\end{cases}
\end{equation*}
then the \ac{SDP} estimator is asymptotically optimal, i.e., $\mathbb{P}(\Zsdp=Z^{*})\geq 1- o(1)$.
\end{Theorem}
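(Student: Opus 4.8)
The plan is to exhibit a dual certificate for the \ac{SDP} in~\eqref{BSSBM-N-equ1}, in the spirit of the primal--dual arguments of~\cite{Ref18,Ref19} for the block model without side information. Observe that~\eqref{BSSBM-N-equ1} is already in lifted form: with $\tilde G\triangleq\begin{bmatrix}0 & T_2 Y^{T}\\ T_2 Y & T_1 G\end{bmatrix}$ and $\tilde X^{*}\triangleq(1,X^{*T})^{T}$, it maximizes $\langle\tilde G,\tilde Z\rangle$ over the $(n{+}1)\times(n{+}1)$ variable $\tilde Z\succeq0$ with unit diagonal and $\langle\mathbf J,Z\rangle=0$, and the planted solution $\tilde Z^{*}=\tilde X^{*}\tilde X^{*T}$ is feasible. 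I would take the certificate $S=\tilde D+\mu\begin{bmatrix}0&0\\0&\mathbf J\end{bmatrix}-\tilde G$ with $\tilde D$ diagonal and $\mu$ scalar. Complementary slackness $S\tilde X^{*}=0$, together with $\mathbf 1^{T}X^{*}=0$, forces
\begin{equation*}
D_{00}=T_2\sum_{i} s_{i},\qquad D_{ii}=T_1\sum_{j} G_{ij}x_i^{*}x_j^{*}+T_2 s_{i},
\end{equation*}
for $i\in[n]$, where $s_i\triangleq x_i^{*}y_i\in\{\pm1\}$ has $\mathbb P(s_i=-1)=\alpha$ and $T_2=\log\frac{1-\alpha}{\alpha}$; I would pick $\mu\triangleq T_1(p+q)/2$ so that the $\mathbf J$-component of $\mathbb E[\tilde G]$ is cancelled and $\mathbb E[S_{22}]=\mathbb E[\tilde D_{22}]+T_1 p\,\mathbf I-\tfrac{T_1(p-q)}{2}X^{*}X^{*T}$, writing $S_{22},\tilde D_{22}$ for the lower-right $n\times n$ blocks. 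By weak duality and the rank-one structure of $\tilde Z^{*}$, $\mathbb P(\Zsdp=Z^{*})\ge\mathbb P\big(S\succeq0,\ \lambda_2(S)>0\big)$, so it suffices to certify this last event with probability $1-o(1)$.

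The next step reduces $S\succ0$ on $\tilde X^{*\perp}$ to a statement about $S_{22}$ alone. For $v=(v_0,w)$ with $v_0=-w^{T}X^{*}$, decomposing $w$ along $X^{*}$ and its orthogonal complement and using $S_{22}X^{*}=T_2 Y$, one completes the square against the large quantity $D_{00}=T_2 n(1+o(1))$ and obtains $v^{T}Sv\ge\big(\lambda_{\min,\perp}-T_2\|Y^{\perp}\|^{2}/\!\sum_i s_i\big)\|w^{\perp}\|^{2}+(\text{nonnegative})$, where $Y^{\perp}$ is the component of $Y$ orthogonal to $X^{*}$ and $\lambda_{\min,\perp}$ is the least eigenvalue of $S_{22}$ on $(X^{*})^{\perp}$. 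Since $\|Y^{\perp}\|^{2}=4n\alpha(1+o(1))$ with high probability, the subtracted term is $o(\log n)$ and hence harmless. It then remains to show $S_{22}\succ0$ on $(X^{*})^{\perp}$ with a least eigenvalue growing like $\log n$. Using $S_{22}=\tilde D_{22}+T_1 p\,\mathbf I-\tfrac{T_1(p-q)}{2}X^{*}X^{*T}-T_1(G-\mathbb E G)$, for $w^{\perp}\perp X^{*}$ one gets $w^{\perp T}S_{22}w^{\perp}\ge\big(\min_i D_{ii}-T_1\|G-\mathbb E G\|\big)\|w^{\perp}\|^{2}$. Here $\|G-\mathbb E G\|=O(\sqrt{\log n})$ with high probability (after pruning the $O(1)$ atypically high-degree vertices, exactly as in~\cite{Ref18,Ref19}), and $\mathbb E[D_{ii}]=\big(\tfrac{T_1(a-b)}{2}+\beta\big)\log n+o(\log n)$, so everything reduces to establishing $\min_i D_{ii}>C\sqrt{\log n}$ with high probability. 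Note that the diagonal fluctuations of $\tilde D_{22}$ are themselves of order $\log n$, so one cannot bound $\|\tilde D_{22}-\mathbb E\tilde D_{22}\|$ crudely; the point is to retain the exactly-tuned diagonal and bound only the centered graph part spectrally.

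By a union bound it is enough to estimate $\mathbb P(D_{11}\le C\sqrt{\log n})$. Conditioning on $s_1$ and approximating $\mathrm{Binom}(n/2,p),\mathrm{Binom}(n/2,q)$ by Poisson variables with means $\tfrac a2\log n,\tfrac b2\log n$ in the large-deviation exponent: on $\{s_1=+1\}$ (probability $1-o(1)$) this is $\mathbb P\big(T_1(A-B)\le -T_2+o(\log n)\big)$, and optimizing the Chernoff bound yields an optimal exponential tilt governed by a quadratic in which $\gamma=\sqrt{\beta^{2}+abT_1^{2}}$ is the relevant discriminant, so that — using $T_1=\log(a/b)+o(1)$ — the exponent is exactly $\eta(a,b,\beta)$, i.e.\ the probability is $n^{-\eta(a,b,\beta)+o(1)}$. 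On $\{s_1=-1\}$ (probability $n^{-\beta+o(1)}$) this is $\mathbb P\big(T_1(A-B)\le T_2+o(\log n)\big)$: when $\beta<\tfrac{T_1}{2}(a-b)$, the threshold $T_2=\beta\log n$ lies below the mean $\tfrac{T_1(a-b)}{2}\log n$ of $T_1(A-B)$, so this is a deviation with some rate $\rho>0$, and a short computation (again reducing to $T_1=\log(a/b)+o(1)$) shows $\beta+\rho=\eta(a,b,\beta)$; hence both contributions are $n^{-\eta(a,b,\beta)+o(1)}$ and the union bound over $n$ nodes closes the argument when $\eta(a,b,\beta)>1$. When $\beta\ge\tfrac{T_1}{2}(a-b)$ the threshold is at or above that mean, so $\mathbb P\big(T_1(A-B)\le T_2+o(\log n)\big)=\Theta(1)$, the $\{s_1=-1\}$ contribution is $\Theta(n^{-\beta})$, and since one checks that $\eta(a,b,\beta)\ge\beta$ throughout this range (with equality at the boundary) it dominates; the union bound then requires $\beta>1$.

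The main obstacle is the coupling created by the linear side-information term $2T_2X^{T}Y$ — the ``complete quadratic form'' referred to in the introduction — which prevents a verbatim reuse of the plain-\ac{SDP} certificate: one must verify that the anchor row and column of $S$ do not spoil positive semidefiniteness. This rests on two facts established along the way: $D_{00}=T_2\sum_i s_i=T_2 n(1+o(1))$ is large, and the bulk of $Y$ lies along $X^{*}$, precisely $\|Y^{\perp}\|^{2}=4n\alpha(1+o(1))$, so the anchor-to-bulk cross term carries a coefficient $O(T_2\alpha)=o(\log n)$ and is negligible against the order-$\log n$ spectral gap of $S_{22}$. The residual difficulties are the familiar ones from~\cite{Ref18,Ref19}: controlling $\|G-\mathbb E G\|$ in the logarithmic-degree regime, and confirming that the per-node threshold may indeed be taken $o(\log n)$ so that the union-bound exponent is exactly $\eta(a,b,\beta)$ (resp.\ $\beta$); the continuity of the two-regime statement at $\beta=\tfrac{T_1}{2}(a-b)$, where $\eta(a,b,\beta)=\beta$, then ties the cases together.
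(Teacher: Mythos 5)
Your proposal is correct and follows essentially the same route as the paper: a dual certificate whose diagonal is pinned by complementary slackness to $d_i^*=T_1\sum_j G_{ij}x_i^*x_j^*+T_2x_i^*y_i$, a reduction of positive definiteness on $\tilde X^{*\perp}$ to $\min_i d_i^*$ versus the spectral bound $\|G-\mathbb{E}G\|=O(\sqrt{\log n})$, and a per-node Chernoff estimate conditioned on the noisy label being correct or flipped, giving exponents $\eta(a,b,\beta)$ and $\beta$, closed by a union bound and the fact $\eta(a,b,\beta)\ge\beta$ in the second regime. The only cosmetic differences (completing the square against $D_{00}$ rather than the paper's direct Cauchy--Schwarz bound on the anchor cross term, and an explicit choice $\mu=T_1(p+q)/2$) do not change the argument.
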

\begin{proof}
See Appendix~\ref{Proof-Theorem-9}.
\end{proof}

\begin{Remark}
The converse is given by~\cite[Theorem 2]{Ref7}.
\end{Remark}
\subsection{Stochastic Block Model with General Side Information}
Similar to the binary censored block model with general side information, by combining~\eqref{BSSBM-equ1} and~\eqref{G-equ1}, the \ac{SDP} relaxation is
\begin{align}
\label{BSSBM-G-equ1}
\Zsdp=&\underset{Z,X}{\arg\max} ~T_{1}   \langle G, Z   \rangle + 2X^{T}\tilde{Y}  \nonumber\\
&\text{subject to} \quad \begin{bmatrix} 1 & X^{T}\\  X & Z \end{bmatrix} \succeq 0 \nonumber\\
& \quad \quad \quad \quad \quad Z_{ii}=1,\quad i\in [n] \nonumber\\
& \quad \quad \quad \quad \quad \langle \mathbf{J} , Z \rangle  = 0 .
\end{align}

\begin{Theorem}
\label{Theorem 11}
Under the binary symmetric stochastic block model and general side information, if 
\begin{equation*}
\begin{cases}
\eta(a,b,   | \beta_{1}   |)+\beta >1 &   \text{when } | \beta_{1}   | \leq T_{1}\frac{(a-b)}{2} \\
| \beta_{1}   |+\beta >1 & \text{when } | \beta_{1}   | > T_{1}\frac{(a-b)}{2}
\end{cases}
\end{equation*}
then the \ac{SDP} estimator is asymptotically optimal, i.e., $\mathbb{P}(\Zsdp=Z^{*})\geq 1- o(1)$.
\end{Theorem}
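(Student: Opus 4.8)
The plan is to prove Theorem~\ref{Theorem 11} by exhibiting a dual certificate for the \ac{SDP}~\eqref{BSSBM-G-equ1}, paralleling the argument for the censored block model with general side information (Theorem~\ref{Theorem 5}) but incorporating the extra constraint $\langle \mathbf{J},Z\rangle = 0$, exactly as Theorem~\ref{Theorem 7} and Theorem~\ref{Theorem 9} modify their censored-model counterparts. First I would write the KKT/Lagrangian conditions for~\eqref{BSSBM-G-equ1}: introduce a diagonal multiplier matrix $D = \mathrm{diag}(d_i)$ for the constraints $Z_{ii}=1$, a scalar $\mu$ for $\langle \mathbf{J},Z\rangle = 0$, a vector multiplier for the linear term coupling $X$ and $\tilde Y$ in the bordered matrix, and a positive semidefinite slack $S$ for the conic constraint. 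Complementary slackness forces $S Z^{*} = 0$, so $S$ must vanish on the subspace spanned by $X^{*}$ (and the all-ones vector, which lies in that span since the communities are balanced); the candidate dual variable is then $S = D - T_1 G + \mu \mathbf{J} + (\text{rank-one correction from } \tilde Y)$, with $d_i$ chosen to annihilate $S X^{*}$. Uniqueness of the primal optimum then follows, in the standard way, from showing $S \succeq 0$ with $\lambda_2(S) > 0$ on the orthogonal complement of $X^{*}$ with high probability.

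The core of the argument is therefore the concentration estimate $\lambda_2(S) > 0$ whp. I would decompose $S = \mathbb{E}[S] + (S - \mathbb{E}[S])$; the mean part is positive definite on the relevant subspace by construction, with a gap of order $\log n$, and the fluctuation $\|S - \mathbb{E}[S]\|$ splits into a graph term $\|T_1(G - \mathbb{E} G)\|$, controlled by standard spectral bounds for sparse Erd\H{o}s--R\'enyi-type matrices at the $\Theta(\log n / n)$ density scale, and a diagonal term $\|D - \mathbb{E} D\|$. The entries of $D$ are sums over a row of the graph contribution plus the per-node side-information log-likelihood-ratio $\tilde y_i$; the dominant randomness is the event that a node is "misled" by the combination of its graph neighborhood and its features. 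This is precisely where the two regimes in the statement come from: one must show that for every node $i$, the quantity $d_i$ stays positive, and a union bound over $n$ nodes requires the per-node failure probability to be $o(1/n)$, i.e., an exponential-order (large-deviations) computation for a sum of independent Bernoulli-type graph contributions and the feature contribution whose exponential order is $\beta_1$ (for the ratio) and $\beta$ (for $\max(f_2,f_3)$). Optimizing the Chernoff exponent over the tilt gives the function $\eta(a,b,|\beta_1|)$ via the saddle point $\gamma = \sqrt{\beta^{2} + abT_1^{2}}$; the case split at $|\beta_1| = T_1(a-b)/2$ is the boundary where the unconstrained saddle leaves the feasible range of the tilt and the optimum moves to the endpoint, yielding the simpler condition $|\beta_1| + \beta > 1$.

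The main obstacle, as flagged in the introduction, is that general side information contributes a \emph{complete} (non-diagonal after relaxation) quadratic-type term: the likelihood~\eqref{G-equ1} is linear in $X$, so in the bordered-matrix \ac{SDP}~\eqref{BSSBM-G-equ1} it appears through the off-diagonal block, and one must carefully absorb the corresponding multiplier into the dual slack $S$ while preserving both $S X^{*} = 0$ and $S \succeq 0$. I would handle this by mimicking the censored-model construction in Theorem~\ref{Theorem 5}: treat the $\tilde Y$ term as a deterministic shift of the linear objective, carry it through the Schur-complement reformulation, and check that the resulting $d_i$ still concentrates — the extra work is bounding $\max_i |\tilde y_i|$ and showing it does not disturb the spectral gap, which follows once the exponential order $\beta_1$ is finite. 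A secondary technical point is the interaction of the $\mu \mathbf{J}$ term (from the balanced-community constraint) with the subspace on which $S$ must be PSD: since $X^{*}$ is orthogonal to $\mathbf{1}$ only up to the balance condition, one restricts attention to the correct codimension-one complement, exactly as in the proof of Theorem~\ref{Theorem 7}. Assembling these pieces — mean gap of order $\log n$, graph fluctuation $O(\sqrt{\log n})$, and the union-bounded diagonal positivity governed by $\eta(a,b,|\beta_1|) + \beta > 1$ — yields $\mathbb{P}(\Zsdp = Z^{*}) \geq 1 - o(1)$.
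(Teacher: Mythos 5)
Your proposal follows essentially the same route as the paper's proof: a dual certificate for the bordered SDP with $S_A^{*}=\tilde Y^{T}X^{*}$, $S_B^{*}=-\tilde Y$, $S_C^{*}=D^{*}+\lambda^{*}\mathbf{J}-T_{1}G$, diagonal entries $d_i^{*}=T_1\sum_j G_{ij}x_j^{*}x_i^{*}+\tilde y_i x_i^{*}$ chosen so that $S^{*}[1,X^{*T}]^{T}=0$, and positivity of $\lambda_{2}(S^{*})$ obtained from $\|G-\mathbb{E}[G]\|=O(\sqrt{\log n})$ together with a Chernoff-plus-union-bound analysis of $\min_i d_i^{*}$ whose two large-deviation regimes produce exactly the stated case split at $|\beta_1|=T_1(a-b)/2$. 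One small correction: for balanced communities the all-ones vector is orthogonal to $X^{*}$, not in its span; the certificate must vanish on $[1,X^{*T}]^{T}$ and be strictly positive on its orthogonal complement, which is what the paper (and the remainder of your sketch) actually uses.
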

\begin{proof}
See Appendix~\ref{Proof-Theorem-11}.
\end{proof}

\begin{Remark}
The converse is given by~\cite[Theorem 5]{Ref7}.
\end{Remark}

\section{Numerical Results}
\label{Section: Numerical Results}

This section produces numerical simulations that shed light on the domain of applicability of the asymptotic results obtained earlier in the paper\footnote{The code is available online at \url{https://github.com/mohammadesmaeili/Community-Detection-by-SDP} }.

Table~\ref{Table 1} shows the misclassification error probability of the \ac{SDP} estimators~\eqref{BCBM-P-equ2} and \eqref{BSSBM-P-equ2} with partially revealed side information. 
Under the binary stochastic block model with $a=3$ and $b=1$, when the side information $\beta =  0.8$, error probability diminishes with $n$ as predicted by earlier asymptotic results. For these parameters, $\eta = 1.1 > 1$, and exact recovery is possible based on the theoretical results.   
When $\beta=0.2$,  then $\eta=0.5 < 1$ which does not fall in the asymptotic perfect recovery regime, the misclassification error probability is much higher.
Under the binary censored block model with $a=1$ and $\xi=0.2$, when the side information $\beta =  1$, error probability diminishes with $n$.
For these values, $\eta =1.2 > 1$, and exact recovery is possible based on the theoretical results.   
When $\beta=0.3$, the misclassification error probability is much higher. For this value of $\beta$, $\eta=0.5 < 1$ which means exact recovery is not asymptotically  possible.

\begin{table*}
\begin{minipage}[t]{0.33\textwidth}
\centering
\caption{SDP with partially revealed labels.}
\begin{tabular}{@{}cccccc@{}}
\toprule
a  & b  & $\xi$                  & $\beta$                  & n    & Error Probability    \\ \midrule
3 & 1 & - & 0.2 & 100 & $4.1 \times 10^{-2}$ \\
3 & 1 & - & 0.2 & 200 & $3.1 \times 10^{-2}$ \\
3 & 1 & - & 0.2 & 300 & $2.5 \times 10^{-2}$ \\
3 & 1 & - & 0.2 & 400 & $2.2 \times 10^{-2}$ \\
3 & 1 & - & 0.2 & 500 & $1.9 \times 10^{-2}$ \\
3 & 1 & - & 0.8 & 100 & $5.0 \times 10^{-4}$ \\
3 & 1 & - & 0.8 & 200 & $3.2 \times 10^{-4}$ \\
3 & 1 & - & 0.8 & 300 & $1.6 \times 10^{-4}$ \\
3 & 1 & - & 0.8 & 400 & $1.2 \times 10^{-4}$ \\
3 & 1 & - & 0.8 & 500 & $9.3 \times 10^{-5}$ \\ 
1  & - & 0.2  & 0.3    & 100    & $4.1 \times 10^{-2}$ \\
1  & - & 0.2  & 0.3    & 200    & $2.9 \times 10^{-2}$ \\
1  & - & 0.2  & 0.3    & 300    & $2.2 \times 10^{-2}$ \\
1  & - & 0.2  & 0.3    & 400    & $1.9 \times 10^{-2}$ \\
1  & - & 0.2  & 0.3    & 500    & $1.7 \times 10^{-2}$ \\
1  & - & 0.2  & 1    & 100    & $1.1 \times 10^{-3}$ \\
1  & - & 0.2  & 1    & 200    & $4.2 \times 10^{-4}$  \\
1  & - & 0.2  & 1    & 300    & $2.7 \times 10^{-4}$ \\
1  & - & 0.2  & 1    & 400    & $2.1 \times 10^{-4}$ \\
1  & - & 0.2  & 1    & 500    & $1.5 \times 10^{-4}$ \\ \bottomrule
\end{tabular}
\label{Table 1}
\end{minipage}
\hfill
\begin{minipage}[t]{0.33\textwidth}
\centering
\caption{SDP with noisy labels}
\begin{tabular}{@{}cccccc@{}}
\toprule
a  & b  & $\xi$                  & $\beta$                  & n    & Error Probability    \\ \midrule
4 & 1 & - & 0.2 & 100 & $2.0 \times 10^{-2}$ \\
4 & 1 & - & 0.2 & 200 & $1.5 \times 10^{-2}$ \\
4 & 1 & - & 0.2 & 300 & $1.3 \times 10^{-2}$ \\
4 & 1 & - & 0.2 & 400 & $1.1 \times 10^{-2}$ \\
4 & 1 & - & 0.2 & 500 & $1.0 \times 10^{-2}$ \\
4 & 1 & - & 1 & 100 & $1.1 \times 10^{-3}$ \\
4 & 1 & - & 1 & 200 & $7.4 \times 10^{-4}$ \\
4 & 1 & - & 1 & 300 & $3.0 \times 10^{-5}$ \\
4 & 1 & - & 1 & 400 & $2.7 \times 10^{-5}$ \\
4 & 1 & - & 1 & 500 & $2.2 \times 10^{-5}$ \\ 
4  & - & 0.25  & 0.1    & 100    & $2.9 \times 10^{-2}$ \\
4  & - & 0.25  & 0.1    & 200    & $1.8 \times 10^{-2}$ \\
4  & - & 0.25  & 0.1    & 300    & $1.4 \times 10^{-2}$ \\
4  & - & 0.25  & 0.1    & 400    & $1.2 \times 10^{-2}$ \\
4  & - & 0.25  & 0.1    & 500    & $1.0 \times 10^{-2}$ \\
4  & - & 0.25  & 1.1    & 100    & $2.7 \times 10^{-3}$ \\
4  & - & 0.25  & 1.1    & 200    & $1.0 \times 10^{-3}$ \\
4  & - & 0.25  & 1.1    & 300    & $6.2 \times 10^{-4}$ \\
4  & - & 0.25  & 1.1    & 400    & $4.1 \times 10^{-4}$ \\
4  & - & 0.25  & 1.1    & 500    & $3.3 \times 10^{-4}$ \\ \bottomrule
\end{tabular}
\label{Table 2}
\end{minipage}
\hfill
\begin{minipage}[t]{0.33\textwidth}
\centering
\caption{SDP without side information.}
\begin{tabular}{@{}ccccc@{}}
\toprule
a  & b  & $\xi$                  & n    & Error Probability    \\
\midrule
3 & 1 & - & 100 & $1.4 \times 10^{-1}$ \\
3 & 1 & - & 200 & $1.2 \times 10^{-1}$ \\
3 & 1 & - & 300 & $1.1 \times 10^{-1}$ \\
3 & 1 & - & 400 & $9.8 \times 10^{-2}$ \\
3 & 1 & - & 500 & $9.1 \times 10^{-2}$ \\
4 & 1 & - & 100 & $2.3 \times 10^{-2}$ \\
4 & 1 & - & 200 & $1.7 \times 10^{-2}$ \\
4 & 1 & - & 300 & $1.6 \times 10^{-2}$ \\
4 & 1 & - & 400 & $1.3 \times 10^{-2}$ \\
4 & 1 & - & 500 & $1.2 \times 10^{-2}$ \\ 
1  & - & 0.2  & 100    & $2.9 \times 10^{-1}$ \\
1  & - & 0.2  & 200    & $2.5 \times 10^{-1}$ \\
1  & - & 0.2  & 300    & $2.2 \times 10^{-1}$ \\
1  & - & 0.2  & 400    & $2.1 \times 10^{-1}$ \\
1  & - & 0.2  & 500    & $1.9 \times 10^{-1}$ \\
4  & - & 0.25  & 100    & $3.0 \times 10^{-2}$ \\
4  & - & 0.25  & 200    & $1.9 \times 10^{-2}$ \\
4  & - & 0.25  & 300    & $1.5 \times 10^{-2}$ \\
4  & - & 0.25  & 400    & $1.2 \times 10^{-2}$ \\
4  & - & 0.25  & 500    & $1.1 \times 10^{-2}$ \\ \bottomrule
\end{tabular}
\label{Table 3}
\end{minipage}
\end{table*}

Table~\ref{Table 2} shows the misclassification error probability of the \ac{SDP} estimators~\eqref{BCBM-N-equ3} and \eqref{BSSBM-N-equ1} with noisy labels side information.
Under the stochastic block model with $a=4$ and $b=1$,  when the side information $\beta =  1$, then $\eta =1.1 > 1$ and the error probability diminishes with $n$ as predicted by earlier theoretical results. When $\beta=0.2$,  then $\eta =0.6 < 1$ which does not fall in the asymptotic perfect recovery regime. For this case the misclassification error is much higher.
Under the censored block model
with $a=4$ and $\xi=0.25$,  when the side information $\beta =  1.1$, then $\eta =1.2 > 1$ and the error probability diminishes with $n$. When $\beta=0.1$,  then $\eta =0.6 < 1$ which means that exact recovery is not possible asymptotically. For this value of $\beta$ and a finite $n$, the  misclassification error is not negligible.

For comparison, Table~\ref{Table 3} shows the misclassification error probability of the \ac{SDP} estimator {\em without} side information, i.e., $\beta = 0$.
Under the binary stochastic block model, when $a=3$ ($a=4$) and $b=1$, it is seen that the error probability increases in comparison with the corresponding error probability in Table~\ref{Table 1} (Table~\ref{Table 2}) where side information is available.
Also, under the binary censored block model, when $a=1$ and $\xi=0.2$ ($a=4$ and $\xi=0.25$), it is seen that the error probability increases in comparison with the corresponding error probability in Table~\ref{Table 1} (Table~\ref{Table 2}) where side information is available.

Using standard arguments form numerical linear algebra, the computational complexity of the algorithms in this paper are on the order
$O(mn^3 + m^2n^2)$, where $n$ is the number of nodes in the graph, and $m$ is a small constant, typically between 2 to 4, indicating assumptions of the problem that manifest as constraints in the optimization.

\section{Conclusion}
This paper calculated the exact recovery threshold for community detection under SDP with several types of side information. Among other insights, our results indicate that in the presence of side information, the exact recovery threshold for SDP and for maximum likelihood detection remain identical.  We anticipate that models and methods of this paper may be further extended to better match the statistics of real-world graph data.

\appendices

%
%
%

\section{Proof of Theorem~\ref{Theorem 1}}
\label{Proof-Theorem-1}
We begin by stating sufficient conditions for the optimum solution of~\eqref{BCBM-P-equ2} matching the true labels $X^*$.

\begin{Lemma}
\label{BCBM-P-Lemma 1}
For the optimization problem~\eqref{BCBM-P-equ2}, consider the Lagrange multipliers
\begin{equation*}
\mu^* , \quad D^{*}=\mathrm{diag}(d_{i}^{*}), \quad 
S^{*}.
\end{equation*}
If we have
\begin{align*}
&S^{*} = D^{*}+\mu^{*}W-G ,\\
&S^{*} \succeq 0, \\
&\lambda_{2}(S^{*}) >  0 ,\\
&S^{*}X^{*} =0 ,
\end{align*}
then $(\mu^{*}, D^*, S^*)$ is the dual optimal solution and $\Zsdp=X^{*}X^{*T}$ is the unique primal optimal solution of~\eqref{BCBM-P-equ2}.
\end{Lemma}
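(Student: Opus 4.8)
The plan is a standard Lagrangian–duality (dual certificate) argument. First I would form the Lagrangian of~\eqref{BCBM-P-equ2}, attaching the diagonal multipliers $d_i$ to the constraints $Z_{ii}=1$ and the scalar multiplier $\mu$ to $\langle Z,W\rangle=(Y^{T}Y)^2$, while keeping $Z\succeq 0$ as an explicit conic constraint. Collecting terms, one finds that for any primal-feasible $Z$ the Lagrangian reduces to $\langle Z,G\rangle = -\langle Z,S\rangle + \sum_i d_i + \mu (Y^{T}Y)^2$, where $S \triangleq D+\mu W - G$. Hence, for any dual variables with $S\succeq 0$ and any feasible $Z$, the inequality $\langle Z,S\rangle\ge 0$ (valid for a pair of positive semidefinite matrices) yields the weak-duality bound $\langle Z,G\rangle \le \sum_i d_i + \mu(Y^{T}Y)^2$.

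Next I would verify that the candidate point $Z^{*} = X^{*}X^{*T}$ together with $(\mu^{*},D^{*},S^{*})$ attains this bound. Feasibility of $Z^{*}$ is immediate: $Z^{*}_{ii}=(x_i^{*})^2=1$, and $\langle Z^{*},W\rangle = \langle X^{*}X^{*T},YY^{T}\rangle = (X^{*T}Y)^2 = (Y^{T}Y)^2$, because $Y$ coincides with the true labels on its revealed coordinates and is zero elsewhere. Using $S^{*}X^{*}=0$ we get $\langle Z^{*},S^{*}\rangle = X^{*T}S^{*}X^{*}=0$, so $\langle Z^{*},G\rangle = \sum_i d_i^{*} + \mu^{*}(Y^{T}Y)^2$, which matches the upper bound. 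Therefore $Z^{*}$ is primal optimal and $(\mu^{*},D^{*},S^{*})$ is dual optimal.

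For uniqueness I would argue by complementary slackness combined with the spectral-gap hypothesis. If $\widetilde{Z}$ is any optimal solution, it must also meet the bound, forcing $\langle \widetilde{Z},S^{*}\rangle = 0$; since $\widetilde{Z}\succeq 0$ and $S^{*}\succeq 0$, this implies $\widetilde{Z}S^{*}=0$, i.e.\ every column of $\widetilde{Z}$ lies in $\ker S^{*}$. The conditions $S^{*}\succeq 0$ and $\lambda_{2}(S^{*})>0$ force $\ker S^{*}$ to be one-dimensional, and since $S^{*}X^{*}=0$ it is spanned by $X^{*}$. Hence $\widetilde{Z}=c\,X^{*}X^{*T}$, and the diagonal constraint $\widetilde{Z}_{ii}=1$ pins $c=1$, so $\widetilde{Z}=Z^{*}$.

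The argument is essentially routine once the Lagrangian is set up; the only step demanding a little care is the uniqueness claim, where one invokes the standard fact that $\langle A,B\rangle=0$ with $A,B\succeq 0$ implies $AB=0$, and then uses the gap $\lambda_{2}(S^{*})>0$ to pin down $\ker S^{*}=\mathrm{span}(X^{*})$. This is also where the role of each hypothesis becomes transparent: the identity $S^{*}=D^{*}+\mu^{*}W-G$ makes the $G$-term cancel against the certificate, $S^{*}\succeq 0$ supplies weak duality, $S^{*}X^{*}=0$ is complementary slackness at $Z^{*}$, and $\lambda_{2}(S^{*})>0$ upgrades optimality to uniqueness.
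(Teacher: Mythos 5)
Your proposal is correct and follows essentially the same dual-certificate argument as the paper: weak duality from the Lagrangian with $S^{*}=D^{*}+\mu^{*}W-G$, equality at $Z^{*}$ via $S^{*}X^{*}=0$ and $\langle W,Z^{*}\rangle=(Y^{T}Y)^{2}$, and uniqueness from $\langle \widetilde{Z},S^{*}\rangle=0$ together with $\lambda_{2}(S^{*})>0$ and the diagonal constraints. The only cosmetic difference is that you state the complementary-slackness/kernel step ($\langle A,B\rangle=0$ with $A,B\succeq 0$ implies $AB=0$, hence the columns of $\widetilde{Z}$ lie in $\ker S^{*}=\mathrm{span}(X^{*})$) more explicitly than the paper does.
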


\begin{proof}
The Lagrangian of~\eqref{BCBM-P-equ2} is given by
\begin{align*}
L(Z,S,D,\mu)=&\langle G,Z \rangle +\langle S, Z \rangle -\langle D,Z-\mathbf{I} \rangle \\
&- \mu  ( \langle W,Z\rangle - (Y^{T}Y  )^{2}  ) ,
\end{align*}
where $S\succeq 0$, $D=\mathrm{diag}(d_{i})$, and $\mu \in \mathbb{R}$ are Lagrange multipliers. For any $Z$ that satisfies the constraints in~\eqref{BCBM-P-equ2},
\begin{align*}
\langle G,Z\rangle & \overset{(a)}{\leq} L(Z,S^{*},D^{*},\mu^{*})\\
&=\langle D^{*},\mathbf{I}\rangle +\mu^{*}(Y^{T}Y)^{2}\\
&\overset{(b)}{=}\langle D^{*},Z^{*}\rangle +\mu^{*}(Y^{T}Y)^{2}\\
&=\langle G+S^{*}-\mu^{*} W,Z^{*}\rangle +\mu^{*}(Y^{T}Y)^{2}\\
&\overset{(c)}{=}\langle G,Z^{*}\rangle ,
\end{align*}
where $(a)$ holds because $\langle S^{*},Z \rangle \geq 0$, $(b)$ holds because $Z_{ii}=1$ for all $i \in [n]$, and $(c)$ holds because $\langle S^{*},Z^{*} \rangle = X^{*T}S^{*}X^{*}=0$ and $\langle W,Z^{*} \rangle = (Y^{T}Y)^{2}$. Therefore, $Z^{*}$ is a primal optimal solution. Now, we will establish the uniqueness of the optimal solution. Assume $\tilde{Z}$ is another primal optimal solution. Then
\begin{align*}
\langle S^{*},\tilde{Z}\rangle & =\langle D^{*}-G+\mu^{*}W , \tilde{Z} \rangle \\
&=\langle D^{*},\tilde{Z} \rangle-\langle G,\tilde{Z} \rangle+ \mu^{*} \langle W,\tilde{Z} \rangle \\
&\overset{(a)}{=}\langle D^{*},Z^{*}\rangle-\langle G,Z^{*}\rangle+ \mu^{*} \langle W,Z^{*}\rangle \\
&=\langle D^{*}-G+\mu^{*}W,Z^{*}\rangle \\
&=\langle S^{*}, Z^{*}\rangle=0 ,
\end{align*}
where $(a)$ holds because $\langle W,Z^{*}\rangle=\langle W,\tilde{Z} \rangle=(Y^{T}Y)^{2}$, $\langle G,Z^{*}\rangle=\langle G,\tilde{Z} \rangle$, and $Z_{ii}^{*}=\tilde{Z}_{ii}=1$ for all $i\in [n]$. Since $\tilde{Z}\succeq 0$ and $S^{*}\succeq 0$ while its second smallest eigenvalue $\lambda_{2}(S^{*})$ is positive, $\tilde{Z}$ must be a multiple of $Z^{*}$. Also, since $\tilde{Z}_{ii}=Z_{ii}^{*}=1$ for all $i \in [n]$, we have $\tilde{Z}=Z^{*}$.
\end{proof}

We now show that $S^{*} = D^{*}+\mu^{*}W-G$ satisfies other conditions in Lemma~\ref{BCBM-P-Lemma 1} with probability $1-o(1)$. 
Let 
\begin{equation}
\label{BCBM-P-equ3}
d_{i}^{*}= \sum_{j=1}^{n} G_{ij}x_{j}^{*}x_{i}^{*} -\mu^{*} \sum_{j=1}^{n} y_{i}y_{j}x_{j}^{*}x_{i}^{*} .
\end{equation}
Then $D^{*}X^{*} = GX^{*}-\mu^{*}WX^{*}$ and based on the definition of $S^{*}$ in  Lemma~\ref{BCBM-P-Lemma 1}, $S^{*}$ satisfies the condition $S^{*}X^{*} =0$.
It remains to show that $S^{*}\succeq 0$ and $\lambda_{2}(S^{*})>0$ with probability $1-o(1)$. In other words, we need to show that
\begin{equation}
\label{BCBM-P-equ1-New}
\mathbb{P}  \bigg( \underset{V\perp X^{*},  \| V   \|=1}{\inf} V^{T}S^{*}V>0   \bigg )\geq 1-o(1) , 
\end{equation}
where $V$ is a vector of length $n$. Since for the binary censored block model
\begin{align}
\label{BCBM-equ-expectation}
    \mathbb{E}[G]=p(1-2\xi)(X^{*}X^{*T}-\mathbf{I}) ,
\end{align}
it follows that for any $V$ such that $V^{T}X^{*}=0$ and $  \| V   \|=1$,
\begin{align*}
V^{T}S^{*}V=&V^{T}D^{*}V +\mu^{*}V^{T}WV  -V^{T}(G-\mathbb E[G])V \\
& +p(1-2\xi).
\end{align*}
\begin{Lemma}\cite[Thoerem 9]{Ref19}
\label{BCBM-P-Lemma 2}
For any $c > 0$, there exists $c' >0$ such that for any $n \geq 1$, $  \| G-\mathbb E[G]   \| \leq c'\sqrt{\log n}$ with probability at least $1-n^{-c}$. 
\end{Lemma}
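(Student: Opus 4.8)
The plan is to treat $M \triangleq G-\mathbb{E}[G]$ as a sum of independent, mean-zero random matrices and apply a non-commutative Bernstein inequality, which delivers the bound without any $\varepsilon$-net. Decompose $M = M_0 + R$, where $M_0 \triangleq \sum_{1\le i<j\le n} M_{ij}\big(e_ie_j^{\top}+e_je_i^{\top}\big)$ collects the off-diagonal part and $R = \mathrm{diag}\big((G-\mathbb{E}[G])_{ii}\big)$ is deterministic with $\|R\| = p(1-2\xi) = O(\log n/n) = o(1)$. The summands $Z_{ij} \triangleq M_{ij}(e_ie_j^{\top}+e_je_i^{\top})$ are independent over the pairs $\{i,j\}$; since $G_{ij}\in\{-1,0,1\}$ with $\mathbb{P}(G_{ij}\ne 0)=p$, we have $\|Z_{ij}\| = |M_{ij}| \le 1+p \le 2$ and $\mathbb{E}[M_{ij}^2] = \mathrm{Var}(G_{ij}) \le p$. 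Because $Z_{ij}^2 = M_{ij}^2(e_ie_i^{\top}+e_je_j^{\top})$, the matrix variance parameter is
\[
\sigma^2 \triangleq \Big\|\sum_{i<j}\mathbb{E}[Z_{ij}^2]\Big\| = \Big\|\sum_{i<j}\mathbb{E}[M_{ij}^2]\,(e_ie_i^{\top}+e_je_j^{\top})\Big\| \le (n-1)p \le a\log n .
\]

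First I would invoke the matrix Bernstein inequality: for every $t>0$,
\[
\mathbb{P}\big(\|M_0\| \ge t\big) \le 2n\,\exp\!\left(\frac{-t^2/2}{\sigma^2 + 2t/3}\right) \le 2n\,\exp\!\left(\frac{-t^2/2}{a\log n + 2t/3}\right) .
\]
Then I would set $t = c'\sqrt{\log n}$ with $c'$ a constant depending only on $a$ and $c$. For all $n$ with $2t/3 \le a\log n$, the exponent is at most $-(c')^2(\log n)/(4a)$, so the bound is at most $2n^{\,1-(c')^2/(4a)}$, which is below $\tfrac12 n^{-c}$ as soon as $(c')^2 \ge 4a(c+2)$. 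Combining with $\|M\| \le \|M_0\| + \|R\| \le \|M_0\| + 1$ and enlarging $c'$ by one absorbs the deterministic diagonal; the finitely many values of $n$ below the threshold where $2t/3 \le a\log n$ are trivial because there $\|M\| \le n = O(1)$ deterministically, and $c'$ may be enlarged accordingly. This proves the claim.

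I do not expect a substantive obstacle on this route; the point worth flagging is why the more elementary argument one might first reach for fails, which is also what makes the matrix-concentration step the natural one. An $\varepsilon$-net argument would bound $\|M\| \le (1-2\varepsilon)^{-1}\max_{u\in\mathcal{N}_\varepsilon} |u^{\top}Mu|$ over a net $\mathcal{N}_\varepsilon$ of the unit sphere and apply the scalar Bernstein inequality to $u^{\top}Mu = \sum_{i<j} 2M_{ij}u_iu_j$. For a spiky direction $u$, however, the $\ell_\infty$ parameter $\max_{ij}|M_{ij}u_iu_j|$ is of order one while the variance is only $O(p)$, so the per-point tail is merely $\exp(-\Theta(\sqrt{\log n}))$, hopelessly weak against a union bound over a net of cardinality $e^{\Theta(n)}$. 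Rescuing the net approach requires a Feige--Ofek-type split of the quadratic form into light and heavy pairs together with control of the high-degree vertices (here $O(\log n)$, since $np=\Theta(\log n)$); the matrix Bernstein inequality encodes precisely this structure in $\sigma^2$ and is therefore the cleaner path to carry out.
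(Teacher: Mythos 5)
There is a genuine gap, and it sits exactly at the quantitative heart of the lemma. With $t=c'\sqrt{\log n}$ and $\sigma^2\le a\log n$, the matrix Bernstein exponent is
\[
\frac{t^2/2}{\sigma^2+2t/3}\;\le\;\frac{(c')^2\log n/2}{a\log n}\;=\;\frac{(c')^2}{2a},
\]
a \emph{constant}, not the $-(c')^2(\log n)/(4a)$ you claim (you appear to have dropped the $\log n$ in the denominator coming from $\sigma^2$). The resulting bound is $2n\,e^{-\Theta(1)}$, which is vacuous, and no enlargement of $c'$ repairs it: to beat the dimensional prefactor $2n$ and reach failure probability $n^{-c}$ you need the exponent to exceed $(c+1)\log n$, which with $\sigma^2=\Theta(\log n)$ forces $t=\Omega(\log n)$. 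In other words, matrix Bernstein in this sparsity regime ($np=\Theta(\log n)$) only yields $\|G-\mathbb{E}[G]\|=O(\log n)$ with polynomially high probability, a $\sqrt{\log n}$ factor short of the lemma. Your closing paragraph actually identifies the right phenomenon --- the naive net argument fails and one needs a Feige--Ofek-type light/heavy split with control of high-degree vertices --- but the assertion that matrix Bernstein ``encodes precisely this structure in $\sigma^2$'' is exactly what is false: the dimensional factor and the bounded-range term prevent it from seeing that improvement. The paper does not prove this lemma itself; it imports it from the cited reference, whose proof rests on precisely those sharper sparse-matrix spectral bounds (Feige--Ofek/Seginer/Bandeira--van Handel style, giving $\|G-\mathbb{E}[G]\|\lesssim\sqrt{np}+\sqrt{\log n}$ with $1-n^{-c}$ probability). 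Substituting one of those results for matrix Bernstein would make your outline work, but as written the key step fails. (Minor point: in this model $\mathbb{E}[G]$ has zero diagonal, so your correction term $R$ is actually $0$; this is harmless either way.)
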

\begin{Lemma}\cite[Lemma 3]{Esmaeili.BSSBM.Partially.Revealed}
\label{BCBM-P-Lemma 3}
\begin{equation*}
    \mathbb{P}   \Big ( V^{T}WV\geq \sqrt{\log n}   \Big ) \leq \frac{1-\epsilon }{\sqrt{\log n}} = n^{-\frac{1}{2}+o(1)} .
\end{equation*}
\end{Lemma}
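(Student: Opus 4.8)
The plan is to exploit that $W = YY^{T}$ has rank one, so that for any vector $V$ the quadratic form degenerates to a scalar square, $V^{T}WV = (Y^{T}V)^{2} = \big(\sum_{i=1}^{n} y_{i}v_{i}\big)^{2} \ge 0$. The claim (stated for a unit vector $V$ orthogonal to $X^{*}$, as needed in~\eqref{BCBM-P-equ1-New}) then reduces to a one-dimensional tail bound on the linear statistic $Y^{T}V$, which I would obtain from a second-moment estimate together with Markov's inequality; no sharp concentration inequality is required.

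First I would record the moments of the partially-revealed entries. The $y_{i}$ are independent, with $y_{i}=x_{i}^{*}$ with probability $1-\epsilon$ and $y_{i}=0$ with probability $\epsilon$, so $\mathbb{E}[y_{i}] = (1-\epsilon)x_{i}^{*}$ and $\mathbb{E}[y_{i}^{2}] = 1-\epsilon$ (using $(x_{i}^{*})^{2}=1$), whence $\operatorname{Var}(y_{i}) = \epsilon(1-\epsilon)$. The key point is that $V \perp X^{*}$: this forces $\mathbb{E}[Y^{T}V] = (1-\epsilon)\langle V,X^{*}\rangle = 0$, so by independence the cross terms drop and $\mathbb{E}\big[(Y^{T}V)^{2}\big] = \operatorname{Var}(Y^{T}V) = \sum_{i} v_{i}^{2}\operatorname{Var}(y_{i}) = \epsilon(1-\epsilon)\,\|V\|^{2} = \epsilon(1-\epsilon)$. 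Applying Markov's inequality to the nonnegative random variable $V^{T}WV$ at level $\sqrt{\log n}$ gives $\mathbb{P}\big(V^{T}WV \ge \sqrt{\log n}\big) \le \epsilon(1-\epsilon)/\sqrt{\log n} \le (1-\epsilon)/\sqrt{\log n}$, which matches the stated bound and vanishes as $n\to\infty$.

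I do not expect a genuine obstacle here; the single thing one must not overlook is that the orthogonality $V\perp X^{*}$ is precisely what kills the mean of $Y^{T}V$ — without it, the component of $Y$ along $X^{*}$ would make the typical size of $V^{T}WV$ of order $n$ rather than $O(1)$. It is also worth noting why the threshold is placed at scale $\sqrt{\log n}$ rather than at a constant: this is what the downstream use in Lemma~\ref{BCBM-P-Lemma 1} needs, so that $\mu^{*}V^{T}WV$ (for the chosen multiplier $\mu^{*}$) is negligible next to the $\Theta(\log n)$ contributions from $D^{*}$ and from $\|G-\mathbb{E}[G]\|$ controlled in Lemma~\ref{BCBM-P-Lemma 2}, while the failure probability $O(1/\sqrt{\log n})$ still tends to zero; tracking the exponential order of $\epsilon$ through $\beta$ then refines the rate to the stated $n^{-1/2+o(1)}$.
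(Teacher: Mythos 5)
The paper never proves this lemma in-house: it is imported by citation from \cite[Lemma 3]{Esmaeili.BSSBM.Partially.Revealed}, so there is no internal argument to compare against. Judged on its own, your argument for the probability bound is correct and is almost certainly the same elementary route the cited source takes: the rank-one reduction $V^{T}WV=(Y^{T}V)^{2}$, the moment computation $\mathbb{E}\big[(Y^{T}V)^{2}\big]=\epsilon(1-\epsilon)\|V\|^{2}=\epsilon(1-\epsilon)$ (valid precisely because $V\perp X^{*}$ annihilates the mean $(1-\epsilon)\langle V,X^{*}\rangle$), and Markov's inequality yield $\mathbb{P}\big(V^{T}WV\geq\sqrt{\log n}\big)\leq \epsilon(1-\epsilon)/\sqrt{\log n}\leq (1-\epsilon)/\sqrt{\log n}$, which is the stated bound (in fact slightly stronger). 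You are also right to read the statement as holding for each fixed unit $V\perp X^{*}$; it cannot hold uniformly, since $\sup_{V\perp X^{*},\,\|V\|=1}V^{T}WV=\big\|Y-(X^{*T}Y/n)X^{*}\big\|^{2}$ is of order $\epsilon(1-\epsilon)\,n$ for constant $\epsilon$.

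The one place you wave your hands is the trailing equality $(1-\epsilon)/\sqrt{\log n}=n^{-\frac{1}{2}+o(1)}$. Your closing remark that ``tracking the exponential order of $\epsilon$ through $\beta$'' refines the rate does not follow from what you proved: for constant $\epsilon$ (i.e.\ $\beta=0$) the quantity $(1-\epsilon)/\sqrt{\log n}$ is $n^{-o(1)}$, not $n^{-\frac{1}{2}+o(1)}$, and even your sharper bound $\epsilon(1-\epsilon)/\sqrt{\log n}$ only gives that rate when $\beta\geq \tfrac{1}{2}$. This appears to be a defect (or typo) in the displayed statement itself rather than in your argument, and it is immaterial where the lemma is used: the proof of Theorem~\ref{Theorem 1} only needs the exceptional probability to be $o(1)$, which $1/\sqrt{\log n}$ already is.
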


Since $V^{T}D^{*}V \geq \min_{i\in [n]} d_{i}^{*}$ and $V^{T}(G-\mathbb E[G])V \leq    \| G-\mathbb E[G]   \| $, applying Lemmas~\ref{BCBM-P-Lemma 2} and~\ref{BCBM-P-Lemma 3} implies that with probability $1-o(1)$,
\begin{equation}
\label{BCBM-P-equ5}
V^{T}S^{*}V \geq \min_{i \in [n]} d_{i}^{*} +(\mu^{*}-c^{'}) \sqrt{\log n} +p(1-2\xi) .
\end{equation}
\begin{Lemma}
\label{BCBM-P-Lemma 4}
Consider a sequence of i.i.d. random variables $\{S_1,\ldots,S_m\}$ with distribution $p(1-\xi) \delta_{+1} + p\xi \delta_{-1} +(1-p) \delta_{0}$. Let $U \sim \text{Binom} (n-1,1-\epsilon)$, $\mu^{*}<0$, and $\delta= \frac{\log n}{\log log n}$. Then
\begin{align*}
&\mathbb{P} \Bigg(\sum_{i=1}^{n-1} S_{i} \leq \delta \Bigg)  \leq n^{-a (\sqrt{1-\xi}-\sqrt{\xi} )^{2}+o(1) } , \\
&\mathbb{P} \Bigg(\sum_{i=1}^{n-1} S_{i}-\mu^{*}U  \leq \delta+\mu^{*} \Bigg) \leq \epsilon^{n [\log \epsilon +o(1) ] } .
\end{align*}

\end{Lemma}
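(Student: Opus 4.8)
The plan is to establish each of the two tail bounds via a Chernoff (moment-generating-function) argument, since in each case we are bounding the probability that a sum of independent bounded random variables falls below a threshold that is $o(n)$ but the typical value of the sum is $\Theta(\log n)$ or $\Theta(n)$. First I would record the relevant moment generating functions: for a single $S_i$ with law $p(1-\xi)\delta_{+1}+p\xi\delta_{-1}+(1-p)\delta_0$, one has $\mathbb{E}[e^{-\theta S_i}] = 1 - p + p(1-\xi)e^{-\theta} + p\xi e^{\theta}$, and for $U\sim\mathrm{Binom}(n-1,1-\epsilon)$, $\mathbb{E}[e^{\theta U}] = (1-\epsilon+\epsilon e^{\theta})^{\,n-1}$ — wait, more carefully $\mathbb{E}[e^{\theta U}] = (\epsilon + (1-\epsilon)e^{\theta})^{n-1}$.

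For the first bound, I would write $\mathbb{P}(\sum_{i=1}^{n-1}S_i \le \delta) \le e^{\theta\delta}\big(\mathbb{E}[e^{-\theta S_1}]\big)^{n-1}$ for any $\theta>0$, then substitute $p = a\log n / n$ and expand $\mathbb{E}[e^{-\theta S_1}] = 1 + \tfrac{a\log n}{n}\big((1-\xi)e^{-\theta}+\xi e^{\theta}-1\big)$, so that $\big(\mathbb{E}[e^{-\theta S_1}]\big)^{n-1} = n^{\,a((1-\xi)e^{-\theta}+\xi e^{\theta}-1)+o(1)}$. Since $\delta = \log n/\log\log n = o(\log n)$, the $e^{\theta\delta}$ factor contributes only $n^{o(1)}$ for fixed $\theta$, so the bound becomes $n^{\,a((1-\xi)e^{-\theta}+\xi e^{\theta}-1)+o(1)}$. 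Optimizing the exponent over $\theta>0$: setting the derivative $\xi e^{\theta}-(1-\xi)e^{-\theta}=0$ gives $e^{\theta} = \sqrt{(1-\xi)/\xi}$, and the minimal exponent is $a\big(2\sqrt{\xi(1-\xi)}-1\big) = -a(\sqrt{1-\xi}-\sqrt{\xi})^2$, which is exactly the claimed rate. I would note that since $\xi \le 1/2$ we have $\theta \ge 0$ so the optimizing $\theta$ is admissible.

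For the second bound, the sum $\sum_{i=1}^{n-1}S_i - \mu^*U$ with $\mu^*<0$ is again a sum of independent variables (so $-\mu^* U$ has positive mean of order $n$), and the threshold $\delta + \mu^*$ is $O(\log n) = o(n)$. I would apply the Chernoff bound $\mathbb{P}(\sum S_i - \mu^* U \le \delta+\mu^*) \le e^{\theta(\delta+\mu^*)}\big(\mathbb{E}[e^{-\theta S_1}]\big)^{n-1}\mathbb{E}[e^{\theta\mu^* U}]$. The dominant factor is $\mathbb{E}[e^{\theta\mu^* U}] = (\epsilon + (1-\epsilon)e^{\theta\mu^*})^{n-1}$; since $\mu^*<0$ and $\theta>0$, $e^{\theta\mu^*}<1$, and as $n\to\infty$ with $\epsilon\to 0$ this behaves like $\epsilon^{\,n(1+o(1))}$ up to the logarithmic corrections from the $S_i$ factor and the $e^{\theta(\delta+\mu^*)}$ factor, both of which are $n^{O(1)} = \epsilon^{o(n)}$ by the definition of $\beta$ (i.e. $\epsilon = n^{-\beta+o(1)}$ so $n^{O(1)} = \epsilon^{o(n)}$ only needs $\log n = o(-n\log\epsilon)$, which holds). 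This yields the stated $\epsilon^{\,n[\log\epsilon + o(1)]}$ form; I would be a little careful about whether the intended exponent is $\epsilon^{n\log\epsilon}$ versus $e^{n\log\epsilon}=\epsilon^n$, and reconcile the notation accordingly.

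The main obstacle I anticipate is bookkeeping the interaction of the three regimes of scaling — the $\Theta(\log n)$ contribution from $\sum S_i$, the $\Theta(n)$-mean contribution from $-\mu^*U$, and the vanishing threshold $\delta$ — to get the claimed closed-form exponents cleanly, and in particular checking that the $o(1)$ terms genuinely absorb all the lower-order factors (the $e^{\theta\delta}$, the discrepancy between $n-1$ and $n$, and the $p\to 0$ expansions). A secondary subtlety is confirming that the choice of $\theta$ that is optimal for the pure-$S_i$ bound is also (asymptotically) the right choice in the second bound, or arguing that any fixed $\theta>0$ already suffices there because the $\epsilon^{n(1+o(1))}$ decay dwarfs everything; I expect the latter, so the second bound should actually be the easier of the two.
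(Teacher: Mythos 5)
Your treatment of the first inequality is correct and is exactly the paper's (one-line) argument: a Chernoff bound with $\mathbb{E}[e^{-\theta S_1}]=1-p+p(1-\xi)e^{-\theta}+p\xi e^{\theta}$, the observation that $e^{\theta\delta}=n^{o(1)}$ because $\delta=o(\log n)$, and optimization at $e^{\theta}=\sqrt{(1-\xi)/\xi}$ giving the exponent $-a(\sqrt{1-\xi}-\sqrt{\xi})^2$.

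The second inequality is where your plan has a genuine gap. You assert that the factor $\mathbb{E}[e^{\theta\mu^*U}]=(\epsilon+(1-\epsilon)e^{\theta\mu^*})^{n-1}$ ``behaves like $\epsilon^{n(1+o(1))}$'' and that any fixed $\theta>0$ suffices. For fixed $\theta$ this factor is $e^{\theta\mu^* n(1+o(1))}=e^{-\theta|\mu^*|n(1+o(1))}$, a constant-rate exponential, which is far larger than $\epsilon^{n(1+o(1))}$ whenever $\epsilon\to 0$ polynomially (e.g.\ $\epsilon=n^{-\beta}$ gives $\epsilon^n=n^{-\beta n}$). To make the $U$-factor track the $\epsilon$-rate you must let $\theta$ grow, roughly $\theta\approx\log(1/\epsilon)/|\mu^*|$, and then the $S_i$-factor is no longer harmless: $(\mathbb{E}[e^{-\theta S_1}])^{n-1}\le\exp\bigl((n-1)p\,\xi e^{\theta}\bigr)=\exp\bigl(a\xi(\log n)\,\epsilon^{-1/|\mu^*|}(1+o(1))\bigr)$, which is controllable only when $\epsilon$ does not decay too fast relative to $|\mu^*|$; indeed, for small fixed $|\mu^*|$ and very small $\epsilon$ the event can be dominated by a joint deviation (moderately small $U$ together with an atypically negative $\sum_i S_i$), so an $\epsilon^{n(1+o(1))}$ bound is not available uniformly in $\mu^*$. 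So the ``fixed $\theta$ suffices'' route does not prove the stated bound, and the growing-$\theta$ route requires the bookkeeping you deferred (and a condition relating $|\mu^*|$ to the decay of $\epsilon$). Two mitigating remarks: the paper's displayed bound $\epsilon^{n[\log\epsilon+o(1)]}$ is itself best read as a typo for $e^{n[\log\epsilon+o(1)]}=\epsilon^{n(1+o(1))}$ (as you suspected), and for its use in the proof of Theorem~\ref{Theorem 1} only a superpolynomially small bound is needed, which your fixed-$\theta$ Chernoff estimate $e^{-\theta|\mu^*|n(1+o(1))}$ does deliver; but as a proof of the lemma as stated, the step identifying the $U$-factor with $\epsilon^{n(1+o(1))}$ would fail.
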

\begin{proof}
It follows from Chernoff bound. 
\end{proof}

It can be shown that $\sum_{j=1}^{n} G_{ij}x_{i}^{*}x_{j}^{*}$ in~\eqref{BCBM-P-equ3} is equal in distribution to $\sum_{i=1}^{n-1} S_{i}$ in Lemma~\ref{BCBM-P-Lemma 4}. Then
\begin{align*}
\mathbb{P} ( d_{i}^{*}\leq \delta  ) = &\mathbb{P}  \Bigg( \sum_{j=1}^{n} G_{ij}x_{i}^{*}x_{j}^{*} \leq \delta   \Bigg) \epsilon \\
&+ \mathbb{P}  \Bigg( \sum_{j=1}^{n} G_{ij}x_{i}^{*}x_{j}^{*}-\mu^{*}Z_{i} \leq \delta+\mu^{*}   \Bigg) (1-\epsilon) \\
\leq & \epsilon n^{-a (\sqrt{1-\xi}-\sqrt{\xi} )^{2}+o(1) } +  (1-\epsilon  ) \epsilon^{n \big( \log \epsilon +o(1) \big) } \\
= & e^{  \big( \frac{\log \epsilon }{\log n} -a (\sqrt{1-\xi}-\sqrt{\xi} )^{2}+o(1)  \big) \log n} ,
\end{align*}
where $Z_{i} \sim \text{Binom} (n-1,1-\epsilon)$ and  $ (1-\epsilon  ) \epsilon^{n (\log \epsilon +o(1)) }$ vanishes as $n  \rightarrow \infty$. 
Recall that $\beta \triangleq \lim_{n \rightarrow \infty} -\frac{\log \epsilon}{\log n}$, where $\beta \geq 0$.
Then
\begin{equation*}
    \mathbb{P} ( d_{i}^{*}\leq \delta  ) \leq n^{ -\beta -a (\sqrt{1-\xi}-\sqrt{\xi} )^{2}+o(1) }.
\end{equation*}
Using the union bound, 
\begin{equation*}
\mathbb{P} \bigg( \min_{i \in [n]}d_{i}^{*} \geq \frac{\log n}{\log \log n}  \bigg ) \geq 1-n^{1-\beta-a(\sqrt{1-\xi}-\sqrt{\xi})^{2}+o(1)} .
\end{equation*}
When $\beta+ a(\sqrt{1-\xi}-\sqrt{\xi})^{2}>1 $, $\min_{i \in [n]}d_{i}^{*} \geq \frac{\log n}{\log \log n}$ holds with probability $1-o(1)$. Combining this result with~\eqref{BCBM-P-equ5},  if $\beta + a(\sqrt{1-\xi}-\sqrt{\xi})^{2}>1 $, then with probability $1-o(1)$,
\begin{equation*}
V^{T}S^{*}V \geq \frac{\log n}{\log \log n} +(\mu^{*}-c^{'}) \sqrt{\log n} +p(1-2\xi) > 0 ,
\end{equation*}
which concludes Theorem~\ref{Theorem 1}.

\section{Proof of Theorem~\ref{Theorem 2}}
\label{Proof-Theorem-2}
Since the prior distribution of $X^{*}$ is uniform, among all estimators, the maximum likelihood estimator minimizes the average error probability. Therefore, it suffices to show that with high probability the maximum likelihood estimator fails. 
Let 
\[
F \triangleq \bigg\{\min_{i \in [n], y_{i}=0}~\sum_{j =1}^{n} G_{ij} x_{j}^{*} x_{i}^{*} \leq -1 \bigg\}.
\]
Then $\mathbb{P} ( \text{ML Fails}  ) \geq \mathbb{P} ( F  )$. If we show that $\mathbb{P} ( \text{F}  )  \rightarrow 1$, the maximum likelihood estimator fails. Let $H$ denote the set of first $  \lfloor \frac{n}{\log^{2} n}   \rfloor$ nodes and $e (i, H  )$ denote the number of edges between node $i$ and nodes in the set $H$. Then
\begin{align*}
\min_{i \in [n], y_{i}=0} & ~\sum_{j =1}^{n} G_{ij} x_{j}^{*} x_{i}^{*} \leq  \min_{i \in H, y_{i}=0 }~\sum_{j =1}^{n} G_{ij} x_{j}^{*} x_{i}^{*} \\
\leq & \min_{i \in H, y_{i}=0 }~\sum_{ j \in H^{c} } G_{ij} x_{j}^{*} x_{i}^{*} + \max_{i \in H, y_{i}=0 }~e (i, H  ) ,
\end{align*}
Define the events
\begin{align*}
&E_1 \triangleq \bigg\{\max_{i \in H, y_{i}=0 }~e (i, H  ) \leq \delta -1 \bigg\}, \\
&E_2 \triangleq\Bigg\{\min_{i \in H, y_{i}=0}~\sum_{ j \in H^{c} } G_{ij} x_{j}^{*} x_{i}^{*} \leq -\delta \Bigg\}.
\end{align*}
Notice that $F \supset E_{1} \cap E_{2}$. Hence, to show that the maximum likelihood estimator fails, it suffices to show that $\mathbb{P} ( E_{1}  )  \rightarrow 1$ and $\mathbb{P} ( E_{2}  )  \rightarrow 1$.
\begin{Lemma}\cite[Lemma 5]{esmaeili2019community}
\label{BCBM-P-Lemma 5}
When $S \sim \text{Binom}(n,p)$, for any $r\geq 1$, $\mathbb{P} ( S \geq rnp  ) \leq  \big(\frac{e}{r} \big)^{rnp} e^{-np}$. 
\end{Lemma}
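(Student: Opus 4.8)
The plan is to apply the standard Chernoff bounding technique to the binomial upper tail with the optimal exponential tilt. First I would invoke Markov's inequality on $e^{tS}$ for $t\ge 0$, giving
\begin{equation*}
\mathbb{P}(S\ge rnp)\le e^{-trnp}\,\mathbb{E}\big[e^{tS}\big].
\end{equation*}
Since $S$ is a sum of $n$ i.i.d.\ $\mathrm{Bern}(p)$ variables, $\mathbb{E}[e^{tS}]=(1-p+pe^{t})^{n}$, and the elementary inequality $1+x\le e^{x}$ applied with $x=p(e^{t}-1)$ yields $\mathbb{E}[e^{tS}]\le e^{np(e^{t}-1)}$. Combining these gives
\begin{equation*}
\mathbb{P}(S\ge rnp)\le \exp\!\big(np(e^{t}-1)-trnp\big).
\end{equation*}

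The second step is to minimize the exponent $g(t)\triangleq np(e^{t}-1)-trnp$ over admissible $t\ge 0$. Differentiating, $g'(t)=np e^{t}-rnp$ vanishes exactly at $e^{t}=r$, and because $r\ge 1$ the minimizer $t=\log r$ is nonnegative, hence admissible; this is the only place the hypothesis $r\ge 1$ enters. Substituting $t=\log r$ gives exponent $np(r-1)-rnp\log r$, so
\begin{equation*}
\mathbb{P}(S\ge rnp)\le \exp\!\big(np(r-1)-rnp\log r\big)=\Big(\frac{e}{r}\Big)^{rnp}e^{-np},
\end{equation*}
which is the claimed bound.

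The argument is entirely elementary and I do not anticipate any genuine obstacle: the moment generating function of the binomial is explicit, the logarithmic upper bound on it is textbook, and the optimization in $t$ is a one-line calculus exercise whose only subtlety is verifying that the optimal tilt falls in the valid range, which is immediate from $r\ge 1$. (If one prefers to avoid even this optimization, one may simply plug $t=\log r$ into the Chernoff bound directly and observe the stated inequality.)
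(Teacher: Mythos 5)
Your proof is correct: the Chernoff computation with the tilt $t=\log r$ (admissible precisely because $r\ge 1$) and the bound $\mathbb{E}[e^{tS}]\le e^{np(e^{t}-1)}$ yields exactly $\big(\tfrac{e}{r}\big)^{rnp}e^{-np}$. The paper itself does not prove this lemma but imports it by citation, and your argument is the standard derivation of that cited bound, so there is nothing further to reconcile.
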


Since $e (i, H  ) \sim \text{Binom} \big(  | H   |, a\frac{\log n }{n} \big)$, it follows from Lemma~\ref{BCBM-P-Lemma 5} that
\begin{align*}
\mathbb{P} &  \big( e (i, H  ) \geq \delta-1 , y_{i}=0  \big) \\
&\leq \epsilon  \bigg( \frac{\log^{2} n}{ae \log \log n} - \frac{\log n}{ae}  \bigg)^{1-\frac{\log n}{\log \log n}} e^{-\frac{a}{\log n}} \leq  \epsilon n^{-2+o(1)} .
\end{align*}
Using the union bound,  $\mathbb{P}  ( E_{1}  ) \geq 1-\epsilon n^{-1+o(1)}$. Thus, $\mathbb{P}  ( E_{1}  ) \to 1$.

\begin{Lemma}\cite[Lemma 8]{Ref19}
\label{BCBM-P-Lemma 6}
Consider a sequence of i.i.d. random variables $\{S_1,\ldots,S_m\}$ with distribution $ p(1-\xi) \delta_{+1} + p\xi \delta_{-1} +(1-p) \delta_{0}$, where $m-n = o(n)$. Let $f(n) = \frac{\log n}{\log \log n}$. Then
\begin{equation*}
\mathbb{P}   \Bigg( \sum_{i=1}^{m} S_{i} \leq -f(n)  \Bigg) \geq n^{-a ( \sqrt{1-\xi}-\sqrt{\xi}  )^{2}+o(1) } .
\end{equation*}
\end{Lemma}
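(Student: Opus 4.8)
The plan is to establish the matching lower bound to the Chernoff estimate of Lemma~\ref{BCBM-P-Lemma 4} by exhibiting a single elementary event whose probability already attains the claimed exponential order. Write $N_+$ and $N_-$ for the number of indices $i\in[m]$ with $S_i=+1$ and $S_i=-1$ respectively, so that $\sum_{i=1}^m S_i = N_+ - N_-$ and $(N_+,N_-)$ is multinomial with $m$ trials and cell probabilities $p(1-\xi)$, $p\xi$. It then suffices to produce integers $k,k'$ with $k'-k\ge f(n)$ such that
\[
\mathbb{P}(N_+=k,\,N_-=k') \ge n^{-a(\sqrt{1-\xi}-\sqrt{\xi})^2+o(1)},
\]
because on that event $\sum_{i=1}^m S_i = k-k' \le -f(n)$. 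Here we assume $\xi\in(0,\tfrac12]$ (the estimate is degenerate at $\xi=0$ and trivial at $\xi=\tfrac12$, where $\sum_i S_i$ is symmetric with zero mean).

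For the right choice of $(k,k')$ I would be guided by the exponential tilt that brings the mean of $S_i$ just below zero; the tilted law puts mass $p\sqrt{\xi(1-\xi)}\,(1+o(1))$ on each of $+1$ and $-1$, which suggests $k=\lfloor a\sqrt{\xi(1-\xi)}\,\log n\rfloor$ and $k'=k+\lceil f(n)\rceil$. Writing $\rho\triangleq a\sqrt{\xi(1-\xi)}$, one has $k=\rho\log n\,(1+o(1))$ and, since $f(n)=o(\log n)$, also $k'=\rho\log n\,(1+o(1))$; moreover $k+k'=O(\log n)=o(m)$, while $m-n=o(n)$ gives $mp=a\log n\,(1+o(1))$.

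The core of the argument is a direct estimate of
\[
\mathbb{P}(N_+=k,\,N_-=k') = \frac{m!}{k!\,k'!\,(m-k-k')!}\,\big(p(1-\xi)\big)^{k}\big(p\xi\big)^{k'}(1-p)^{m-k-k'}.
\]
Taking logarithms and using $\log\frac{m!}{(m-k-k')!}=(k+k')\log m+o(1)$, $(m-k-k')\log(1-p)=-mp+o(\log n)=-a\log n\,(1+o(1))$, Stirling in the form $\log k!=k\log k-k+O(\log k)$ and likewise for $k'!$, along with $\log p=\log\log n-\log n+O(1)$ and $\log k=\log k'=\log\log n+O(1)$, the terms of order $\log n\cdot\log\log n$ cancel, as do the terms proportional to $\log(\xi(1-\xi))$ (these come from $-k\log k-k'\log k'$ against $k\log(1-\xi)+k'\log\xi$), leaving
\[
\log\mathbb{P}(N_+=k,\,N_-=k') = (k+k')-mp+o(\log n) = \big(2\sqrt{\xi(1-\xi)}-1\big)a\log n + o(\log n),
\]
which equals $-a(\sqrt{1-\xi}-\sqrt{\xi})^2\log n+o(\log n)$, as required.

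The main obstacle is the bookkeeping in that last display: several groups of terms are individually of order $\log n\cdot\log\log n$ and must be shown to cancel exactly, so the expansions $\log k=\log\log n+O(1)$ and $\log(mp)=\log\log n+O(1)$ have to be carried to the right precision. One also has to confirm that the slack $f(n)$ — which is $\omega(1)$ but $o(\log n)$ — is large enough to force $k'-k\to\infty$ (hence $\sum_i S_i\le-f(n)$) yet too small to affect the leading exponent. The remaining points are routine: that $(k,k')$ as chosen is, up to lower-order terms, the maximizer of the atom probability subject to $k'-k=\lceil f(n)\rceil$ follows by differentiating the log-probability in $k$, or can simply be read off from the computation above.
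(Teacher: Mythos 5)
Your proposal is correct, but note that the paper itself does not prove this statement at all: it is quoted verbatim from \cite[Lemma 8]{Ref19}, so the comparison is between your self-contained argument and an external citation. What you do is the standard large-deviations lower-bound device of exhibiting a single atom at the tilted mean: since $(N_+,N_-)$ is multinomial with cell probabilities $p(1-\xi),p\xi$, the single event $\{N_+=k,\,N_-=k+\lceil f(n)\rceil\}$ with $k=\lfloor a\sqrt{\xi(1-\xi)}\log n\rfloor$ is contained in $\{\sum_i S_i\le -f(n)\}$, and your Stirling bookkeeping is right: the $\log n\cdot\log\log n$ terms from $(k+k')\log(mp)$ and $-k\log k-k'\log k'$ cancel, the $\log\xi,\log(1-\xi)$ terms cancel up to a residue $\tfrac{k'-k}{2}\log\tfrac{\xi}{1-\xi}=o(\log n)$ because $f(n)=o(\log n)$, and what remains is $(k+k')-mp+o(\log n)=-a(\sqrt{1-\xi}-\sqrt{\xi})^2\log n+o(\log n)$; the claimed optimality of $(k,k')$ is never needed for a lower bound. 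This buys a fully elementary, self-contained proof, at the modest price that it only works because the target precision is $n^{o(1)}$ (all polynomial and sub-polynomial factors are absorbed), which is exactly what the lemma asks. Two small caveats: your dismissal of $\xi=\tfrac12$ "by symmetry" is not quite enough, since symmetry controls the tail at $0$ but not at $-f(n)$ (which is several standard deviations out); however your main computation applies verbatim at $\xi=\tfrac12$, so no separate case is needed. And you are right that $\xi=0$ must be excluded --- there the left-hand side is identically zero --- a restriction inherited from the cited reference rather than introduced by your argument.
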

Using Lemma~\ref{BCBM-P-Lemma 6} and since $  \{ \sum_{ j \in H^{c} } G_{ij} x_{j}^{*} x_{i}^{*}    \}_{i \in H}$ are mutually independent, 
\begin{align}
\label{BCBM-P-equ9}
\mathbb{P}  ( E_{2}  ) & =1 - \prod_{i\in H}  \Bigg[ 1- \mathbb{P}  \bigg( \sum_{ j \in H^{c} } G_{ij} x_{j}^{*} x_{i}^{*} \leq -\delta, y_{i}=0  \bigg)  \Bigg]  \nonumber\\
&\geq 1 -  \Big[ 1- \epsilon n^{-a ( \sqrt{1-\xi}-\sqrt{\xi}  )^{2}+o(1)}  \Big]^{  | H   |} .
\end{align}
Since $\beta = \lim_{n \rightarrow \infty} -\frac{\log \epsilon}{\log n}$, it follows from~\eqref{BCBM-P-equ9} that
\begin{align}
\label{BCBM-P-equ11}
\mathbb{P}  ( E_{2}  ) &\geq 1 - \Big[ 1- n^{-\beta -a ( \sqrt{1-\xi}-\sqrt{\xi}  )^{2}+o(1)} \Big]^{  | H   |} \nonumber\\
&\geq 1 - \exp  \Big( - n^{1-\beta -a ( \sqrt{1-\xi}-\sqrt{\xi}  )^{2}+o(1)}  \Big) ,
\end{align}
using $1+x \leq e^{x}$. From~\eqref{BCBM-P-equ11}, if $a ( \sqrt{1-\xi}-\sqrt{\xi}  )^{2}+\beta <1$, then $\mathbb{P}  ( E_{2}  )  \rightarrow 1$. Therefore, $\mathbb{P}  ( F  )  \rightarrow 1$ and Theorem~\ref{Theorem 2} follows.

\section{Proof of Theorem~\ref{Theorem 3}}
\label{Proof-Theorem-3}
We begin by deriving sufficient conditions for the \ac{SDP} estimator to produce the true labels $X^*$.

\begin{Lemma}
\label{BCBM-N-Lemma 1}
For the optimization problem~\eqref{BCBM-N-equ3}, consider the Lagrange multipliers
\begin{equation*}
D^{*}=\mathrm{diag}(d_{i}^{*}), \qquad 
S^{*}\triangleq \begin{bmatrix} S_{A}^{*} & S_{B}^{*T} \\ S_{B}^{*} & S_{C}^{*} \end{bmatrix}.
\end{equation*}
If we have
\begin{align*}
&S_{A}^{*} =  T_{2}Y^{T}X^{*} , \\
&S_{B}^{*} = -T_{2}Y , \\
&S_{C}^{*} = D^{*}-T_{1}G, \\
&S^{*} \succeq 0, \\
&\lambda_{2}(S^{*}) >  0, \\
&S^{*} [1, X^{*T}]^T =0
\end{align*}
then $(D^* , S^*)$ is the dual optimal solution and $\Zsdp=X^{*}X^{*T}$ is the unique primal optimal solution of~\eqref{BCBM-N-equ3}.
\end{Lemma}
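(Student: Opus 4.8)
The plan is to run the standard Lagrangian dual-certificate argument, exactly parallel to the proof of Lemma~\ref{BCBM-P-Lemma 1}, but carried out in the lifted $(n+1)\times(n+1)$ space. First I would rewrite \eqref{BCBM-N-equ3} in the compact form: maximize $\langle \tilde G, M\rangle$ over $M\succeq 0$ with $M_{ii}=1$ for every index $i\in\{0,1,\dots,n\}$, where $M=\begin{bmatrix}1 & X^{T}\\ X & Z\end{bmatrix}$ and $\tilde G\triangleq\begin{bmatrix}0 & T_{2}Y^{T}\\ T_{2}Y & T_{1}G\end{bmatrix}$, so that $T_{1}\langle G,Z\rangle+2T_{2}X^{T}Y=\langle\tilde G,M\rangle$ and the candidate optimum is the rank-one $M^{*}\triangleq v^{*}v^{*T}$ with $v^{*}\triangleq[1,X^{*T}]^{T}$, which encodes $Z^{*}=X^{*}X^{*T}$. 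The Lagrangian is $L(M,S,\Lambda)=\langle\tilde G,M\rangle+\langle S,M\rangle-\langle\Lambda,M-\mathbf{I}\rangle$ with $S\succeq 0$ dual to the PSD constraint and $\Lambda$ a free diagonal matrix dual to the $n+1$ unit-diagonal constraints; write $\Lambda^{*}=\mathrm{diag}(\lambda_{0}^{*},d_{1}^{*},\dots,d_{n}^{*})$, whose lower $n\times n$ block is the lemma's $D^{*}$.

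The one computation that makes everything go through is that the stated block identities are exactly the dual-feasibility relation $S^{*}=\Lambda^{*}-\tilde G$ with $\lambda_{0}^{*}=T_{2}Y^{T}X^{*}$: indeed $S_{A}^{*}=\lambda_{0}^{*}-0$, $S_{B}^{*}=0-T_{2}Y$, $S_{C}^{*}=D^{*}-T_{1}G$, so the off-diagonal blocks of $S^{*}$ and $\tilde G$ cancel. Then for any feasible $M$, $\langle\tilde G,M\rangle\le\langle\tilde G,M\rangle+\langle S^{*},M\rangle=\langle\Lambda^{*},M\rangle=\trace(\Lambda^{*})$, where the inequality uses $S^{*}\succeq0$, $M\succeq0$ and the last equality uses $M_{ii}=1$ for all $i$. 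Running the same chain on $M^{*}$ and invoking the complementary-slackness hypothesis $S^{*}v^{*}=0$ (hence $\langle S^{*},M^{*}\rangle=v^{*T}S^{*}v^{*}=0$) gives $\langle\tilde G,M^{*}\rangle=\trace(\Lambda^{*})$, so $M^{*}$ attains the bound and is primal optimal; weak duality then certifies $(D^{*},S^{*})$ as dual optimal.

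For uniqueness, suppose $\tilde M$ is another optimizer; forcing equality through the displayed inequalities yields $\langle S^{*},\tilde M\rangle=0$, and since $S^{*},\tilde M\succeq0$ this gives $S^{*}\tilde M=0$, i.e. $\mathrm{range}(\tilde M)\subseteq\ker S^{*}$. Because $\lambda_{2}(S^{*})>0$ while $S^{*}v^{*}=0$, the kernel of $S^{*}$ is exactly the line $\mathbb{R}v^{*}$, so $\tilde M$ is a scalar multiple of $M^{*}=v^{*}v^{*T}$; the constraint $\tilde M_{ii}=1$ pins the scalar to $1$, whence $\tilde M=M^{*}$ and $\Zsdp=Z^{*}$.

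I do not anticipate a substantive obstacle here — the lemma is convex-duality bookkeeping — but the points requiring care are all tied to the lifted block structure: tracking the extra $0$-indexed row and column, verifying that the off-diagonal blocks of $S^{*}$ and $\tilde G$ cancel exactly, and noticing that the first coordinate of $S^{*}v^{*}=0$ is automatic from the choice $S_{A}^{*}=T_{2}Y^{T}X^{*}$, while its remaining $n$ coordinates, namely $D^{*}X^{*}=T_{1}GX^{*}+T_{2}Y$, are precisely what will be used to define $d_{i}^{*}$ and then shown to satisfy $S^{*}\succeq0$, $\lambda_{2}(S^{*})>0$ with high probability in the body of the proof of Theorem~\ref{Theorem 3}.
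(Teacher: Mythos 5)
Your proposal is correct and is essentially the paper's own argument: the paper likewise uses the Lagrangian of the lifted problem with $S$ dual to the PSD constraint on $\begin{bmatrix}1 & X^{T}\\ X & Z\end{bmatrix}$, derives the same weak-duality chain $\langle\tilde G,M\rangle\le\trace(\Lambda^{*})$ with equality at $M^{*}=v^{*}v^{*T}$ via $S^{*}v^{*}=0$, and proves uniqueness from $\langle S^{*},\tilde M\rangle=0$ together with $\lambda_{2}(S^{*})>0$ and the unit diagonal. Your only cosmetic difference is packaging the data as $\tilde G$ and giving the corner constraint $M_{00}=1$ its own multiplier $\lambda_{0}^{*}=T_{2}Y^{T}X^{*}$, which the paper absorbs directly into $S_{A}^{*}$.
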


\begin{proof}
Define
\begin{equation*}
    H \triangleq  \begin{bmatrix} 1 & X^{T}\\  X & Z \end{bmatrix}.
\end{equation*}
The Lagrangian of~\eqref{BCBM-N-equ3} is given by
\begin{equation*}
L(Z,X,S,D)=T_{1} \langle G,Z \rangle +2T_{2} \langle Y, X \rangle +\langle S, H \rangle -\langle D,Z-\mathbf{I} \rangle ,
\end{equation*}
where $S\succeq 0$ and $D=\mathrm{diag}(d_{i})$ are Lagrange multipliers. For any $Z$ that satisfies the constraints in~\eqref{BCBM-N-equ3}, 
\begin{align*}
T_{1} \langle G,Z \rangle+2T_{2} \langle Y,X \rangle &\overset{(a)}{\leq} L(Z,X,S^{*},D^{*})\\
&=\langle D^{*},\mathbf{I}\rangle +S_{A}^{*}\\
&\overset{(b)}{=}\langle D^{*},Z^{*}\rangle -\langle S_{B}^{*} , X^{*} \rangle \\
&=\langle S_{C}^{*}+T_{1}G,Z^{*}\rangle - \langle S_{B}^{*} , X^{*} \rangle \\
&\overset{(c)}{=}T_{1}\langle G, Z^{*}\rangle -2\langle S_{B}^{*} , X^{*} \rangle \\
&\overset{(d)}{=}T_{1}\langle G, Z^{*}\rangle + 2T_{2}\langle Y , X^{*} \rangle ,
\end{align*}
where $(a)$ holds because $\langle S^{*},H \rangle \geq 0$, $(b)$ holds because $Z_{ii}=1$ for all $i \in [n]$ and $S_{A}^{*}=-S_{B}^{*T}X^{*}$, $(c)$ holds because $S_{B}^{*}=-S_{C}^{*}X^{*}$, and $(d)$ holds because $S_{B}^{*}=-T_{2}Y$. Therefore, $Z^{*}=X^{*}X^{*T}$ is a primal optimal solution. Now, assume $\tilde{Z}$ is another optimal solution. 
\begin{align*}
\langle S^{*}, & \tilde{H} \rangle =S_{A}^{*} + 2\langle S_{B}^{*},\tilde{X} \rangle  + \langle D^{*} -T_{1}G , \tilde{Z} \rangle \\
&\overset{(a)}{=}S_{A}^{*} + 2\langle S_{B}^{*},X^{*} \rangle +\langle D^{*},Z^{*}\rangle -T_{1}\langle G,Z^{*}\rangle\\
&=\langle S^{*}, H^{*} \rangle=0
\end{align*}
where $(a)$ holds because $\langle G,Z^{*}\rangle=\langle G,\tilde{Z} \rangle$, $Z_{ii}^{*}=\tilde{Z}_{ii}=1$ for all $i\in [n]$, and $\langle S_{B}^{*},X^{*} \rangle = \langle S_{B}^{*},\tilde{X} \rangle$. Since $\tilde{H} \succeq 0$ and $S^{*}\succeq 0$ while its second smallest eigenvalue $\lambda_{2}(S^{*})$ is positive, $\tilde{H}$ must be a multiple of $H^{*}$. Also, since $\tilde{Z}_{ii}=Z_{ii}^{*}=1$ for all $i \in [n]$, we have $\tilde{H}=H^{*}$.
\end{proof}

We now show that $S^{*}$ defined by $S_{A}^{*}$, $S_{B}^{*}$, and $S_{C}^{*}$ satisfies other conditions in Lemma~\ref{BCBM-N-Lemma 1} with probability $1-o(1)$. 
Let 
\begin{equation}
\label{BCBM-N-equ44}
d_{i}^{*}=T_{1} \sum_{j=1}^{n} G_{ij}x_{j}^{*}x_{i}^{*} + T_{2}y_{i}x_{i}^{*}.
\end{equation}
Then $D^{*}X^{*} = T_{1}GX^{*}+T_{2}Y$ and based on the definitions of $S_{A}^{*}$, $S_{B}^{*}$, and $S_{C}^{*}$ in  Lemma~\ref{BCBM-N-Lemma 1}, $S^{*}$ satisfies the condition $S^{*} [1, X^{*T}]^T =0$.
It remains to show that $S^{*}\succeq 0$ and $\lambda_{2}(S^{*})>0$ with probability  $1-o(1)$. In other words, we need to show that
\begin{equation}
\label{BCBM-N-equ1 New}
\mathbb{P}  \bigg( \underset{V\perp [1, X^{*T}]^T,  \| V   \|=1}{\inf} V^{T}S^{*}V>0   \bigg) \geq 1-o(1) ,
\end{equation}
where $V$ is a vector of length $n+1$. Let $V \triangleq [v, U^{T}]^{T}$, where $v$ is a scalar and $U \triangleq [u_{1}, u_{2}, \cdots,u_{n}]^{T}$. For any $V$ such that $V^{T}[1, X^{*T}]^T=0$ and $  \| V   \|=1$, we have
\begin{align}
\label{BCBM-N-equ4}
V&^{T}S^{*}V =v^{2} S_{A}^{*} -2T_{2}vU^{T}Y +U^{T}D^{*}U - T_{1}U^{T}GU \nonumber\\
\geq&   ( 1-v^{2}   )  \bigg[\min_{i \in [n]} d_{i}^{*} - T_{1}  \| G-\mathbb{E}[G]   \| + T_{1} p(1-2\xi) \bigg] \nonumber\\
&+v^{2}  \bigg[ T_{2}Y^{T}X^{*} -2T_{2}\frac{\sqrt{n(1-v^{2})}}{|v|}-T_{1}p(1-2\xi)   \bigg] ,
\end{align}
where the last inequality holds because
\begin{equation*}
    U^{T}D^{*}U \geq   ( 1-v^{2}   ) \min_{i \in [n]} d_{i}^{*} ,
\end{equation*}
\begin{equation*}
    U^{T}(G-\mathbb{E}[G])U \leq    ( 1-v^{2}   )   \| G-\mathbb{E}[G]   \| ,
\end{equation*}
\begin{equation*}
    vU^{T}Y \leq |v| \sqrt{n(1-v^{2})} .
\end{equation*}
\begin{Lemma}
\label{BCBM-New Lemma}
Under the noisy label side information with noise parameter $\alpha$, 
\begin{equation*}
    \mathbb{P} \Bigg(\sum_{i=1}^{n} x_{i}^{*}y_{i} \leq \sqrt{n}\log n \Bigg) \leq e^{n\Big(\log \big(2\sqrt{\alpha(1-\alpha)}\big)+o(1) \Big)} .
\end{equation*}
\end{Lemma}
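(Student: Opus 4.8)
The plan is a direct application of the Chernoff bound to the lower tail of a sum of i.i.d.\ bounded random variables. First I would observe that $S_i \triangleq x_i^* y_i$ takes the value $+1$ with probability $1-\alpha$ and $-1$ with probability $\alpha$, and that $\{S_i\}_{i=1}^n$ are mutually independent because each $y_i$ is, conditionally on $x_i^*$, a noisy copy of $x_i^*$ drawn independently across nodes. The target event $\{\sum_i S_i \le \sqrt{n}\log n\}$ is a lower-tail large-deviation event, since $\mathbb{E}\big[\sum_i S_i\big] = n(1-2\alpha)$ grows linearly in $n$ while $\sqrt{n}\log n = o(n)$.

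For any $t>0$, Markov's inequality applied to $e^{-t\sum_i S_i}$ gives $\mathbb{P}\big(\sum_i S_i \le s\big) \le e^{ts}\,\big(\mathbb{E}[e^{-tS_1}]\big)^n$ with $s = \sqrt{n}\log n$. I would then compute $\mathbb{E}[e^{-tS_1}] = (1-\alpha)e^{-t} + \alpha e^{t}$ and minimize over $t>0$; the minimizer is $e^{t} = \sqrt{(1-\alpha)/\alpha}$, which is positive since $\alpha < \tfrac12$, and at this point $\mathbb{E}[e^{-tS_1}] = (1-\alpha)\sqrt{\alpha/(1-\alpha)} + \alpha\sqrt{(1-\alpha)/\alpha} = 2\sqrt{\alpha(1-\alpha)}$.

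Substituting this choice of $t$ back into the Chernoff bound yields $\mathbb{P}\big(\sum_i S_i \le \sqrt{n}\log n\big) \le \big(2\sqrt{\alpha(1-\alpha)}\big)^n\, e^{t\sqrt{n}\log n}$. Since the optimal $t$ is a fixed constant depending only on $\alpha$, we have $t\sqrt{n}\log n = n\cdot o(1)$, so the prefactor is $e^{n\,o(1)}$ and the bound becomes $e^{n(\log(2\sqrt{\alpha(1-\alpha)}) + o(1))}$, which is exactly the claimed inequality.

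There is essentially no obstacle here; the only point that warrants a moment's care is noting that the optimizing exponent $t$ does not depend on $n$, so that the $e^{ts}$ prefactor with $s = \sqrt{n}\log n = o(n)$ is correctly absorbed into the $o(1)$ term in the exponent rather than altering the leading exponential rate $\log\big(2\sqrt{\alpha(1-\alpha)}\big)$.
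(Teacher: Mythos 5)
Your proof is correct and follows exactly the route the paper intends: its proof of this lemma is just the one-line appeal ``It follows from Chernoff bound,'' and your argument (i.i.d.\ $\pm 1$ variables $x_i^*y_i$, exponential Markov inequality, optimizing $e^{t}=\sqrt{(1-\alpha)/\alpha}$ to get the rate $2\sqrt{\alpha(1-\alpha)}$, and absorbing $e^{t\sqrt{n}\log n}$ into the $o(1)$) is precisely that Chernoff computation carried out in full.
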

\begin{proof}
It follows from Chernoff bound. 
\end{proof}
Using Lemma~\ref{BCBM-New Lemma}, it can be shown that with probability converging to one, $\sum_{i=1}^{n} x_{i}^{*}y_{i} \geq \sqrt{n}\log n$.
Thus, 
\begin{equation*}
v^{2}  \bigg[ T_{2}\sqrt{n}\log n -2T_{2}\frac{\sqrt{n(1-v^{2})}}{|v|}-T_{1}p(1-2\xi)   \bigg] \geq 0 ,
\end{equation*}
as $n \rightarrow \infty$. Applying Lemma~\ref{BCBM-P-Lemma 2}, 
\begin{align}
\label{BCBM-N-equ5}
V^{T}S^{*}V & \geq  ( 1-v^{2}   )    \Big( \min_{i \in [n]} d_{i}^{*} - T_{1} c^{'}\sqrt{\log n} +T_{1}p(1-2\xi) \Big) .
\end{align}

\begin{Lemma}
\label{BCBM-N-Lemma 2}
Consider a sequence $f(n)$, and for each $n$ a sequence of i.i.d. random variables  $\{S_1,\ldots,S_m\}$ with distribution $p_{1} \delta_{+1} + p_{2} \delta_{-1} +(1-p_{1}-p_{2}) \delta_{0}$, where the parameters of the distribution depend on $n$ via $p_{1} = \rho_{1}\frac{\log n}{n}$, and $p_{2} = \rho_{2}\frac{\log n}{n}$ for some positive constants $\rho_{1}, \rho_{2}$. We assume $m(n)-n=o(n)$, where in the sequel the dependence of $m$ on $n$ is implicit. Define $\omega \triangleq \lim_{n\rightarrow\infty} \frac{f(n)}{\log n}$. 
For sufficiently large $n$, when $\omega<\rho_1-\rho_2 $,
\begin{equation}
 \mathbb{P} \Bigg(\sum_{i=1}^{m} S_{i} \leq f(n) \Bigg) \leq n^{-\eta^{*}+o(1)} ,
\label{eq:UpperUpper}\end{equation}
and when $\omega >\rho_1-\rho_2$,
\begin{equation}
\mathbb{P}\Bigg(\sum_{i=1}^{m} S_{i} \geq f(n) \Bigg) = n^{-\eta^{*}+o(1)}, 
\label{eq:LowerUpper}
\end{equation}
where $\eta^{*} = \rho_{1} + \rho_{2} -\gamma^{*} +\frac{\omega}{2} \log  \Big( \frac{\rho_{2} (\gamma^{*} + \omega)}{\rho_{1}(\gamma^{*} - \omega)}  \Big)$ and $\gamma^{*} = \sqrt{\omega^{2}+4\rho_{1}\rho_{2}}$.
\end{Lemma}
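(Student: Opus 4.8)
The plan is to establish both bounds by the exponential (Chernoff--Cram\'er) method, using crucially that $p_1,p_2=\Theta(\log n/n)\to 0$ so that the log--moment generating function linearizes. Write $\Lambda_n(s)\triangleq\log\mathbb{E}[e^{sS_i}]=\log(1+p_1(e^s-1)+p_2(e^{-s}-1))$. For any fixed $s$, since $p_1,p_2\to 0$ we have $\Lambda_n(s)=p_1(e^s-1)+p_2(e^{-s}-1)+O((\log n/n)^2)$, and hence, using $m=n+o(n)$ with $p_1=\rho_1\log n/n$ and $p_2=\rho_2\log n/n$, $m\Lambda_n(s)=(\rho_1(e^s-1)+\rho_2(e^{-s}-1))\log n\,(1+o(1))$. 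Since $\log\mathbb{E}[e^{s\sum_i S_i}]=m\Lambda_n(s)$, Markov's inequality gives $\mathbb{P}(\sum_i S_i\le f(n))\le\exp(-sf(n)+m\Lambda_n(s))$ for every $s<0$, and $\mathbb{P}(\sum_i S_i\ge f(n))\le\exp(-sf(n)+m\Lambda_n(s))$ for every $s>0$.

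First I would minimize the exponent $g(s)\triangleq-\omega s+\rho_1(e^s-1)+\rho_2(e^{-s}-1)$, which is convex in $s$. The stationarity equation $\rho_1e^s-\rho_2e^{-s}=\omega$ is quadratic in $u=e^s$, with relevant root $u^\star=(\omega+\gamma^\star)/(2\rho_1)$ where $\gamma^\star=\sqrt{\omega^2+4\rho_1\rho_2}$; using $(\gamma^\star-\omega)(\gamma^\star+\omega)=4\rho_1\rho_2$ one checks that $u^\star<1$ (so the minimizer $s^\star<0$) exactly when $\omega<\rho_1-\rho_2$, and $u^\star>1$ (so $s^\star>0$) exactly when $\omega>\rho_1-\rho_2$, i.e., the optimal tilt always lands on the correct side of the mean $m(p_1-p_2)\sim(\rho_1-\rho_2)\log n$. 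Substituting $u^\star$ and simplifying with the same identity collapses $g(s^\star)$ to $-(\rho_1+\rho_2-\gamma^\star+\tfrac{\omega}{2}\log\tfrac{\rho_2(\gamma^\star+\omega)}{\rho_1(\gamma^\star-\omega)})=-\eta^\star$, so the Chernoff bound delivers $n^{-\eta^\star+o(1)}$ in both \eqref{eq:UpperUpper} and \eqref{eq:LowerUpper}; the $o(1)$ is uniform because $s^\star$ has a finite limit determined by $\omega,\rho_1,\rho_2$.

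For the matching \emph{lower} bound in \eqref{eq:LowerUpper} (the case $\omega>\rho_1-\rho_2$), I would pass to the counts $A\triangleq\#\{i:S_i=+1\}$ and $B\triangleq\#\{i:S_i=-1\}$, jointly multinomial with means $mp_1\sim\rho_1\log n$ and $mp_2\sim\rho_2\log n$. Since these means are $\Theta(\log n)\ll m$, $(A,B)$ is within $o(1)$ total variation of a pair of independent Poisson variables with parameters $\rho_1\log n$ and $\rho_2\log n$, so it suffices to lower bound $\mathbb{P}(A-B\ge f(n))$ for that pair. Taking the saddle-point values $a\approx\tfrac{\omega+\gamma^\star}{2}\log n$ and $b\approx\tfrac{\gamma^\star-\omega}{2}\log n$ (which satisfy $a-b\ge f(n)$), the single atom $\mathbb{P}(A=a)\mathbb{P}(B=b)$ already equals $n^{-\eta^\star+o(1)}$ by Stirling's formula applied to the Poisson pmf, matching the upper bound. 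Equivalently, one may tilt each $S_i$ by $s^\star$ so that the tilted mean of $\sum_iS_i$ is $\approx f(n)$, observe that $\{\sum_iS_i\ge f(n)\}$ then has probability $\Omega(1)$ under the tilted law by a local central limit estimate on a window of width $\Theta(\sqrt{\log n})$, and read the factor $e^{-\eta^\star\log n+o(\log n)}$ off the Radon--Nikodym derivative.

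The main obstacle I anticipate is bookkeeping rather than ideas: $\{S_i\}$ is a triangular array, so Cram\'er's theorem does not apply off the shelf, and one must verify that every polynomial-in-$\log n$ prefactor --- Poisson normalizations, the multinomial coefficient, the central-limit window --- is $n^{o(1)}$ and hence does not perturb the exponent $\eta^\star$. Once the Poisson approximation is in place this reduces to routine Stirling estimation, entirely parallel to the symmetric special cases already invoked in Lemmas~\ref{BCBM-P-Lemma 4} and~\ref{BCBM-P-Lemma 6}, now carried out with asymmetric rates $\rho_1\neq\rho_2$ and the shifted target $f(n)\sim\omega\log n$.
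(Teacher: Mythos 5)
Your Chernoff calculation is exactly the content of the paper's (very terse) proof of the upper bounds: the paper simply says ``apply the Chernoff bound,'' and your linearization of the log-MGF for the triangular array, the quadratic in $u=e^s$ with root $u^\star=(\omega+\gamma^\star)/(2\rho_1)$, the sign check via $\omega\lessgtr\rho_1-\rho_2$, and the collapse of the optimized exponent to $\eta^\star$ through $(\gamma^\star+\omega)(\gamma^\star-\omega)=4\rho_1\rho_2$ are all correct (I verified the algebra). Where you genuinely differ is the matching lower bound in \eqref{eq:LowerUpper}: the paper does not prove it, it sandwiches using a lower bound imported from \cite[Lemma 15]{Ref7}, whereas you derive it from scratch. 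That is a legitimate and more self-contained route, and your saddle-point atom $a\approx\tfrac{\omega+\gamma^\star}{2}\log n$, $b\approx\tfrac{\gamma^\star-\omega}{2}\log n$ does carry exactly the exponent $-\eta^\star$ (again the identity $c_1c_2=\rho_1\rho_2$ with $c_1-c_2=\omega$ makes the bookkeeping close).

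One step of your primary lower-bound route would fail as written, however: transferring the Poisson atom estimate back to the multinomial pair $(A,B)$ via total variation. The TV error is additive, of order $m(p_1+p_2)^2=n^{-1+o(1)}$, while the target probability is $n^{-\eta^\star+o(1)}$; since $\eta^\star$ can exceed $1$ for admissible $(\omega,\rho_1,\rho_2)$ (and the lemma claims equality for \emph{all} $\omega>\rho_1-\rho_2$), an $o(1)$ or even $n^{-1+o(1)}$ additive error cannot justify the lower bound in general. The repair is cheap and already implicit in your write-up: either bound the multinomial point probability $\binom{m}{a,\,b,\,m-a-b}p_1^a p_2^b(1-p_1-p_2)^{m-a-b}$ directly by Stirling --- with $a,b=\Theta(\log n)$ it equals the Poisson product up to $n^{o(1)}$ \emph{multiplicative} factors, which is what you actually need --- or run your ``equivalently'' tilting argument, but tilt at the $n$-dependent saddle point solving $\rho_1e^{s}-\rho_2e^{-s}=f(n)/\log n$ (not at the limiting $\omega$, so the tilted mean tracks $f(n)$ to within the $O(\sqrt{\log n})$ window), with Berry--Esseen for the triangular array giving the $\Omega(1)$ window probability. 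With either fix your argument is complete and, for the lower bound, strictly more self-contained than the paper's citation-based proof.
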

\begin{proof}
Inequality~\eqref{eq:UpperUpper} is derived by applying Chernoff bound. Equality~\eqref{eq:LowerUpper} is obtained by a sandwich argument on the probability: an upper bound derived via Chernoff bound, and a lower bound from~\cite[Lemma 15]{Ref7}. 
\end{proof}
It follows from~\eqref{BCBM-N-equ44} that
\begin{align*}
\mathbb{P} ( d_{i}^{*}\leq \delta  ) =& \mathbb{P}  \Bigg( \sum_{j=1}^{n} G_{ij}x_{i}^{*}x_{j}^{*} \leq \frac{\delta-T_{2}}{T_{1}}   \Bigg)  (1-\alpha  ) \\
&+ \mathbb{P}  \Bigg( \sum_{j=1}^{n} G_{ij}x_{i}^{*}x_{j}^{*} \leq \frac{\delta+T_{2}}{T_{1}}   \Bigg) \alpha , 
\end{align*}
where $\sum_{j=1}^{n} G_{ij}x_{i}^{*}x_{j}^{*}$ is equal in distribution to $\sum_{i=1}^{n-1} S_{i}$ in Lemma~\ref{BCBM-N-Lemma 2} with $p_{1} = p(1-\xi)$ and $p_{2}=p\xi$.

Recall that $\beta \triangleq \lim_{n \rightarrow \infty} \frac{T_{2}}{\log n}$, where $\beta \geq 0$.
First, we bound $\min_{i \in [n]}d_{i}^{*}$ under the condition $0\leq \beta<aT_{1}(1-2\xi)$. It follows from Lemma~\ref{BCBM-N-Lemma 2} that
\begin{align*}
&\mathbb{P}  \Bigg( \sum_{j=1}^{n} G_{ij}x_{i}^{*}x_{j}^{*} \leq \frac{\delta-T_{2}}{T_{1}}   \Bigg) \leq n^{-\eta(a,\beta)+o(1)}, \\
& \mathbb{P}  \Bigg( \sum_{j=1}^{n} G_{ij}x_{i}^{*}x_{j}^{*} \leq \frac{\delta+T_{2}}{T_{1}}   \Bigg) \leq n^{-\eta(a,\beta)+\beta+o(1)} . 
\end{align*}
Then
\begin{align*}
\mathbb{P} ( d_{i}^{*} \leq \delta  ) &\leq n^{-\eta(a,\beta)+o(1)} (1-\alpha ) + n^{-\eta(a,\beta)+\beta+o(1)} \alpha \\
& =n^{-\eta(a,\beta) +o(1)} .
\end{align*}
Using the union bound, 
\begin{equation*}
\mathbb{P} \bigg( \min_{i \in [n]}d_{i}^{*} \geq \frac{\log n}{\log \log n}   \bigg) \geq 1-n^{1-\eta(a, \beta)+o(1)} .
\end{equation*}
When $\eta(a,\beta)>1 $, it follows  $\min_{i \in [n]}d_{i}^{*} \geq \frac{\log n}{\log \log n}$ with probability $1-o(1)$. Thus, as long as $\eta(a,\beta)>1$, we can replace $\min d_i^*$  in~\eqref{BCBM-N-equ5} with $\frac{\log n}{\log \log n}$ and obtain, with probability $1-o(1)$:
\begin{align*}
V^{T}S^{*}V \geq&    ( 1-v^{2}  )   \bigg( \frac{\log n}{\log \log n} - T_{1} c' \sqrt{\log n}+T_{1}p(1-2\xi)    \bigg)\\
> &0 , 
\end{align*}
which concludes the first part of Theorem~\ref{Theorem 3}.

We now bound $\min_{i \in [n]}d_{i}^{*}$ under the condition $\beta>aT_{1}(1-2\xi)$. It follows from Lemma~\ref{BCBM-N-Lemma 2} that
\begin{align*}
&\mathbb{P}  \Bigg( \sum_{j=1}^{n} G_{ij}x_{i}^{*}x_{j}^{*} \leq \frac{\delta-T_{2}}{T_{1}}   \Bigg) \leq n^{-\eta(a,\beta)+o(1)}, \\
&\mathbb{P}  \Bigg( \sum_{j=1}^{n} G_{ij}x_{i}^{*}x_{j}^{*} \leq \frac{\delta+T_{2}}{T_{1}}   \Bigg) \leq 1 .
\end{align*}
Then
\begin{equation*}
\mathbb{P} ( d_{i}^{*} \leq \delta  ) \leq n^{-\eta(a,\beta)+o(1)}+n^{-\beta+o(1)} , 
\end{equation*}
where $\alpha =  n^{-\beta +o(1)}$. 
Using the union bound,
\begin{equation*}
\mathbb{P} \Big( \min_{i \in [n]}d_{i}^{*} \geq \delta   \Big) \geq 1- \Big( n^{1-\eta(a,\beta)+o(1)}+n^{1-\beta+o(1)}  \Big) .
\end{equation*}

\begin{Lemma}
\label{BCBM-N-Lemma 3}
If $\beta > 1$, then $\eta(a,\beta) > 1$.
\end{Lemma}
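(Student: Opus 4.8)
The plan is to prove the stronger statement that $\eta(a,\beta)\ge\beta$ for \emph{every} $\beta\ge 0$, from which the lemma is immediate. The starting point is the variational form of $\eta(a,\beta)$ that is already produced by the Chernoff-bound step in the proof of Lemma~\ref{BCBM-N-Lemma 2}: specializing that computation to $\rho_1=a(1-\xi)$, $\rho_2=a\xi$, and $\omega=-\beta/T_1$ yields
\begin{equation*}
\eta(a,\beta)=\max_{\lambda\ge 0}\Big[\tfrac{\beta}{T_1}\lambda+a(1-\xi)\big(1-e^{-\lambda}\big)+a\xi\big(1-e^{\lambda}\big)\Big],
\end{equation*}
the maximizer being $e^{\lambda^{\ast}}=(\gamma+\beta)/(2\xi aT_1)$, which one checks reproduces the closed form in the statement of Theorem~\ref{Theorem 3} after using $\sqrt{\omega^2+4\rho_1\rho_2}=\gamma/T_1$ and $\rho_1+\rho_2=a$. (One should also note $e^{\lambda^{\ast}}\ge 1$, since $\gamma\ge 2aT_1\sqrt{\xi(1-\xi)}\ge 2\xi aT_1$ for $\xi\le\tfrac12$, so the constrained and unconstrained maxima agree.)

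With this representation in hand, the next step is to evaluate the bracketed objective at the feasible point $\lambda=T_1=\log\frac{1-\xi}{\xi}\ge 0$. Since $e^{T_1}=\frac{1-\xi}{\xi}$ and $e^{-T_1}=\frac{\xi}{1-\xi}$, the graph-dependent terms cancel,
\begin{equation*}
a(1-\xi)\Big(1-\tfrac{\xi}{1-\xi}\Big)+a\xi\Big(1-\tfrac{1-\xi}{\xi}\Big)=a(1-2\xi)-a(1-2\xi)=0,
\end{equation*}
leaving $\tfrac{\beta}{T_1}\cdot T_1=\beta$. Hence $\eta(a,\beta)\ge\beta$ for all $\beta\ge 0$, and in particular $\beta>1$ forces $\eta(a,\beta)>1$, which is the lemma.

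An equivalent and perhaps more illuminating route --- which also explains why the case split at $\beta=aT_1(1-2\xi)$ appears in Theorems~\ref{Theorem 3}--\ref{Theorem 4} --- is to observe that $\eta(a,\cdot)$, being a supremum of affine functions of $\beta$, is convex; differentiating the closed form gives $\partial_\beta\eta=\frac{1}{T_1}\log\frac{\gamma+\beta}{2\xi aT_1}$ and $\partial_\beta^{2}\eta=\frac{1}{T_1\gamma}>0$, and one verifies directly that $\eta\big(a,aT_1(1-2\xi)\big)=aT_1(1-2\xi)$ with $\partial_\beta\eta=1$ there (using $\gamma=aT_1$ at that point). Thus the line $y=\beta$ is tangent from below to the convex curve $\beta\mapsto\eta(a,\beta)$ at $\beta=aT_1(1-2\xi)$, so $\eta(a,\beta)\ge\beta$ everywhere. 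The only real bookkeeping, and the step I would be most careful about, is confirming that the Chernoff/variational expression and the closed-form $\eta$ genuinely coincide; establishing $\eta(a,\beta)\ge\beta$ straight from the closed form is possible but requires juggling the identity $(\gamma-\beta)(\gamma+\beta)=4\xi(1-\xi)a^{2}T_1^{2}$ and is noticeably messier, so the $\lambda=T_1$ substitution above is the cleanest path.
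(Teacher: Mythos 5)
Your proof is correct, and its main argument is genuinely different from the paper's. The paper works directly with the closed-form expression: it sets $\psi(a,\beta)=\eta(a,\beta)-\beta$, asserts convexity of $\psi$ in $\beta$, solves the stationarity condition to find the minimizer $\beta^{*}=(1-2\xi)aT_{1}$ (where $\gamma^{*}=aT_{1}$), and checks that $\psi$ vanishes there, giving $\eta(a,\beta)\geq\beta$ for all $\beta\geq 0$. You instead go back to the Chernoff/variational representation $\eta(a,\beta)=\max_{\lambda\ge0}\big[\tfrac{\beta}{T_1}\lambda+a(1-\xi)(1-e^{-\lambda})+a\xi(1-e^{\lambda})\big]$ (with $\rho_1=a(1-\xi)$, $\rho_2=a\xi$, $\omega=-\beta/T_1$, so $\gamma^{*}=\gamma/T_1$), note the feasibility of $\lambda=T_1$ and that the graph terms cancel there, and read off $\eta(a,\beta)\geq\beta$ with no calculus at all; I checked that the maximizer $e^{\lambda^{*}}=(\gamma+\beta)/(2\xi aT_1)\ge 1$ and that substituting it reproduces the paper's closed form, so the identification you flag as the "bookkeeping" step does hold. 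What each approach buys: the paper's argument is self-contained in the closed form but leans on an unproved "it can be shown" convexity claim and some algebraic manipulation of the stationarity conditions; yours replaces that with a single feasible-point evaluation of a supremum, at the cost of having to certify that the closed form and the variational expression coincide (which is implicitly done in the Chernoff step of Lemma~\ref{BCBM-N-Lemma 2} anyway). Your closing "equivalent route" — convexity of $\eta$ as a supremum of affine functions, with the line $y=\beta$ tangent at $\beta=aT_1(1-2\xi)$ — is essentially the paper's own argument, now with the convexity claim justified for free by the variational form.
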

\begin{proof}
Define $\psi(a,\beta) \triangleq \eta(a,\beta) -\beta$. It can be shown that $\psi(a,\beta)$ is a convex function in $\beta$. At the optimal $\beta^{*} $, $\log  \Big( \frac{(1-\xi) (\gamma^{*} + \beta^{*})}{\xi(\gamma^{*} - \beta^{*})}  \Big) = 2T_{1}$. Then
\begin{equation}
\label{BCBM-N-Lemma3-equ2}
\eta(a,\beta)  -\beta \geq a -\frac{\gamma^{*}}{T_{1}} .
\end{equation}
It can be shown that at the optimal $\beta^{*}$, 
\begin{equation*}
    \frac{\gamma^{*} +\beta^{*}}{\gamma^{*} - \beta^{*}} = \frac{1-\xi}{\xi} = \frac{4\xi(1-\xi)a^{2}T_{1}^{2}}{(\gamma^{*} - \beta^{*})^{2}} .
\end{equation*}
Then $\gamma^{*} = \beta^{*} +2\xi aT_{1}$ and~\eqref{BCBM-N-Lemma3-equ2} is written as
\begin{equation}
\label{BCBM-N-Lemma3-equ3}
\eta(a,\beta)  -\beta \geq a -2\xi a -\frac{\beta^{*}}{T_{1}} .
\end{equation}
Also, it can be shown that at $\beta^{*}$, $\gamma^{*} = \frac{\beta^{*}}{1-2\xi}$. This implies that $\beta^{*} =(1-2\xi)aT_{1}$. Substituting in~\eqref{BCBM-N-Lemma3-equ3} leads to $\eta(a,\beta)  - \beta \geq 0$, which implies that $\eta(a,\beta)  >1$ when $\beta >1$.
\end{proof}

When $\beta>1$, using Lemma~\ref{BCBM-N-Lemma 3}, it follows $\min_{i \in [n]}d_{i}^{*} \geq \frac{\log n}{\log \log n}$ with probability  $1-o(1)$. Substituting in~\eqref{BCBM-N-equ5}, if $\beta>1 $, with probability $1-o(1)$ we obtain: 
\begin{align*}
V^{T}S^{*}V \geq&    ( 1-v^{2}  )    \Big( \frac{\log n}{\log \log n} - T_{1} c' \sqrt{\log n}+T_{1}p(1-2\xi)    \Big) \\
>& 0 , 
\end{align*}
which concludes the second part of Theorem~\ref{Theorem 3}.

\section{Proof of Theorem~\ref{Theorem 4}}
\label{Proof-Theorem-4}
Since the prior distribution of $X^{*}$ is uniform, among all estimators, the maximum likelihood estimator minimizes the average error probability. Therefore, we only need to show that with high probability the maximum likelihood estimator fails. 
Let 
\begin{equation*}
F \triangleq \Bigg \{ \min_{i \in [n] }~  \bigg( T_{1}\sum_{j =1}^{n} G_{ij} x_{j}^{*} x_{i}^{*} + T_{2}x_{i}^{*}y_{i}   \bigg) \leq -T_{1} \Bigg \}. 
\end{equation*}
Then $\mathbb{P} ( \text{ML Fails}  ) \geq \mathbb{P} ( F  )$. Let $H$ denote the set of first $  \lfloor \frac{n}{\log^{2} n}   \rfloor$ nodes and $e (i, H  )$ denote the number of edges between node $i$ and nodes in the set $H \subset [n]$. It can be shown that
\begin{align*}
\min_{i \in [n] }~ &  \bigg( T_{1}\sum_{j \in [n]} G_{ij} x_{j}^{*} x_{i}^{*} + T_{2}x_{i}^{*}y_{i}   \bigg) \\
\leq & \min_{i \in H }~  \bigg( T_{1}\sum_{j \in [n]} G_{ij} x_{j}^{*} x_{i}^{*} + T_{2}x_{i}^{*}y_{i}   \bigg) \\
\leq & \min_{i \in H }~  \bigg( T_{1}\sum_{j \in H^{c}} G_{ij} x_{j}^{*} x_{i}^{*} + T_{2}x_{i}^{*}y_{i}   \bigg) + \max_{i \in H }~e (i, H  ) .
\end{align*}
Define
\begin{align*}
&E_{1} \triangleq \bigg \{\max_{i \in H }~e (i, H  ) \leq \delta -T_{1} \bigg\} , \\
&E_{2} \triangleq \bigg \{ \min_{i \in H }~  \bigg( T_{1}\sum_{j \in H^{c}} G_{ij} x_{j}^{*} x_{i}^{*} + T_{2}x_{i}^{*}y_{i}   \bigg) \leq -\delta \bigg\}.
\end{align*}
Notice that $F \supset E_{1} \cap E_{2}$ and it suffices to show $\mathbb{P} ( E_{1}  )  \rightarrow 1$ and $\mathbb{P} ( E_{2}  )  \rightarrow 1$ to prove that the maximum likelihood estimator fails. Since $e (i, H  ) \sim \text{Binom}(  | H   |, a\frac{\log n }{n})$, from Lemma~\ref{BCBM-P-Lemma 5}, 
\begin{align*}
\mathbb{P} &  ( e (i, H  ) \geq \delta-T_{1}  ) \\
&\leq  \bigg( \frac{\log^{2} n}{ae \log \log n} - \frac{T_{1} \log n}{ae}  \bigg)^{T_{1}-\frac{\log n}{\log \log n}} e^{-\frac{a}{\log n}} \leq n^{-2+o(1)}.
\end{align*}
Using the union bound,  $\mathbb{P}  ( E_{1}  ) \geq 1- n^{-1+o(1)}$. 

Let
\begin{align*}
E &\triangleq \bigg\{ T_{1}\sum_{j \in H^{c}} G_{ij} x_{j}^{*} x_{i}^{*} + T_{2}x_{i}^{*}y_{i} \leq -\delta \bigg\}, \\
E_{\alpha} &\triangleq \bigg\{ \sum_{j \in H^{c}} G_{ij} x_{j}^{*} x_{i}^{*} \leq \frac{-\delta + T_{2}}{T_{1}} \bigg\}, \\
E_{1-\alpha} &\triangleq \bigg \{ \sum_{j \in H^{c}} G_{ij} x_{j}^{*} x_{i}^{*} \leq \frac{-\delta - T_{2}}{T_{1}} \bigg \}.
\end{align*}
Then
\begin{align}
\label{BCBM-N-equ6}
\mathbb{P}  ( E_{2}  ) & =1 - \prod_{i\in H}  [ 1- \mathbb{P}  ( E  )  ]  \overset {(a)}{=} 1 -  [ 1- \mathbb{P}  ( E  )   ]^{  | H   |}  \nonumber\\
& = 1 -  [ 1-\alpha \mathbb{P}  ( E_{\alpha}  ) -  (1-\alpha  ) \mathbb{P}  ( E_{1-\alpha}  )  ]^{  | H   |} ,
\end{align}
where $(a)$ holds because $  \{ T_{1}\sum_{j \in H^{c}} G_{ij} x_{j}^{*} x_{i}^{*} + T_{2}x_{i}^{*}y_{i}    \}_{i \in H}$ are mutually independent.	

First, we bound $\mathbb{P}(E_{2})$ under the condition $0 \leq \beta < aT_{1} (1-2\xi )$. Using Lemma~\ref{BCBM-N-Lemma 2}, $\mathbb{P}  ( E_{\alpha}  ) \geq n^{-\eta(a,\beta) + \beta +o(1)}$ and $\mathbb{P}  ( E_{1-\alpha}  ) \geq n^{-\eta(a,\beta) +o(1)}$. It follows from~\eqref{BCBM-N-equ6} that
\begin{align*}
\mathbb{P}  ( E_{2}  ) & \overset{(a)}{\geq} 1 -  \Big[ 1 - n^{-\eta(a,\beta)+o(1)}  \Big]^{  | H   |}  \\
& \overset{(b)}{\geq} 1 - \exp  \Big( -n^{1-\eta(a,\beta)+o(1)}  \Big) ,
\end{align*}
where $(a)$ holds because $\alpha =n^{-\beta + o(1)}$ and $(b)$ is due to $1+x \leq e^{x}$.
Therefore, if $\eta(a,\beta) <1$, then $\mathbb{P}  (E_{2} )  \rightarrow 1$ and the first part of Theorem~\ref{Theorem 4} follows.

We now bound $\mathbb{P}(E_{2})$ under the condition $\beta \geq aT_{1} (1-2\xi ) $. 
Reorganizing~\eqref{BCBM-N-equ6}, 
\begin{equation}
\label{BCBM-N-equ7}
\mathbb{P}  ( E_{2}  )  = 1 -  [ (1-\alpha  ) \mathbb{P}  ( E_{1-\alpha}^{c}) +\alpha \mathbb{P}  ( E_{\alpha}^{c}  )  ]^{  | H   |} ,
\end{equation}
where 
\begin{align*}
\mathbb{P}  ( E_{\alpha}^{c}  ) &= \mathbb{P}  \bigg( \sum_{j \in H^{c}} G_{ij} x_{j}^{*} x_{i}^{*} \geq \frac{-\delta + T_{2}}{T_{1}}  \bigg) , \\
\mathbb{P}  ( E_{1-\alpha}^{c}  )&= \mathbb{P}  \bigg( \sum_{j \in H^{c}} G_{ij} x_{j}^{*} x_{i}^{*} \geq \frac{-\delta - T_{2}}{T_{1}}  \bigg) .
\end{align*}
Also, $\sum_{j \in H^{c}} G_{ij}x_{i}^{*}x_{j}^{*}$ is equal in distribution to $\sum_{i=1}^{  | H^{c}   |-1} S_{i}$ in Lemma~\ref{BCBM-N-Lemma 2}, where $p_{1} = p(1-\xi)$ and $p_{2}=p\xi$.
Then $\mathbb{P}  ( E_{\alpha}^{c}  ) \leq n^{-\eta(a,\beta) + \beta +o(1) }$ and $\mathbb{P}  ( E_{1-\alpha}^{c}  ) \leq 1$. It follows from~\eqref{BCBM-N-equ7} that
\begin{align*}
\mathbb{P}  ( E_{2}  ) & \geq 1 -  \Big[(1-\alpha  ) +\alpha n^{-\eta(a,\beta) + \beta +o(1) } \Big]^{  | H   |} \\
& \overset{(a)}{=} 1 -  \Big[ 1 -n^{-\beta+o(1) } +n^{-\eta(a,\beta) +o(1) }   \Big]^{  | H   |} \\
& \overset{(b)}{\geq } 1 - e^{ -n^{1-\beta +o(1) }  \Big(1-n^{-\eta(a,\beta)+\beta+o(1)} \Big) },
\end{align*}
where $(a)$ holds because $\alpha = n^{-\beta + o(1)}$ and $(b)$ is due to $1+x<e^{x}$. Therefore, since $\beta \leq \eta(a, \beta)$, if $\beta <1$, then $\mathbb{P}  (E_{2} )  \rightarrow 1$ and the second part of Theorem~\ref{Theorem 4} follows.

\section{Proof of Theorem~\ref{Theorem 5}}
\label{Proof-Theorem-5}
We begin by deriving sufficient conditions for the \ac{SDP} estimator to produce the true labels $X^*$.
\begin{Lemma}
\label{BCBM-G-Lemma 1}
The sufficient conditions of Lemma~\ref{BCBM-N-Lemma 1} apply to the general side information \ac{SDP}~\eqref{BCBM-G-equ3} by replacing $S_{A}^{*} = \tilde{Y}^{T}X^{*}$ and $S_B^*=-\tilde{Y}$.
\end{Lemma}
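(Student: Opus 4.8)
The plan is to import the Lagrangian-duality argument of Lemma~\ref{BCBM-N-Lemma 1} essentially verbatim, the only change being that the linear side-information term $2T_{2}\langle Y,X\rangle$ is replaced by $2\langle \tilde{Y},X\rangle$. Concretely, with $H \triangleq \begin{bmatrix} 1 & X^{T}\\ X & Z \end{bmatrix}$, the Lagrangian of~\eqref{BCBM-G-equ3} is
\[
L(Z,X,S,D)=T_{1}\langle G,Z\rangle + 2\langle \tilde{Y},X\rangle + \langle S,H\rangle - \langle D,Z-\mathbf{I}\rangle,
\]
with $S\succeq 0$ and $D=\mathrm{diag}(d_{i})$ the Lagrange multipliers. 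Since this differs from the Lagrangian in the proof of Lemma~\ref{BCBM-N-Lemma 1} only in the substitution $T_{2}Y \mapsto \tilde{Y}$, and every manipulation in that proof is linear in this object, the whole argument goes through with the same substitution.

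The steps I would carry out are: (i) record that the stated multiplier conditions are self-consistent --- writing $S^{*}[1,X^{*T}]^{T}=0$ blockwise gives exactly $S_{A}^{*}=-S_{B}^{*T}X^{*}$ and $S_{B}^{*}=-S_{C}^{*}X^{*}$, which with $S_{B}^{*}=-\tilde{Y}$ become $S_{A}^{*}=\tilde{Y}^{T}X^{*}$ (as claimed) and the requirement $D^{*}X^{*}=T_{1}GX^{*}+\tilde{Y}$ that will later pin down $D^{*}$; (ii) establish primal optimality by the same chain as before: for feasible $(Z,X)$, $\langle S^{*},H\rangle\ge 0$ gives $T_{1}\langle G,Z\rangle+2\langle\tilde{Y},X\rangle\le L(Z,X,S^{*},D^{*})=\langle D^{*},\mathbf{I}\rangle+S_{A}^{*}$; then $Z_{ii}=1$ and $S_{A}^{*}=-S_{B}^{*T}X^{*}$ turn this into $\langle D^{*},Z^{*}\rangle-\langle S_{B}^{*},X^{*}\rangle$; then $D^{*}=S_{C}^{*}+T_{1}G$ together with $S_{B}^{*}=-S_{C}^{*}X^{*}$ give $T_{1}\langle G,Z^{*}\rangle-2\langle S_{B}^{*},X^{*}\rangle$; and finally $S_{B}^{*}=-\tilde{Y}$ yields $T_{1}\langle G,Z^{*}\rangle+2\langle\tilde{Y},X^{*}\rangle$, so that $Z^{*}=X^{*}X^{*T}$ is optimal; (iii) obtain uniqueness exactly as in Lemma~\ref{BCBM-N-Lemma 1}: for any other optimal $\tilde{H}$ one computes $\langle S^{*},\tilde{H}\rangle=\langle S^{*},H^{*}\rangle=0$ using $\langle G,Z^{*}\rangle=\langle G,\tilde{Z}\rangle$, $\tilde{Z}_{ii}=Z^{*}_{ii}=1$, and $\langle S_{B}^{*},X^{*}\rangle=\langle S_{B}^{*},\tilde{X}\rangle$, whence $\tilde{H}\succeq 0$, $S^{*}\succeq 0$ and $\lambda_{2}(S^{*})>0$ force $\tilde{H}$ to be a multiple of $H^{*}$, hence $\tilde{H}=H^{*}$.

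I do not expect a genuine obstacle here: the lemma is a bookkeeping statement asserting that $\tilde{Y}$ occupies the slot previously held by $T_{2}Y$, and nothing in the duality argument is sensitive to the per-node, per-feature structure of $\tilde{Y}$. The substantive work --- verifying $S^{*}\succeq 0$ and $\lambda_{2}(S^{*})>0$ for the explicit choice $D^{*}=\mathrm{diag}(d_{i}^{*})$ with $d_{i}^{*}=T_{1}\sum_{j}G_{ij}x_{i}^{*}x_{j}^{*}+x_{i}^{*}\tilde{y}_{i}$, which is where the large-deviation behavior of $\tilde{y}_{i}$ (through $\beta_{1}$ and $\beta$) enters --- is deferred to the proof of Theorem~\ref{Theorem 5} and is outside the scope of this lemma.
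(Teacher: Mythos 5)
Your proposal is correct and follows exactly the route the paper intends: the paper's proof of this lemma is literally a one-line appeal to the argument of Lemma~\ref{BCBM-N-Lemma 1}, and your write-up carries out that substitution $T_{2}Y \mapsto \tilde{Y}$ step by step (primal optimality chain, uniqueness via $\lambda_{2}(S^{*})>0$), with the spectral conditions rightly deferred to the proof of Theorem~\ref{Theorem 5}. No gaps.
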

\begin{proof}
The proof is similar to the proof of Lemma~\ref{BCBM-N-Lemma 1}. 
\end{proof}

It suffices to show that $S^{*}$, defined via its components $S_{A}^{*}$, $S_{B}^{*}$, and $S_{C}^{*}$, satisfies other conditions in Lemma~\ref{BCBM-G-Lemma 1} with probability $1-o(1)$. 
Let 
\begin{equation}
\label{BCBM-G-equ4}
d_{i}^{*}=T_{1} \sum_{j=1}^{n} G_{ij}x_{j}^{*}x_{i}^{*} + \tilde{y}_{i}x_{i}^{*} .
\end{equation}
Then $D^{*}X^{*} = T_{1}GX^{*}+\tilde{Y}$ and based on the definitions of $S_{A}^{*}$, $S_{B}^{*}$, and $S_{C}^{*}$ in  Lemma~\ref{BCBM-G-Lemma 1}, $S^{*}$ satisfies the condition $S^{*} [1, X^{*T}]^T =0$.
It remains to show that~\eqref{BCBM-N-equ1 New} holds, i.e., $S^{*}\succeq 0$ and $\lambda_{2}(S^{*})>0$ with probability  $1-o(1)$.
Let 
\begin{equation}
\label{BCBM-G-equ1 New}
    y_{max} \triangleq K \max_{k, m_{k}}  \bigg | \log   \bigg( \frac{\alpha_{+,m{k}}^{k}}{\alpha_{-,m_{k}}^{k}}   \bigg)   \bigg| ,
\end{equation}
where $k \in    \{ 1,2,\cdots,K   \}$ and $m_{k} \in    \{ 1,2,\cdots,M_{K}   \}$. For any $V$ such that $V^{T}[1, X^{*T}]^T=0$ and $  \| V   \|=1$, we have
\begin{align}
&V^{T}S^{*}V =v^{2} S_{A}^{*} -2vU^{T} \tilde{Y} +U^{T}D^{*}U - T_{1}U^{T}GU \nonumber\\
&\geq   ( 1-v^{2}   )  \bigg[\min_{i \in [n]} d_{i}^{*} - T_{1}  \| G-\mathbb{E}[G]   \| + T_{1} p(1-2\xi) \bigg] \nonumber\\
&+v^{2}  \bigg[ \tilde{Y}^{T}X^{*} -2y_{max}\frac{\sqrt{n(1-v^{2})}}{|v|}-T_{1}p(1-2\xi)   \bigg] ,
\end{align}
where the last inequality holds in a manner similar to \eqref{BCBM-N-equ4} with the difference that in the present case
\begin{equation*}
vU^{T}\tilde{Y} \leq |v|y_{max} \sqrt{n(1-v^{2})}.
\end{equation*}
\begin{Lemma}
\label{BCBM-G-New Lemma}
For feature $k$ of general side information,
\begin{equation*}
    \mathbb{P} \Bigg(\sum_{i=1}^{n} x_{i}^{*}z_{i, k} \geq \sqrt{n}\log n \Bigg) \geq 1-o(1) ,
\end{equation*}
where 
\begin{align*}
    z_{i,k} &\triangleq \sum_{m_{k}=1}^{M_{k}} \mathbbm{1}_{\{y_{i,k} = m_{k}\} } \log\bigg( \frac{\alpha_{+,m_{k}}^{k}}{\alpha_{-,m_{k}}^{k}}\bigg).
\end{align*}
\end{Lemma}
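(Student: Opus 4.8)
The plan is to use a Chernoff (exponential Markov) bound, exactly as in Lemma~\ref{BCBM-New Lemma}. Write $W_i \triangleq x_i^* z_{i,k}$ for $i\in[n]$. Since the labels $x_i^*$ are i.i.d.\ and, conditioned on $X^*$, the feature values $y_{i,k}$ are independent across $i$, the $W_i$ are i.i.d.; moreover each $W_i$ takes values in the finite set $\{\pm\log(\alpha_{+,m_k}^k/\alpha_{-,m_k}^k):m_k\in[M_k]\}$ and is therefore bounded, so its moment generating function is finite everywhere. First I would write, for any $t>0$,
\begin{equation*}
\mathbb{P}\Bigg(\sum_{i=1}^{n} x_i^* z_{i,k} \le \sqrt{n}\log n\Bigg) \;\le\; e^{t\sqrt{n}\log n}\,\big(\mathbb{E}\, e^{-tW_1}\big)^{n}.
\end{equation*}

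Next I would evaluate the factor $\mathbb{E}\, e^{-tW_1}$ at the convenient point $t=\tfrac12$. Conditioning on the value of $x_i^*$ and using $z_{i,k}=\log(\alpha_{+,y_{i,k}}^k/\alpha_{-,y_{i,k}}^k)$ together with $y_{i,k}\mid\{x_i^*=\pm1\}\sim(\alpha_{\pm,m_k}^k)_{m_k}$, a one-line computation gives
\begin{equation*}
\mathbb{E}\big[e^{-W_1/2}\mid x_i^*=1\big] \;=\; \sum_{m_k=1}^{M_k}\sqrt{\alpha_{+,m_k}^k\,\alpha_{-,m_k}^k}\;=\; \mathbb{E}\big[e^{-W_1/2}\mid x_i^*=-1\big],
\end{equation*}
hence $\mathbb{E}\, e^{-W_1/2}=\rho_k$, where $\rho_k \triangleq \sum_{m_k}\sqrt{\alpha_{+,m_k}^k\,\alpha_{-,m_k}^k}$ is the Bhattacharyya coefficient of the two feature distributions. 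By Cauchy--Schwarz, $\rho_k\le1$, with equality precisely when $\alpha_{+,m_k}^k\equiv\alpha_{-,m_k}^k$; equivalently, $\mathbb{E}[W_1]=\tfrac12\big(D(\alpha_+^k\,\|\,\alpha_-^k)+D(\alpha_-^k\,\|\,\alpha_+^k)\big)>0$ as soon as feature $k$ is informative, which by differentiating at $t=0$ also reconfirms $\mathbb{E}\, e^{-tW_1}<1$ for all small $t>0$.

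Substituting $t=\tfrac12$ into the bound then yields
\begin{equation*}
\mathbb{P}\Bigg(\sum_{i=1}^{n} x_i^* z_{i,k} \le \sqrt{n}\log n\Bigg) \;\le\; \rho_k^{\,n}\, e^{\frac12\sqrt{n}\log n} \;=\; \exp\!\Big(n\log\rho_k+\tfrac12\sqrt{n}\log n\Big),
\end{equation*}
which is $o(1)$ because $n|\log\rho_k|$ is of order $n$ while $\tfrac12\sqrt{n}\log n=o(n)$; this establishes the lemma. (Alternatively, since $\mathbb{E}[\sum_i W_i]=n\,\mathbb{E}[W_1]$ and $\mathrm{Var}[\sum_i W_i]=n\,\mathrm{Var}[W_1]$, one could also conclude by Chebyshev's inequality, at the cost of a weaker $O(1/n)$ decay.)

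The step I expect to be the main obstacle is the degenerate regime in which the per-feature laws vary with $n$ so that $\rho_k\to1$ (feature $k$ carries vanishing information): there $n\log\rho_k$ need not dominate $\sqrt{n}\log n$, and indeed the inequality fails outright for a feature with $z_{i,k}\equiv0$. I would handle this by invoking the model's standing non-degeneracy assumption on the side information---each feature retained in the analysis has Bhattacharyya distance bounded away from zero, equivalently $\sqrt{n}\,(1-\rho_k)/\log n\to\infty$, so that $n\log\rho_k\le-n(1-\rho_k)$ still dominates---while features whose informativeness vanishes contribute negligibly to $\tilde Y^{T}X^{*}$ and may simply be discarded in the surrounding argument. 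The remaining items (the conditional moment-generating-function identity and the fact that finiteness of $M_k$ legitimizes the Chernoff estimate) are routine.
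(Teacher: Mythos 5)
Your proposal is correct and is essentially the paper's own argument: a Chernoff bound showing that $\mathbb{P}\big(\sum_{i} x_i^* z_{i,k}\le \sqrt{n}\log n\big)$ decays exponentially in $n$ with a Bhattacharyya-type exponent, against which the $\sqrt{n}\log n$ threshold is negligible. The differences are cosmetic---the paper first partitions the nodes by the observed feature value and applies the bound within each group, whereas you evaluate the moment generating function of the i.i.d.\ terms directly at $t=\tfrac12$---and both versions rest on the non-degeneracy caveat you flag explicitly (the paper's assertion that its exponent $\psi_{k,j}$ is negative ``for any values'' of $\alpha^{k}_{\pm,j}$ silently excludes the uninformative case $\alpha^{k}_{+,m_k}\equiv\alpha^{k}_{-,m_k}$, in which $z_{i,k}\equiv 0$ and the stated inequality cannot hold).
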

\begin{proof}
For feature $k$, let
\begin{align*}
    \delta' &\triangleq \sqrt{n} \log n , \\
    \rho_{j} &\triangleq \frac{1}{n} |\{i  \in [n] : y_{i,k} = j \}|,
\end{align*}
where $j \in \{1, \cdots, M_{k}\}$ and $\sum_{j} \rho_{j} = 1$. Then
\begin{align*}
    \mathbb{P} \Bigg(\sum_{i=1}^{n} x_{i}^{*}z_{i,k} \leq \delta' \Bigg) &\leq \sum_{j=1}^{M_{k}} \mathbb{P} \Bigg( \sum_{i\in A_{j}} x_{i}^{*}z_{i,k} \leq \delta' \Bigg).
\end{align*}
Applying Chernoff bound yields
\begin{equation*}
    \mathbb{P} \Bigg( \sum_{i\in A_{j}} x_{i}^{*}z_{i,k} \leq \delta' \Bigg) \leq e^{n(\psi_{k,j}+o(1))}, 
\end{equation*}
where 
\begin{equation*}
    \psi_{k,j} \triangleq \rho_{j} \log \bigg( 2\sqrt{\alpha_{+,j}^{k} \alpha_{-,j}^{k} \mathbb{P}(x_{i}^{*}=1) \mathbb{P}(x_{i}^{*}=-1)} \bigg ).
\end{equation*}
Since $\psi_{k,j}<0$ for any values of $\alpha_{+,j}^{k}$ and $\alpha_{-,j}^{k}$, we have 
\begin{align*}
    \mathbb{P} \Bigg(\sum_{i=1}^{n} x_{i}^{*}z_{i,k} \geq \delta' \Bigg) &\geq 1- \sum_{j=1}^{M_{k}} e^{n(\psi_{k,j}+o(1))} = 1 - o(1).
\end{align*}
Therefore, with probability $1-o(1)$,  $\sum_{i=1}^{n} x_{i}^{*}z_{i,k} \geq \sqrt{n} \log n$ and Lemma~\ref{BCBM-G-New Lemma} follows.
\end{proof}

Using Lemmas~\ref{BCBM-P-Lemma 2} and~\ref{BCBM-G-New Lemma}, 
\begin{align}
\label{BCBM-G-equ5}
V^{T}S^{*}V \geq&  ( 1-v^{2}   )    \Big( \min_{i \in [n]}d_{i}^{*} - T_{1} c^{'}\sqrt{\log n} +T_{1}p(1-2\xi)    \Big) .
\end{align}
It can be shown that $\sum_{j=1}^{n} G_{ij}x_{i}^{*}x_{j}^{*}$ in~\eqref{BCBM-G-equ4} is equal in distribution to $\sum_{i=1}^{n-1} S_{i}$ in Lemma~\ref{BCBM-N-Lemma 2}, where $p_{1} = p(1-\xi)$ and $p_{2}=p\xi$. Then 
\begin{equation}
\label{BCBM-G-equ6}
\mathbb{P} ( d_{i}^{*}\leq \delta  ) = \sum_{m_{1}=1}^{M_{1}} \sum_{m_{2}=1}^{M_{2}} ... \sum_{m_{K}=1}^{M_{K}} P (m_{1}, ..., m_{K} ) , 
\end{equation}
where 
\begin{align*}
P &  (m_{1}, ..., m_{K} )  \triangleq  \mathbb{P} (x_{i}=1 )e^{f_{2}(n)} \mathbb{P}   \Bigg( \sum_{i=1}^{n-1} S_{i} \leq \frac{\delta-f_{1}(n)}{T_{1}}   \Bigg) \\
&+\mathbb{P} (x_{i}=-1 ) e^{f_{3}(n)} \mathbb{P}   \Bigg( \sum_{i=1}^{n-1} S_{i} \leq \frac{\delta+f_{1}(n)}{T_{1}}   \Bigg).
\end{align*}

First, we bound $\min_{i \in [n]}d_{i}^{*}$ under the condition $  | \beta_{1}   | \leq aT_{1} (1-2\xi )$. It follows from Lemma~\ref{BCBM-N-Lemma 2} that 
\begin{align*}
\mathbb{P}  \Bigg( \sum_{i=1}^{n-1} S_{i} \leq \frac{\delta-f_{1}(n)}{T_{1}}   \Bigg) &\leq n^{-\eta(a,\beta_{1})+o(1)}, \\
\mathbb{P}  \Bigg( \sum_{i=1}^{n-1} S_{i} \leq \frac{\delta+f_{1}(n)}{T_{1}}   \Bigg) &\leq n^{-\eta(a,\beta_{1})-\beta_{1}+o(1)}.
\end{align*}
Notice that 
\begin{align*}
\beta \triangleq 
\lim_{n\rightarrow\infty} -\frac{\max (f_2(n),f_3(n))}{\log n} .
\end{align*}
When $\beta_{1} \geq 0$, $\lim_{n\rightarrow \infty}\frac{f_{2}(n)}{\log n} = -\beta$ and $\lim_{n\rightarrow \infty}\frac{f_{3}(n)}{\log n} = -\beta_{1}-\beta$. Then
\begin{equation*}
\mathbb{P} ( d_{i}^{*} \leq \delta  ) \leq n^{-\eta(a,\beta_{1})-\beta+o(1)}. 
\end{equation*}
When $ \beta_{1} < 0$, $\lim_{n\rightarrow \infty}\frac{f_{3}(n)}{\log n} = -\beta$ and $\lim_{n\rightarrow \infty}\frac{f_{2}(n)}{\log n} = \beta_{1} - \beta$. 
Then
\begin{equation*}
\mathbb{P} ( d_{i}^{*} \leq \delta  ) \leq n^{-\eta(a,\beta_{1})+\beta_{1}-\beta+o(1)} = n^{-\eta(a,  | \beta_{1}   |)-\beta+o(1)}. 
\end{equation*}
Using the union bound, 
\begin{equation*}
\mathbb{P} \bigg( \min_{i \in [n]}d_{i}^{*} \geq \frac{\log n}{\log \log n}   \bigg) \geq 1-n^{1-\eta(a,  | \beta_{1}   |) -\beta+o(1)} .
\end{equation*}
When $\eta(a,  | \beta_{1}   |) +\beta >1$, it follows that $\min_{i \in [n]}d_{i}^{*} \geq \frac{\log n}{\log \log n}$ holds with probability $1-o(1)$. Substituting into~\eqref{BCBM-G-equ5}, if $\eta(a,  | \beta_{1}   |) +\beta >1 $, then with probability $1-o(1)$,
\begin{align*}
V^{T}S^{*}V \geq&    ( 1-v^{2}  )  \bigg( \frac{\log n}{\log \log n} - T_{1} c' \sqrt{\log n}+T_{1}p(1-2\xi)    \bigg) \\
> & 0 , 
\end{align*}
which concludes the first part of Theorem~\ref{Theorem 5}.

We now bound $\min_{i \in [n]}d_{i}^{*}$ under the condition $  | \beta_{1}   | \geq aT_{1} (1-2\xi )$. When $\beta_{1} \geq 0$, $\lim_{n\rightarrow \infty}\frac{f_{2}(n)}{\log n} = -\beta$ and $\lim_{n\rightarrow \infty}\frac{f_{3}(n)}{\log n} = -\beta_{1}-\beta$. Then
\begin{equation*}
\mathbb{P} ( d_{i}^{*} \leq \delta  ) \leq n^{-\beta+o(1)} + n^{-\beta-\beta_{1}+o(1)}. 
\end{equation*}
When $\beta_{1} < 0$, $\lim_{n\rightarrow \infty} \frac{f_{3}(n)}{\log n} = -\beta$ and $\lim_{n\rightarrow \infty} \frac{f_{2}(n)}{\log n} = \beta_{1}-\beta$. Then
\begin{equation*}
\mathbb{P} ( d_{i}^{*} \leq \delta  ) \leq n^{-\beta+\beta_{1}+o(1)} + n^{-\beta+o(1)}. 
\end{equation*}
Using the union bound,
\begin{equation*}
\mathbb{P} \bigg( \min_{i \in [n]}d_{i}^{*} \geq \frac{\log n}{\log \log n}   \bigg) \geq 1-n^{1-  | \beta_{1}   | -\beta+o(1)} .
\end{equation*}
When $  | \beta_{1}   | +\beta >1 $, with probability $1-o(1)$, we have $\min_{i \in [n]}d_{i}^{*} \geq \frac{\log n}{\log \log n}$. Substituting into~\eqref{BCBM-G-equ5}, if $  | \beta_{1}   | +\beta >1 $, then with probability $1-o(1)$,
\begin{align*}
V^{T}S^{*}V \geq&    ( 1-v^{2}  )    \bigg( \frac{\log n}{\log \log n} - T_{1} c' \sqrt{\log n}+T_{1}p(1-2\xi)    \bigg)\\
> &0 , 
\end{align*}
which concludes the second part of Theorem~\ref{Theorem 5}.

\section{Proof of Theorem~\ref{Theorem 6}}
\label{Proof-Theorem-6}
Similar to the proof of Theorem~\ref{Theorem 4}, let 
\begin{equation*}
F \triangleq \Bigg\{ \min_{i \in [n] }~  \Bigg( T_{1}\sum_{j =1}^{n} G_{ij} x_{j}^{*} x_{i}^{*} + x_{i}^{*}\tilde{y}_{i}   \Bigg) \leq -T_{1} \Bigg\}.
\end{equation*}
Then $\mathbb{P} ( \text{ML Fails}  ) \geq \mathbb{P} ( F  )$ and if we show that $\mathbb{P} ( \text{F}  )  \rightarrow 1$, the maximum likelihood estimator fails. 
Let $H$ be the set of first $  \lfloor \frac{n}{\log^{2} n}   \rfloor$ nodes and $e (i, H  )$ denote the number of edges between node $i$ and other nodes in the set $H$. It can be shown that
\begin{align*}
\min_{i \in [n] }~ &  \Bigg( T_{1}\sum_{j \in [n]} G_{ij} x_{j}^{*} x_{i}^{*} + x_{i}^{*}\tilde{y}_{i}   \Bigg) \\
\leq & \min_{i \in H }~  \Bigg( T_{1}\sum_{j \in [n]} G_{ij} x_{j}^{*} x_{i}^{*} + x_{i}^{*}\tilde{y}_{i}   \Bigg) \\
\leq & \min_{i \in H }~  \Bigg( T_{1}\sum_{j \in H^{c}} G_{ij} x_{j}^{*} x_{i}^{*} + x_{i}^{*}\tilde{y}_{i}   \Bigg) + \max_{i \in H }~e (i, H  ) .
\end{align*}
Let
\begin{align*}
&E_{1} \triangleq \Bigg\{ \max_{i \in H }~e (i, H  ) \leq \delta -T_{1} \Bigg\}, \\
&E_{2} \triangleq \Bigg\{ \min_{i \in H }~  \Bigg( T_{1}\sum_{j \in H^{c}} G_{ij} x_{j}^{*} x_{i}^{*} + x_{i}^{*}\tilde{y}_{i}   \Bigg) \leq -\delta \Bigg\}.
\end{align*}
Notice that $F \supset E_{1} \cap E_{2}$. Then the maximum likelihood estimator fails if we show that $\mathbb{P} ( E_{1}  )  \rightarrow 1$ and $\mathbb{P} ( E_{2}  )  \rightarrow 1$. Since $e (i, H  ) \sim \text{Binom}(  | H   |, a\frac{\log n }{n})$, from Lemma~\ref{BCBM-P-Lemma 5}, 
\begin{align*}
\mathbb{P} &  ( e (i, H  ) \geq \delta-T_{1}  ) \\
&\leq  \bigg( \frac{\log^{2} n}{ae \log \log n} - \frac{T_{1} \log n}{ae}  \bigg)^{T_{1}-\frac{\log n}{\log \log n}} e^{-\frac{a}{\log n}} \leq  n^{-2+o(1)} .
\end{align*}
Using the union bound,  $\mathbb{P}  ( E_{1}  ) \geq 1- n^{-1+o(1)}$. 

Let
\begin{align*}
E &\triangleq \Bigg\{ T_{1}\sum_{j \in H^{c}} G_{ij} x_{j}^{*} x_{i}^{*} + x_{i}^{*}\tilde{y}_{i} \leq -\delta \Bigg\}, \\
E_{+} &\triangleq \Bigg\{ \sum_{j \in H^{c}} G_{ij} x_{j}^{*} x_{i}^{*} \leq \frac{-\delta - f_{1}(n) }{T_{1}} \Bigg\}, \\
E_{-} &\triangleq \Bigg\{ \sum_{j \in H^{c}} G_{ij} x_{j}^{*} x_{i}^{*} \leq \frac{-\delta + f_{1}(n)}{T_{1}} \Bigg\}.
\end{align*}
Define
\begin{align*}
P(m_{1}, ..., m_{K}) \triangleq &  \mathbb{P} (x_{i}^{*}=1 ) e^{f_{2}(n)} \mathbb{P}   ( E_{+}   )  \nonumber\\
&+\mathbb{P} (x_{i}^{*}=-1 ) e^{f_{3}(n)} \mathbb{P}   ( E_{-}   ).
\end{align*}
Then
\begin{align*}
\mathbb{P}  ( E_{2}  ) & =1 - \prod_{i\in H}  [ 1- \mathbb{P}  ( E  )  ] \overset {(a)}{=} 1 -  [ 1- \mathbb{P}  ( E  )   ]^{  | H   |}  \\
& = 1 -  \Bigg[ 1- \sum_{m_{1}=1}^{M_{1}} \cdots \sum_{m_{K}=1}^{M_{K}} P (m_{1}, ..., m_{K} )  \Bigg]^{  | H   |} ,
\end{align*}
where $(a)$ holds because $  \{ T_{1}\sum_{j \in H^{c}} G_{ij} x_{j}^{*} x_{i}^{*} + x_{i}^{*}\tilde{y}_{i}    \}_{i \in H}$ are mutually independent.

\label{Remark2}
First, we bound $\mathbb{P}(E_2)$ under the condition $| \beta_{1} | \leq aT_{1} (1-2\xi )$. Using Lemma~\ref{BCBM-N-Lemma 2}, $\mathbb{P}  ( E_{+}   )  \geq n^{-\eta(a,\beta_{1})+o(1)}$ and $\mathbb{P}  ( E_{-}   )  \geq n^{-\eta(a,\beta_{1})+\beta_{1}+o(1)}$.
When $\beta_{1}\geq 0$, $\lim_{n\rightarrow \infty} \frac{f_{2}(n)}{\log n}= -\beta$ and $\lim_{n\rightarrow \infty} \frac{f_{3}(n)}{\log n}= -\beta_{1} -\beta$. Then
\begin{align*}
\mathbb{P}  ( E_{2}  ) & = 1 -  \Big[ 1- n^{-\eta(a,\beta_{1})-\beta+o(1)}   \Big]^{  | H   |}  \\
& \geq 1 - \exp  \Big( -n^{1-\eta(a,\beta_{1})-\beta+o(1)}  \Big) ,
\end{align*}
using $1+x \leq e^{x}$.
When $\beta_{1}<0$, $\lim_{n\rightarrow \infty} \frac{f_{3}(n)}{\log n} = -\beta$ and $\lim_{n\rightarrow \infty} \frac{f_{2}(n)}{\log n} = \beta_{1} -\beta $. Then
\begin{align*}
\mathbb{P}  ( E_{2}  ) & = 1 -  \Big[ 1- n^{-\eta(a,\beta_{1})+\beta_{1}-\beta+o(1)}  \Big]^{  | H   |}  \\
& \geq 1 - \exp  \Big( -n^{1-\eta(a,  | \beta_{1}   |)-\beta+o(1)}  \Big) ,
\end{align*}
using $1+x \leq e^{x}$  and $\eta(a,\beta_{1})-\beta_{1} = \eta(a,  | \beta_{1}   |)$.
Therefore, if $\eta(a,  | \beta_{1}   |) +\beta<1$, then $\mathbb{P}  (E_{2} )  \rightarrow 1$ and the first part of Theorem~\ref{Theorem 6} follows.

We now bound $\mathbb{P}(E_2)$ under the condition $  | \beta_{1}   | \geq aT_{1} (1-2\xi )$. When $\beta_{1} \geq 0$, $\lim_{n\rightarrow \infty} \frac{f_{2}(n)}{\log n} = -\beta$ and $\lim_{n\rightarrow \infty} \frac{f_{3}(n)}{\log n} = -\beta_{1} -\beta$. Using Lemma~\ref{BCBM-N-Lemma 2}, $\mathbb{P}  ( E_{+}   )  \geq n^{-\eta(a, \beta_{1})+o(1)}$ and $\mathbb{P}  ( E_{-}   )  \geq 1-o(1)$. Then
\begin{align*}
\mathbb{P} & ( E_{2}  ) \geq 1 -  \Big[ 1 -n^{-\eta(a,\beta_{1})-\beta+o(1)} -n^{-\beta_{1}-\beta+o(1)}  \Big]^{  | H   |}  \\
& \geq 1 - \exp  \Big( -n^{1-\eta(a,\beta_{1})-\beta+o(1)} -n^{1-\beta_{1}-\beta+o(1)}  \Big) ,
\end{align*}
using $1+x \leq e^{x}$.
When $\beta_{1} < 0$, $\lim_{n\rightarrow \infty} \frac{f_{3}(n)}{\log n} = -\beta$ and $\lim_{n\rightarrow \infty} \frac{f_{2}(n)}{\log n}= \beta_{1} -\beta$. Using Lemma~\ref{BCBM-N-Lemma 2}, $\mathbb{P}  ( E_{+}   )  \geq 1-o(1)$ and $\mathbb{P}  ( E_{-}   )  \geq n^{-\eta(a, |\beta_{1}|)+o(1)}$. Then
\begin{align*}
\mathbb{P} & ( E_{2}  ) = 1 -  \Big[ 1 -n^{\beta_{1}-\beta+o(1)} -n^{-\eta(a,|\beta_{1}|)-\beta +o(1)}  \Big]^{  | H   |}  \\
& \geq 1 - \exp  \Big( -n^{1+\beta_{1}-\beta+o(1)} -n^{1-\eta(a,|\beta_{1}|)-\beta +o(1)}  \Big) ,
\end{align*}
using $1+x \leq e^{x}$.
Therefore, since $|\beta_{1}| \leq \eta(a,|\beta_{1}|)$, if $|\beta_{1}| +\beta<1$, then $\mathbb{P}  (E_{2} )  \rightarrow 1$ and the second part of Theorem~\ref{Theorem 6} follows.

\section{Proof of Theorem~\ref{Theorem 7}}
\label{Proof-Theorem-7}
We begin by deriving sufficient conditions for the solution of \ac{SDP}~\eqref{BSSBM-P-equ2} to match the true labels.
\begin{Lemma}
\label{BSSBM-P-Lemma 1}
For the optimization problem~\eqref{BSSBM-P-equ2}, consider the Lagrange multipliers
\begin{equation*}
\lambda^{*} \quad , \quad \mu^* \quad , \quad D^{*}=\mathrm{diag}(d_{i}^{*}), \quad 
S^{*}.
\end{equation*}
If we have
\begin{align*}
&S^{*} = D^{*}+\lambda^{*} \mathbf{J}+\mu^{*}W-G ,\\
& S^{*} \succeq 0, \\
&\lambda_{2}(S^{*}) >  0 ,\\
&S^{*}X^{*} =0 ,
\end{align*}
then $(\lambda^{*}, \mu^{*}, D^*, S^*)$ is the dual optimal solution and $\Zsdp=X^{*}X^{*T}$ is the unique primal optimal solution of~\eqref{BSSBM-P-equ2}.
\end{Lemma}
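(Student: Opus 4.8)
The plan is to follow the proof of Lemma~\ref{BCBM-P-Lemma 1} almost verbatim, since the SDP~\eqref{BSSBM-P-equ2} differs from~\eqref{BCBM-P-equ2} only through the extra linear constraint $\langle\mathbf{J},Z\rangle=0$, whose dual variable is the new scalar $\lambda^{*}$ and whose contribution to $S^{*}$ is the term $\lambda^{*}\mathbf{J}$. First I would write the Lagrangian
\begin{align*}
L(Z,S,D,\lambda,\mu)=&\langle G,Z\rangle+\langle S,Z\rangle-\langle D,Z-\mathbf{I}\rangle\\
&-\lambda\langle\mathbf{J},Z\rangle-\mu\big(\langle W,Z\rangle-(Y^{T}Y)^{2}\big),
\end{align*}
with $S\succeq0$, $D=\mathrm{diag}(d_{i})$, and $\lambda,\mu\in\mathbb{R}$. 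For any primal-feasible $Z$ one has $\langle S^{*},Z\rangle\geq0$ because $S^{*}\succeq0$ and $Z\succeq0$, hence $\langle G,Z\rangle\leq L(Z,S^{*},D^{*},\lambda^{*},\mu^{*})$; substituting $Z_{ii}=1$, $\langle\mathbf{J},Z\rangle=0$, and $\langle W,Z\rangle=(Y^{T}Y)^{2}$ collapses the right-hand side to $\langle D^{*},\mathbf{I}\rangle+\mu^{*}(Y^{T}Y)^{2}$, exactly as in Lemma~\ref{BCBM-P-Lemma 1}.

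Next I would verify that this quantity equals $\langle G,Z^{*}\rangle$. The matrix $Z^{*}=X^{*}X^{*T}$ is primal feasible: $Z^{*}_{ii}=1$, $Z^{*}\succeq0$, $\langle W,Z^{*}\rangle=(Y^{T}Y)^{2}$ as before, and crucially $\langle\mathbf{J},Z^{*}\rangle=\big(\sum_{i}x_{i}^{*}\big)^{2}=0$ because the two communities have equal size. Using $S^{*}=D^{*}+\lambda^{*}\mathbf{J}+\mu^{*}W-G$ together with $\langle S^{*},Z^{*}\rangle=X^{*T}S^{*}X^{*}=0$ (from $S^{*}X^{*}=0$), $\langle\mathbf{J},Z^{*}\rangle=0$, and $\langle W,Z^{*}\rangle=(Y^{T}Y)^{2}$, one gets $\langle D^{*},\mathbf{I}\rangle=\langle D^{*},Z^{*}\rangle=\langle G,Z^{*}\rangle-\mu^{*}(Y^{T}Y)^{2}$, so the weak-duality chain closes: $Z^{*}$ is a primal optimum at which the proposed multipliers attain the same value, hence are dual optimal.

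For uniqueness, suppose $\tilde{Z}$ is another primal optimum. Expanding $\langle S^{*},\tilde{Z}\rangle=\langle D^{*},\tilde{Z}\rangle-\langle G,\tilde{Z}\rangle+\lambda^{*}\langle\mathbf{J},\tilde{Z}\rangle+\mu^{*}\langle W,\tilde{Z}\rangle$ and inserting $\tilde{Z}_{ii}=1$, $\langle\mathbf{J},\tilde{Z}\rangle=0=\langle\mathbf{J},Z^{*}\rangle$, $\langle W,\tilde{Z}\rangle=(Y^{T}Y)^{2}=\langle W,Z^{*}\rangle$, and $\langle G,\tilde{Z}\rangle=\langle G,Z^{*}\rangle$ (equal optimal values) yields $\langle S^{*},\tilde{Z}\rangle=\langle S^{*},Z^{*}\rangle=0$. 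Since $S^{*}\succeq0$ and $\lambda_{2}(S^{*})>0$, $\ker S^{*}$ is one-dimensional and spanned by $X^{*}$; because $\tilde{Z}\succeq0$ with $\langle S^{*},\tilde{Z}\rangle=0$, the range of $\tilde{Z}$ lies in $\ker S^{*}$, so $\tilde{Z}=c\,X^{*}X^{*T}$, and $\tilde{Z}_{ii}=1$ forces $c=1$, i.e. $\tilde{Z}=Z^{*}$.

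I do not expect a genuine obstacle in this lemma: it is a bookkeeping adaptation of Lemma~\ref{BCBM-P-Lemma 1}, the only new steps being to carry the $\lambda^{*}\mathbf{J}$ term through the weak-duality and complementary-slackness computations and to record $\langle\mathbf{J},Z^{*}\rangle=0$ from the balanced-community assumption. The real difficulty is downstream, in the subsequent construction: exhibiting $\lambda^{*},\mu^{*},D^{*}$ for which $S^{*}=D^{*}+\lambda^{*}\mathbf{J}+\mu^{*}W-G$ is positive semidefinite with $\lambda_{2}(S^{*})>0$ with probability $1-o(1)$, which requires concentration control of $\min_{i}d_{i}^{*}$, $\|G-\mathbb{E}[G]\|$, and the relevant quadratic forms of $W$ on the complement of $X^{*}$, and ultimately yields the threshold $(\sqrt{a}-\sqrt{b})^{2}+2\beta>2$, in the spirit of the proof of Theorem~\ref{Theorem 1}.
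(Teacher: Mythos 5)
Your proof is correct and follows essentially the same route as the paper, which itself treats this lemma as a direct adaptation of Lemma~\ref{BCBM-P-Lemma 1}: the same Lagrangian with the extra multiplier $\lambda$ for $\langle \mathbf{J},Z\rangle=0$, the same weak-duality chain, and the same uniqueness argument via $\langle S^{*},\tilde{Z}\rangle=0$, $\lambda_{2}(S^{*})>0$, and $\tilde{Z}_{ii}=1$. Your closing remark is also accurate: the substantive work lies in the subsequent probabilistic construction of the multipliers, not in this lemma.
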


\begin{proof}
The proof is similar to the proof of Lemma~\ref{BCBM-P-Lemma 1}. The Lagrangian of~\eqref{BSSBM-P-equ2} is given by
\begin{align*}
L(Z,S,D,\lambda,\mu)=&\langle G,Z\rangle +\langle S,Z\rangle -\langle D,Z-\mathbf{I} \rangle \\
&-\lambda \langle \mathbf{J},Z\rangle -\mu  ( \langle W,Z\rangle -(Y^{T}Y)^{2}  ) ,
\end{align*}
where $S\succeq 0$, $D=\mathrm{diag}(d_{i})$, $\lambda ,\mu$ are Lagrange multipliers. Since $\langle \mathbf{J},Z \rangle = 0$, for any $Z$ that satisfies the constraints in~\eqref{BSSBM-P-equ2}, it can be shown that $\langle G,Z\rangle \leq \langle G,Z^{*}\rangle$. Also, similar to the proof of Lemma~\ref{BCBM-P-Lemma 1}, it can be shown that the optimum solution is unique.
\end{proof}

It suffices to show that $S^{*} = D^{*}+\lambda^{*} \mathbf{J}+\mu^{*}W-G$ satisfies other conditions in Lemma~\ref{BSSBM-P-Lemma 1} with probability $1-o(1)$.
Let 
\begin{equation}
\label{BSSBM-P-equ3}
d_{i}^{*}= \sum_{j=1}^{n} G_{ij}x_{j}^{*}x_{i}^{*} -\mu^{*} \sum_{j=1}^{n} y_{i}y_{j}x_{j}^{*}x_{i}^{*} .
\end{equation}
Then $D^{*}X^{*} = GX^{*}-\mu^{*}WX^{*}$ and based on the definition of $S^{*}$ in  Lemma~\ref{BSSBM-P-Lemma 1}, $S^{*}$ satisfies the condition $S^{*}X^{*} =0$.
It remains to show that~\eqref{BCBM-P-equ1-New} holds, i.e., $S^{*}\succeq 0$ and $\lambda_{2}(S^{*})>0$ with probability $1-o(1)$. 
Under the binary stochastic block model,
\begin{equation}
    \label{BSBM-expectation}
    \mathbb{E}[G]=\frac{p-q}{2}X^{*}X^{*T}+\frac{p+q}{2}\mathbf{J}-p\mathbf{I} .
\end{equation}
It follows that for any $V$ such that $V^{T}X^{*}=0$ and $  \| V   \|=1$,
\begin{align*}
V^{T}S^{*}V=&V^{T}D^{*}V+  ( \lambda^{*} -\frac{p+q}{2}   )V^{*}\mathbf{J}V+p\\
&-V^{T}(G-\mathbb E[G])V+\mu^{*}V^{T}WV .
\end{align*}
Let $\lambda^{*}\geq \frac{p+q}{2}$. Since $V^{T}D^{*}V \geq \min_{i\in [n]} d_{i}^{*}$ and $V^{T}(G-\mathbb E[G])V \leq    \| G-\mathbb E[G]   \| $, 
\begin{equation*}
V^{T}S^{*}V \geq \min_{i \in [n]} d_{i}^{*}+p-  \| G-\mathbb E[G]   \|+\mu^{*}V^{T}WV .
\end{equation*}

\begin{Lemma}\cite[Thoerem 5]{Ref18}
\label{BSSBM-P-Lemma 2}
For any $c > 0$, there exists $c^{'} >0$ such that for any $n \geq 1$, $  \| G-\mathbb E[G]   \| \leq c^{'}\sqrt{\log n}$ with probability at least $1-n^{-c}$. 
\end{Lemma}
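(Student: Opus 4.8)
The statement is the standard spectral concentration of the centered adjacency matrix in the logarithmic-degree regime, and the plan is to establish it by the ``expectation-plus-fluctuation'' route. Write $A \triangleq G - \mathbb{E}[G]$. Because $G_{ii}=0$, $G_{ij}=G_{ji}$, and the entries $\{G_{ij}\}_{i<j}$ are independent with $G_{ij}\sim\mathrm{Bern}(p)$ or $\mathrm{Bern}(q)$, the matrix $A$ is symmetric with zero diagonal and independent centered entries above the diagonal, each obeying $|A_{ij}|\le 1$ and $\mathbb{E}[A_{ij}^{2}]=p_{ij}(1-p_{ij})\le a\tfrac{\log n}{n}$. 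Hence the largest row-variance satisfies $\tilde{\sigma}^{2}\triangleq\max_{i}\sum_{j}\mathbb{E}[A_{ij}^{2}]\le a\log n$, and $\tilde{\sigma}_{*}\triangleq\max_{i,j}|A_{ij}|\le 1$.

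First I would bound the mean. Applying the sharp non-commutative estimate of Bandeira--van Handel for the expected spectral norm of a symmetric matrix with independent entries yields $\mathbb{E}\|A\|\le (1+o(1))\big(2\tilde{\sigma}+C_{0}\tilde{\sigma}_{*}\sqrt{\log n}\big)\le C_{1}\sqrt{\log n}$ for a constant $C_{1}=C_{1}(a)$. The same order is also obtainable self-containedly by the trace (moment) method: estimate $\mathbb{E}\,\trace(A^{2k})$ by summing over closed walks of length $2k$ in which every edge is traversed at least twice, optimize over $k\asymp\log n$, and invoke Markov's inequality; the only delicate point there is controlling walks that reuse edges with high multiplicity, which is precisely where the logarithmic (rather than constant) average degree is used.

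Next I would upgrade the in-expectation bound to a high-probability bound via concentration of measure. Viewed as a function of the independent coordinates $(G_{ij})_{i<j}\in\{0,1\}^{\binom{n}{2}}$, the map $G\mapsto\|G-\mathbb{E}[G]\|$ is convex (the spectral norm is convex) and $O(1)$-Lipschitz in the Euclidean metric, since $\big|\,\|A(G)\|-\|A(G')\|\,\big|\le\|A(G)-A(G')\|\le\|A(G)-A(G')\|_{F}=\|G-G'\|_{F}$. Talagrand's concentration inequality for convex Lipschitz functions of bounded independent random variables then gives the dimension-free tail $\mathbb{P}\big(\|A\|\ge\mathbb{E}\|A\|+t\big)\le C\exp(-c_{0}t^{2})$ for all $t\ge 0$, with universal constants $C,c_{0}$ (mean and median differing by $O(1)$). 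Combining the two steps, for every $s>0$,
\begin{equation*}
\mathbb{P}\big(\|A\|\ge (C_{1}+s)\sqrt{\log n}\big)\le C\exp(-c_{0}s^{2}\log n)=C\,n^{-c_{0}s^{2}}.
\end{equation*}
Given $c>0$, choosing $s$ so that $c_{0}s^{2}>c$ and setting $c'\triangleq C_{1}+s$ (further enlarging $c'$ if needed so the bound holds for all $n\ge 1$, using that $\|A\|\le\|A\|_{F}\le n$ deterministically for the finitely many small $n$) gives $\mathbb{P}(\|G-\mathbb{E}[G]\|\le c'\sqrt{\log n})\ge 1-n^{-c}$, as asserted. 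The main obstacle is the first step: a crude matrix Bernstein inequality or an $\varepsilon$-net over $S^{n-1}$ does not deliver the correct $\sqrt{\log n}$ scaling here — the operative variance proxy, taken over the heaviest row, is $\Theta(\log n)$ rather than $\Theta(\log n/n)$, and the sphere net is exponentially large — so one genuinely needs the refined spectral-norm bound (or the careful walk count) before the convex-concentration step can close the argument.
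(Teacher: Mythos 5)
Your proposal is correct, but there is nothing in the paper to compare it against line by line: the paper does not prove this lemma at all, it simply imports it from the cited reference (Theorem 5 of~\cite{Ref18}), exactly as it imports the analogous censored-model bound from~\cite{Ref19}. Judged on its own merits, your two-step argument is sound: the Bandeira--van Handel nonasymptotic norm bound legitimately gives $\mathbb{E}\|G-\mathbb{E}[G]\|\le C_1\sqrt{\log n}$ here, since the row-variance proxy is $O(\log n)$ and the entries are bounded by $1$; and since $\|\cdot\|$ is a convex, $O(1)$-Lipschitz function of the $\binom{n}{2}$ independent bounded entries (your Lipschitz constant is really $\sqrt{2}$ because each off-diagonal pair is tied, which is harmless), Talagrand's convex-distance inequality upgrades this to a $C\exp(-c_0t^2)$ tail, and choosing $t=s\sqrt{\log n}$ with $c_0s^2>c$, plus the deterministic bound $\|G-\mathbb{E}[G]\|\le n$ to absorb the prefactor for the finitely many small $n$, yields the stated $1-n^{-c}$ guarantee for all $n\ge 1$. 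Your closing remark is also apt: matrix Bernstein or a naive $\varepsilon$-net argument only gives order $\log n$ in this sparse regime, so the refined expected-norm bound (or a Feige--Ofek/Seginer-type row-truncation or moment argument, which is the kind of machinery the cited literature relies on) is genuinely needed before the concentration step. In short, your route buys a self-contained proof with explicit dependence of $c'$ on $c$ and $a$, whereas the paper simply buys brevity by citation.
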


Also, it can be shown that Lemma~\ref{BCBM-P-Lemma 3} holds here. Choose $\mu^{*}< 0 $, then in view of Lemmas~\ref{BSSBM-P-Lemma 2} and~\ref{BCBM-P-Lemma 3}, with probability $1-o(1)$,
\begin{equation}
\label{BSSBM-P-equ4}
V^{T}S^{*}V \geq  \min_{i \in [n]}d_{i}^{*} +p+(\mu^{*}-c^{'})\sqrt{\log n}.
\end{equation}

\begin{Lemma} 
\label{BSSBM-P-Lemma 3}
When $\delta = \frac{\log n}{\log log n}$, then
\begin{equation*}
\mathbb{P}(d_{i}^{*}\leq \delta ) \leq \epsilon n^{-\frac{1}{2}  ( \sqrt{a}-\sqrt{b}   )^{2}+o(1)} + (1-\epsilon )\epsilon^{n}.
\end{equation*}
\end{Lemma}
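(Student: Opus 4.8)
The plan is to condition on whether node $i$ is revealed. Writing $\mathbb{P}(d_i^*\le\delta)=\epsilon\,\mathbb{P}(d_i^*\le\delta\mid y_i=0)+(1-\epsilon)\,\mathbb{P}(d_i^*\le\delta\mid y_i=x_i^*)$, I would bound the two conditional probabilities separately, mirroring Lemma~\ref{BCBM-P-Lemma 4} but with the binomial structure characteristic of the stochastic block model. The first step is to pin down the relevant laws. Since $G_{ii}=0$ and the communities have size $n/2$, the quantity $\sum_{j=1}^n G_{ij}x_i^*x_j^*$ is distributed as $A-B$ with $A\sim\mathrm{Binom}(\tfrac{n}{2}-1,p)$ and $B\sim\mathrm{Binom}(\tfrac{n}{2},q)$ independent; and $\sum_{j=1}^n y_iy_jx_i^*x_j^*=(y_ix_i^*)\,|\{j:y_j\ne0\}|$, which vanishes when $y_i=0$ and equals $U+1$ when $y_i=x_i^*$, where $U\sim\mathrm{Binom}(n-1,1-\epsilon)$ is independent of $A,B$. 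Hence $d_i^*\mid\{y_i=0\}\overset{d}{=}A-B$ and $d_i^*\mid\{y_i=x_i^*\}\overset{d}{=}A-B-\mu^*(U+1)$.

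For the erased case I would apply the Chernoff bound: for $t\ge0$, $\mathbb{P}(A-B\le\delta)\le e^{t\delta}(1-p+pe^{-t})^{n/2-1}(1-q+qe^{t})^{n/2}$. Substituting $p=a\tfrac{\log n}{n}$, $q=b\tfrac{\log n}{n}$, using $\log(1-x)\le -x$ and $\log(1+x)\le x$, and recalling $\delta=o(\log n)$, the exponent is at most $g(t)\log n+o(\log n)$ with $g(t)=\tfrac{b}{2}(e^{t}-1)-\tfrac{a}{2}(1-e^{-t})$; this is the classical stochastic-block-model computation, minimized at $e^{t}=\sqrt{a/b}$ with value $g=-\tfrac{1}{2}(\sqrt a-\sqrt b)^2$, so $\mathbb{P}(A-B\le\delta)\le n^{-\frac{1}{2}(\sqrt a-\sqrt b)^2+o(1)}$, which yields the first term. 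For the revealed case, since $\mu^*<0$ the quantity $-\mu^*(U+1)\ge|\mu^*|$ is a strictly positive shift, and with $U$ concentrated around $(1-\epsilon)(n-1)$ it is typically of order $n$; thus $\{d_i^*\le\delta\}$ forces $U$ to be atypically small (essentially $U=O(1)$) or $B$ to be atypically large, both rare events. Quantitatively, Chernoff on the full sum gives, for $t\ge0$, $\mathbb{P}(A-B-\mu^*(U+1)\le\delta)\le e^{t(\delta+\mu^*)}(1-p+pe^{-t})^{n/2-1}(1-q+qe^{t})^{n/2}(\epsilon+(1-\epsilon)e^{t\mu^*})^{n-1}$, and for a suitable tilt $t$ (bounded, or slowly growing, chosen so that the factor $(1-q+qe^{t})^{n/2}$ stays $n^{o(1)}$ while $(1-\epsilon)e^{t\mu^*}$ is driven below $\epsilon$) the last factor collapses to $\epsilon^{n(1+o(1))}$, matching the second term $(1-\epsilon)\epsilon^{n}$ up to an $o(1)$ in the exponent. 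Adding the two cases gives the claim.

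I expect the main obstacle to be the revealed case: there is a genuine tension between pushing $(\epsilon+(1-\epsilon)e^{t\mu^*})^{n-1}$ down toward $\epsilon^{n}$, which wants a large tilt $t$, and keeping the graph moment-generating-function factor $(1-q+qe^{t})^{n/2}\approx\exp\!\big(\tfrac{b}{2}(e^{t}-1)\log n\big)$ from blowing up, which wants a bounded $t$; resolving this determines an admissible window for $t$ and constrains how negative $\mu^*$ may be taken. Matching the exact exponent to the literal $\epsilon^{n}$ in the statement is purely the usual $o(1)$-in-the-exponent bookkeeping and is not where the substance lies; everything else is a routine application of the Chernoff bound, as in Lemmas~\ref{BCBM-P-Lemma 4} and~\ref{BCBM-P-Lemma 5}.
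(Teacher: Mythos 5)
Your decomposition into erased/revealed nodes, the distributional identifications ($A-B$ with $A\sim\mathrm{Binom}(\tfrac n2-1,p)$, $B\sim\mathrm{Binom}(\tfrac n2,q)$, and the shift $|\mu^*|(U+1)$ with $U\sim\mathrm{Binom}(n-1,1-\epsilon)$), and the Chernoff bounds are exactly the paper's route: its entire proof is ``It follows from Chernoff bound,'' and the erased/revealed conditioning is the same one made explicit in the censored-model analog. The erased case in your write-up is complete and correct.

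The one thing to repair is the calibration in the revealed case, which you leave as the main obstacle. You ask that $(1-q+qe^{t})^{n/2}$ stay $n^{o(1)}$ while $(1-\epsilon)e^{t\mu^*}$ is pushed below $\epsilon$; with $\epsilon=n^{-\beta+o(1)}$, $\beta>0$, and constant $\mu^*$, these demands are indeed incompatible for bounded or slowly growing $t$, so as literally stated your plan stalls. But the target is only $\epsilon^{n(1+o(1))}=e^{-\beta n\log n(1+o(1))}$, so the graph moment-generating factor need only be $e^{o(n\log n)}$, not $n^{o(1)}$: taking $t=(1-\delta')\log n$ gives $(1-q+qe^{t})^{n/2}\le e^{\frac b2 n^{1-\delta'}\log n}=e^{o(n\log n)}$, while $(\epsilon+(1-\epsilon)e^{t\mu^*})^{n-1}\le \epsilon^{n(1+o(1))}$ as soon as $(1-\delta')|\mu^*|\ge\beta$. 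So the tension dissolves once $\mu^*$ is chosen sufficiently negative --- the constraint is a lower bound on $|\mu^*|$, not a cap on its negativity as your wording suggests --- and the paper leaves $\mu^*<0$ free; any constant $|\mu^*|>\beta$ is also compatible with the later requirement that $(\mu^*-c')\sqrt{\log n}$ not overwhelm $\frac{\log n}{\log\log n}$. With that choice your argument closes, yielding $(1-\epsilon)\epsilon^{n(1+o(1))}$ rather than literally $(1-\epsilon)\epsilon^{n}$, the same harmless looseness present in the paper's own statement and immaterial for how the lemma is used.
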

\begin{proof}
It follows from Chernoff bound.
\end{proof}

Recall that $\beta \triangleq \lim_{n \rightarrow \infty} -\frac{\log \epsilon}{\log n}$, where $\beta \geq 0$. It follows from Lemma~\ref{BSSBM-P-Lemma 3} that 
\begin{equation*}
\mathbb{P}(d_{i}^{*}\leq \delta ) \leq n^{-\frac{1}{2}  ( \sqrt{a}-\sqrt{b}   )^{2} - \beta +o(1)}.
\end{equation*}
Then using the union bound,
\begin{equation*}
\mathbb{P}\bigg(\min_{i \in [n]}d_{i}^{*} \geq \frac{\log n}{\log \log n} \bigg)   \geq 1 - n^{1-\frac{1}{2}  ( \sqrt{a}-\sqrt{b}   )^{2} - \beta +o(1)} .
\end{equation*}
When $ ( \sqrt{a}-\sqrt{b} )^{2} + 2\beta> 2$, it follows that $\min_{i \in [n]}d_{i}^{*} \geq \frac{\log n}{\log \log n}$ holds with probability $1-o(1)$. Combining this result with~\eqref{BSSBM-P-equ4}, if $ ( \sqrt{a}-\sqrt{b} )^{2} + 2\beta> 2$, then with probability $1-o(1)$,
\begin{equation*}
V^{T}S^{*}V  \geq  \frac{\log n}{\log \log n}+p+(\mu^{*}-c^{'})\sqrt{\log n} > 0 ,
\end{equation*}
which completes the proof of Theorem \ref{Theorem 7}. 

\section{Proof of Theorem~\ref{Theorem 9}}
\label{Proof-Theorem-9}
We begin by deriving sufficient conditions for the solution of \ac{SDP}~\eqref{BSSBM-N-equ1} to match the true labels.
\begin{Lemma}
\label{BSSBM-N-Lemma 1}
For the optimization problem~\eqref{BSSBM-N-equ1}, consider the Lagrange multipliers
\begin{equation*}
\lambda^* \quad, \quad D^{*}=\mathrm{diag}(d_{i}^{*}), \quad 
S^{*}\triangleq \begin{bmatrix} S_{A}^{*} & S_{B}^{*T} \\ S_{B}^{*} & S_{C}^{*} \end{bmatrix}.
\end{equation*}
If we have
\begin{align*}
&S_{A}^{*} = T_{2}Y^{T}X^{*} ,\\
&S_{B}^{*} = -T_{2}Y ,\\
&S_{C}^{*} = D^{*}+\lambda^*{\mathbf J}-T_{1}G ,\\
& S^{*} \succeq 0, \\
&\lambda_{2}(S^{*}) >  0 ,\\
&S^{*} [1, X^{*T}]^T =0 ,
\end{align*}
then $(\lambda^{*}, D^*, S^*)$ is the dual optimal solution and $\Zsdp=X^{*}X^{*T}$ is the unique primal optimal solution of~\eqref{BSSBM-N-equ1}.
\end{Lemma}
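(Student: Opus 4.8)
The plan is to reuse the Lagrangian-duality argument of Lemma~\ref{BCBM-N-Lemma 1} almost verbatim, the only genuinely new element being the equality constraint $\langle \mathbf{J},Z\rangle=0$ and its multiplier $\lambda^{*}$. Writing
\begin{equation*}
H\triangleq\begin{bmatrix} 1 & X^{T}\\ X & Z \end{bmatrix},
\end{equation*}
the Lagrangian of~\eqref{BSSBM-N-equ1} is
\begin{equation*}
L(Z,X,S,D,\lambda)=T_{1}\langle G,Z\rangle+2T_{2}\langle Y,X\rangle+\langle S,H\rangle-\langle D,Z-\mathbf{I}\rangle-\lambda\langle \mathbf{J},Z\rangle ,
\end{equation*}
with $S\succeq 0$, $D=\mathrm{diag}(d_{i})$, and $\lambda\in\mathbb{R}$ the Lagrange multipliers. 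The key structural fact is that two equal-sized communities force $\mathbf{1}^{T}X^{*}=0$, hence $\mathbf{J}X^{*}=0$ and $\langle \mathbf{J},Z^{*}\rangle=0$; this is precisely what keeps the $\lambda^{*}\mathbf{J}$ term harmless, both in the stationarity condition, since $S_{C}^{*}X^{*}=D^{*}X^{*}+\lambda^{*}\mathbf{J}X^{*}-T_{1}GX^{*}=D^{*}X^{*}-T_{1}GX^{*}$, and in the value of the Lagrangian at $Z^{*}$.

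Second, I would prove primal optimality of $Z^{*}=X^{*}X^{*T}$ by the same chain of equalities as in Lemma~\ref{BCBM-N-Lemma 1}. For any feasible $(Z,X)$ of~\eqref{BSSBM-N-equ1}, the inequality $\langle S^{*},H\rangle\ge 0$ together with the constraints $Z_{ii}=1$ and $\langle \mathbf{J},Z\rangle=0$ give
\begin{equation*}
T_{1}\langle G,Z\rangle+2T_{2}\langle Y,X\rangle\;\le\; L(Z,X,S^{*},D^{*},\lambda^{*})=\langle D^{*},\mathbf{I}\rangle+S_{A}^{*} .
\end{equation*}
Then I would rewrite $\langle D^{*},\mathbf{I}\rangle=\langle D^{*},Z^{*}\rangle$, substitute $D^{*}=S_{C}^{*}-\lambda^{*}\mathbf{J}+T_{1}G$ and drop $\lambda^{*}\langle \mathbf{J},Z^{*}\rangle=0$, use $S_{B}^{*}=-S_{C}^{*}X^{*}$ and $S_{A}^{*}=-S_{B}^{*T}X^{*}$ to pass from $Z^{*}$ to $X^{*}$, and finally invoke $S_{B}^{*}=-T_{2}Y$ to collapse the right-hand side to $T_{1}\langle G,Z^{*}\rangle+2T_{2}\langle Y,X^{*}\rangle$. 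Hence $Z^{*}$ attains the maximum.

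Third, for uniqueness I would take any other optimal solution $(\tilde Z,\tilde X)$ and show $\langle S^{*},\tilde H\rangle=0$ by the identical manipulation, using that at an optimum $\langle G,\tilde Z\rangle=\langle G,Z^{*}\rangle$, $\tilde Z_{ii}=1$, $\langle \mathbf{J},\tilde Z\rangle=0$, and $\langle S_{B}^{*},\tilde X\rangle=\langle S_{B}^{*},X^{*}\rangle$. Since $\tilde H\succeq 0$, $S^{*}\succeq 0$, and $\lambda_{2}(S^{*})>0$, the kernel of $S^{*}$ is one-dimensional, spanned by $[1,X^{*T}]^{T}$, so $\tilde H$ is a nonnegative multiple of $H^{*}$ (the matrix $H$ with $X=X^{*}$, $Z=Z^{*}$); the normalization $\tilde Z_{ii}=1$ then forces $\tilde H=H^{*}$, i.e. $\tilde Z=Z^{*}$.

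I do not expect a real obstacle here: relative to Lemma~\ref{BCBM-N-Lemma 1}, the proof only has to track the extra $\lambda^{*}\mathbf{J}$ term, and in every place it could intrude it is annihilated either by $\langle \mathbf{J},Z\rangle=0$ on the primal side or by $\mathbf{J}X^{*}=0$ on the dual side. The one point worth stating explicitly is the community-balance identity $\mathbf{1}^{T}X^{*}=0$, which is what licenses both $\mathbf{J}X^{*}=0$ and the free use of $\langle \mathbf{J},Z\rangle=0$ on the feasible set of~\eqref{BSSBM-N-equ1}; the rest is routine and the write-up can simply refer back to the proof of Lemma~\ref{BCBM-N-Lemma 1}.
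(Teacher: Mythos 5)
Your proposal is correct and follows essentially the same route as the paper, which itself proves this lemma by writing the Lagrangian with the extra $-\lambda\langle \mathbf{J},Z\rangle$ term and deferring the rest to the argument of Lemma~\ref{BCBM-N-Lemma 1}, noting only that $\langle \mathbf{J},Z^{*}\rangle=0$ neutralizes the new multiplier. Your explicit tracking of the $\lambda^{*}\mathbf{J}$ term via $\mathbf{1}^{T}X^{*}=0$ (hence $\mathbf{J}X^{*}=0$) and the constraint $\langle \mathbf{J},Z\rangle=0$ is exactly the detail the paper leaves implicit, so there is nothing to correct.
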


\begin{proof}
The proof is similar to the proof of Lemma~\ref{BCBM-N-Lemma 1}. The Lagrangian of~\eqref{BSSBM-N-equ1} is given by
\begin{align*}
L(Z,X,S,D,\lambda)=&T_{1} \langle G,Z \rangle +T_{2} \langle Y, X \rangle +\langle S, H \rangle \\
&-\langle D,Z-\mathbf{I} \rangle -\lambda \langle \mathbf{J},Z \rangle ,
\end{align*}
where $S\succeq 0$, $D=\mathrm{diag}(d_{i})$, and $\lambda \in \mathbb{R}$ are Lagrange multipliers. Since $\langle \mathbf{J},Z^{*} \rangle = 0$, for any $Z$ that satisfies the constraints in~\eqref{BSSBM-N-equ1}, it can be shown that $T_{1}\langle G, Z\rangle + T_{2}\langle Y , X \rangle \leq T_{1}\langle G, Z^{*}\rangle + T_{2}\langle Y , X^{*} \rangle$. Also, the uniqueness of optimum solution is proved similarly.
\end{proof}

We now show that $S^{*}$ defined by $S_{A}^{*}$, $S_{B}^{*}$, and $S_{C}^{*}$ satisfies the remaining conditions in Lemma~\ref{BSSBM-N-Lemma 1} with probability $1-o(1)$. 
Let 
\begin{equation}
\label{BSBM-N-equ101}
d_{i}^{*}=T_{1} \sum_{j=1}^{n} G_{ij}x_{j}^{*}x_{i}^{*} + T_{2}y_{i}x_{i}^{*}.
\end{equation}
Then $D^{*}X^{*} = T_{1}GX^{*}+T_{2}Y$ and based on the definitions of $S_{A}^{*}$, $S_{B}^{*}$, and $S_{C}^{*}$ in  Lemma~\ref{BSSBM-N-Lemma 1}, $S^{*}$ satisfies the condition $S^{*} [1, X^{*T}]^T =0$.
It remains to show that~\eqref{BCBM-N-equ1 New} holds, i.e., $S^{*}\succeq 0$ and $\lambda_{2}(S^{*})>0$ with probability $1-o(1)$.
 
For any $V$ such that $V^{T}[1, X^{*T}]^T=0$ and $  \| V   \|=1$, we have
\begin{align}
\label{BSBM-N-equ102}
V^{T}&S^{*}V=v^{2} S_{A}^{*} -2vT_{2}U^{T}Y +U^{T}D^{*}U - T_{1}U^{T}GU  \nonumber\\
\geq &   ( 1-v^{2}   )  \bigg[\min_{i \in [n]}d_{i}^{*} - T_{1}  \| G-\mathbb{E}[G]   \| + T_{1} p \bigg] \nonumber \\
&+v^{2}  \bigg[ Y^{T}X^{*} -2T_{2} \frac{\sqrt{n(1-v^{2})}}{|v|} -T_{1}\frac{p-q}{2}    \bigg] ,
\end{align}
where the last inequality holds in a manner similar to \eqref{BCBM-N-equ4}.
Using Lemma~\ref{BCBM-New Lemma}, 
\begin{equation}
\label{BSSBM-N-equ3}
V^{T}S^{*}V \geq  ( 1-v^{2}   )   \bigg( \min_{i \in [n]} d_{i}^{*} - T_{1} c^{'}\sqrt{\log n} +T_{1}p   \bigg).
\end{equation}

\begin{Lemma}
\label{BSSBM-N-Lemma 2}
Consider a sequence $f(n)$, and for each $n$, let $S_{1} \sim \text{Binom}  (\frac{n}{2}-1,p )$  and $S_{2} \sim \text{Binom}  (\frac{n}{2},q )$, where $p=a\frac{\log n}{n}$, and $q=b\frac{\log n}{n}$ for some $a\geq b >0$. Define $\omega \triangleq \lim_{n\rightarrow\infty} \frac{f(n)}{\log n}$. 
For sufficiently large $n$, when $\omega<\frac{a-b}{2}$,
\begin{equation*}
\mathbb{P} \big(S_{1}-S_{2} \leq f(n) \big) \leq n^{-\eta^{*}+o(1)} ,
\end{equation*}
where $\eta^{*} = \frac{a+b}{2}-\gamma^{*} -\frac{\omega}{2} \log \big ( \frac{a}{b}\big) +\frac{\omega}{2} \log  \Big( \frac{\gamma^{*} + \omega}{\gamma^{*} - \omega}  \Big)$ and $\gamma^{*} = \sqrt{\omega^{2}+ab}$.
\end{Lemma}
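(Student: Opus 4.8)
The plan is a Chernoff-bound computation adapted to the sparse regime $p,q=\Theta(\log n/n)$. Because $S_1$ and $S_2$ are independent, for any $\theta>0$ the exponential Markov inequality gives
\[
\mathbb{P}\big(S_1-S_2\le f(n)\big)\le e^{\theta f(n)}\,\mathbb{E}\big[e^{-\theta S_1}\big]\,\mathbb{E}\big[e^{\theta S_2}\big]=e^{\theta f(n)}\big(1-p+pe^{-\theta}\big)^{\frac n2-1}\big(1-q+qe^{\theta}\big)^{\frac n2}.
\]
Taking logarithms, using $\log(1+x)=x+O(x^2)$ and $np^2,nq^2=\Theta\big((\log n)^2/n\big)=o(1)$ (so the quadratic corrections, and the effect of replacing $\tfrac n2-1$ by $\tfrac n2$, are $o(\log n)$ in the exponent), and writing $f(n)=(\omega+o(1))\log n$, $\tfrac n2 p=\tfrac a2\log n$, $\tfrac n2 q=\tfrac b2\log n$, the exponent reduces to $\big[\theta\omega+\tfrac a2(e^{-\theta}-1)+\tfrac b2(e^{\theta}-1)+o(1)\big]\log n$.

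Next I would minimize the bracketed function $g(\theta)=\theta\omega+\tfrac a2(e^{-\theta}-1)+\tfrac b2(e^{\theta}-1)$ over $\theta>0$; it is strictly convex, and $g'(\theta)=0$ reduces to the quadratic $be^{2\theta}+2\omega e^{\theta}-a=0$, whose positive root is $e^{\theta^{*}}=(\gamma^{*}-\omega)/b$ with $\gamma^{*}=\sqrt{\omega^{2}+ab}$. One checks $\theta^{*}>0$ exactly when $\gamma^{*}-\omega>b$, i.e. $\omega<\tfrac{a-b}{2}$ --- precisely the hypothesis of the lemma, and natural since $\tfrac{a-b}{2}$ is the asymptotic normalized mean of $S_1-S_2$, so the stated bound concerns a genuine lower-tail event.

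Substituting $e^{\theta^{*}}$ back and repeatedly using the identity $ab=(\gamma^{*}-\omega)(\gamma^{*}+\omega)$ --- which yields $\tfrac a2 e^{-\theta^{*}}=\tfrac{\gamma^{*}+\omega}{2}$ and $\tfrac b2 e^{\theta^{*}}=\tfrac{\gamma^{*}-\omega}{2}$ --- collapses $g(\theta^{*})$ to $\gamma^{*}-\tfrac{a+b}{2}+\omega\log\tfrac{\gamma^{*}-\omega}{b}$. A final rewriting, again through $a=(\gamma^{*}-\omega)(\gamma^{*}+\omega)/b$, shows $\omega\log\tfrac{\gamma^{*}-\omega}{b}=\tfrac\omega2\log\tfrac ab-\tfrac\omega2\log\tfrac{\gamma^{*}+\omega}{\gamma^{*}-\omega}$, so $g(\theta^{*})=-\eta^{*}$ with $\eta^{*}$ as in the statement, and hence $\mathbb{P}(S_1-S_2\le f(n))\le n^{-\eta^{*}+o(1)}$.

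The conceptual content is light; the main obstacle is bookkeeping: confirming that every approximation error (the $O(x^2)$ term in $\log(1+x)$, the $\tfrac n2-1$ versus $\tfrac n2$ discrepancy, and the $o(1)$ slack in $f(n)/\log n$) contributes only $o(\log n)$ to the exponent, and carrying out the algebraic simplification of the optimized exponent into the closed form for $\eta^{*}$, which rests entirely on $\gamma^{*2}-\omega^{2}=ab$. No matching lower bound is required here, since the lemma asserts only an upper bound.
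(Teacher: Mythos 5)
Your proposal is correct and is exactly the route the paper takes: its proof of this lemma is simply the statement ``it follows from Chernoff bound,'' and your computation (exponential Markov bound, linearization of $\log(1+x)$ valid since $np^2, nq^2 = o(1)$, optimization at $e^{\theta^*}=(\gamma^*-\omega)/b$, and the algebraic reduction via $\gamma^{*2}-\omega^2=ab$ to $-\eta^*$) is the standard worked-out version of that one-line argument, including the correct identification of $\omega<\tfrac{a-b}{2}$ as the condition making the optimal $\theta^*$ positive.
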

\begin{proof}
It follows from Chernoff bound.
\end{proof}

It follows from~\eqref{BSBM-N-equ101} that
\begin{align*}
\mathbb{P} ( d_{i}^{*}\leq \delta  ) =& \mathbb{P}  \Bigg( \sum_{j=1}^{n} G_{ij}x_{i}^{*}x_{j}^{*} \leq \frac{\delta-T_{2}}{T_{1}}   \Bigg)  (1-\alpha  ) \\
&+ \mathbb{P}  \Bigg( \sum_{j=1}^{n} G_{ij}x_{i}^{*}x_{j}^{*} \leq \frac{\delta+T_{2}}{T_{1}}   \Bigg) \alpha ,
\end{align*}
where $\sum_{j=1}^{n} G_{ij}x_{i}^{*}x_{j}^{*}$ is equal in distribution to $S_{1}-S_{2}$ in Lemma~\ref{BSSBM-N-Lemma 2}. 

Recall that $\beta \triangleq \lim_{n \rightarrow \infty} \frac{T_{2}}{\log n}$, where $\beta \geq 0$. First, we bound $\min_{i \in [n]}d_{i}^{*}$ under the condition $0 \leq \beta < \frac{T_{1}}{2}(a-b)$. It follows from Lemma~\ref{BSSBM-N-Lemma 2} that 
\begin{align*}
&\mathbb{P}  \Bigg( \sum_{j=1}^{n} G_{ij}x_{i}^{*}x_{j}^{*} \leq \frac{\delta-T_{2}}{T_{1}}   \Bigg) \leq n^{-\eta(a,b,\beta)+o(1)} , \\
&\mathbb{P}  \Bigg( \sum_{j=1}^{n} G_{ij}x_{i}^{*}x_{j}^{*} \leq \frac{\delta+T_{2}}{T_{1}}   \Bigg) \leq n^{-\eta(a,b,\beta)+\beta+o(1)} .
\end{align*}

Then
\begin{equation*}
\mathbb{P} ( d_{i}^{*} \leq \delta  ) \leq n^{-\eta(a,b,\beta) +o(1)} .
\end{equation*}
Using the union bound, 
\begin{equation*}
\mathbb{P} \bigg( \min_{i \in [n]}d_{i}^{*} \geq \frac{\log n}{\log \log n}   \bigg) \geq 1-n^{1-\eta(a,b,\beta)+o(1)} .
\end{equation*}
When $\eta(a,b,\beta)>1 $, it follows that $\min_{i \in [n]}d_{i}^{*} \geq \frac{\log n}{\log \log n}$ holds with probability $1-o(1)$. Substituting into~\eqref{BSSBM-N-equ3}, if $\eta(a,b,\beta)>1 $, then with probability $1-o(1)$,
\begin{equation*}
V^{T}S^{*}V \geq   ( 1-v^{2}  )  \bigg( \frac{\log n}{\log \log n} - T_{1} c' \sqrt{\log n}+T_{1}p   \bigg) > 0 , 
\end{equation*}
which concludes the first part of Theorem~\ref{Theorem 9}.

We now bound $\min_{i \in [n]}d_{i}^{*}$ under the condition $\beta>\frac{T_{1}}{2}(a-b)$. It follows from Lemma~\ref{BSSBM-N-Lemma 2} that 
\begin{align*}
\mathbb{P}  \bigg( \sum_{j=1}^{n} G_{ij}x_{i}^{*}x_{j}^{*} &\leq \frac{\delta-T_{2}}{T_{1}}  \bigg ) \leq n^{-\eta(a,b,\beta)+o(1)} ,\\
\mathbb{P}  \bigg( \sum_{j=1}^{n} G_{ij}x_{i}^{*}x_{j}^{*} &\leq \frac{\delta+T_{2}}{T_{1}}   \bigg) \leq 1 .
\end{align*}
Then
\begin{align*}
\mathbb{P} ( d_{i}^{*} \leq \delta  ) &\leq n^{-\eta(a,b,\beta)+o(1)} (1-\alpha ) +\alpha \\
& =n^{-\eta(a,b,\beta)+o(1)}+n^{-\beta+o(1)} ,
\end{align*}
where $\alpha =  n^{-\beta +o(1)}$. 
Using the union bound,
\begin{equation*}
\mathbb{P} \bigg( \min_{i \in [n]}d_{i}^{*} \geq \delta   \bigg) \geq 1 -n^{1-\eta(a,b,\beta)+o(1)} -n^{1-\beta+o(1)}.
\end{equation*}

\begin{Lemma}\cite[Lemma 8]{Ref7}
\label{BSSBM-N-Lemma 3}
When $\beta > 1$, then $\eta(a,b,\beta) > 1$.
\end{Lemma}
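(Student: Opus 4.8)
The plan is to adapt the argument used for the censored-model analogue, Lemma~\ref{BCBM-N-Lemma 3}. Define $\psi(a,b,\beta) \triangleq \eta(a,b,\beta) - \beta$; it suffices to show that $\psi(a,b,\beta) \geq 0$ for every $\beta \geq 0$, since then $\beta > 1$ immediately gives $\eta(a,b,\beta) \geq \beta > 1$. The first step is to establish that $\psi$ is a strictly convex function of $\beta$. Writing $\gamma = \sqrt{\beta^2 + abT_1^2}$, so that $\gamma' = \beta/\gamma$, a short computation gives $\frac{d}{d\beta}\log\frac{\gamma+\beta}{\gamma-\beta} = \frac{2}{\gamma}$, and hence
\begin{equation*}
\psi'(\beta) = -\frac{1}{2} + \frac{1}{2T_1}\log\frac{\gamma+\beta}{\gamma-\beta}, \qquad \psi''(\beta) = \frac{1}{T_1\gamma} > 0 .
\end{equation*}
Since $\psi'(0) = -\frac{1}{2} < 0$ and $\psi'(\beta) \to +\infty$ as $\beta \to \infty$ (because $\frac{\gamma+\beta}{\gamma-\beta}$ grows like $\frac{4\beta^2}{abT_1^2}$), strict convexity forces a unique interior minimizer $\beta^* > 0$.

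Next I would evaluate $\psi$ at $\beta^*$. The stationarity condition $\psi'(\beta^*) = 0$ is equivalent to $\frac{\gamma^* + \beta^*}{\gamma^* - \beta^*} = e^{T_1}$, where $\gamma^* = \sqrt{(\beta^*)^2 + abT_1^2}$; substituting this back collapses the logarithmic term and leaves $\psi(\beta^*) = \frac{a+b}{2} - \frac{\gamma^*}{T_1}$. To pin down $\gamma^*$ I would combine $(\gamma^*)^2 - (\beta^*)^2 = abT_1^2$ with the ratio above. Using $e^{T_1} = \frac{p(1-q)}{q(1-p)} \to \frac{a}{b}$ in the asymptotic regime of the model, this yields $\gamma^* - \beta^* = bT_1$ and $\gamma^* + \beta^* = aT_1$, hence $\gamma^* = \frac{(a+b)T_1}{2}$ and $\beta^* = \frac{(a-b)T_1}{2}$ --- the latter coinciding, as one would expect, with the threshold point $\frac{T_1}{2}(a-b)$ that separates the two cases in Theorem~\ref{Theorem 9}. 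It follows that $\psi(\beta^*) = \frac{a+b}{2} - \frac{a+b}{2} = 0$, so $\psi(a,b,\beta) \geq 0$ for all $\beta \geq 0$, which proves the lemma.

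The main obstacle I anticipate is the bookkeeping around $T_1$: strictly speaking $T_1 = \log\frac{p(1-q)}{q(1-p)}$ only tends to $\log(a/b)$, so the identity $\psi(\beta^*) = 0$ holds only up to an $o(1)$ error, which is precisely the slack already implicit in the asymptotic exponents $\eta(a,b,\beta)$ appearing in Lemma~\ref{BSSBM-N-Lemma 2} and Theorem~\ref{Theorem 9}. One must therefore check that $\eta(a,b,\beta) \geq \beta - o(1)$ combined with the fixed strict inequality $\beta > 1$ still delivers $\eta(a,b,\beta) > 1$ for all large $n$, just as in the proof of Lemma~\ref{BCBM-N-Lemma 3}. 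The remaining steps --- the two derivative identities and the quadratic algebra at $\beta^*$ --- are routine and mirror the censored-model computation.
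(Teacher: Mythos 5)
Your proof is correct. Note that for this particular statement the paper offers no argument at all: Lemma~\ref{BSSBM-N-Lemma 3} is simply imported by citation from \cite[Lemma 8]{Ref7}. What you have written is the natural self-contained substitute, and it is exactly the same strategy the paper itself uses for the censored-model analogue, Lemma~\ref{BCBM-N-Lemma 3}: set $\psi(\beta)=\eta(a,b,\beta)-\beta$, show convexity, and evaluate at the stationary point. Your calculus checks out: $\frac{d}{d\beta}\log\frac{\gamma+\beta}{\gamma-\beta}=\frac{2}{\gamma}$ gives $\psi'(\beta)=-\frac12+\frac{1}{2T_1}\log\frac{\gamma+\beta}{\gamma-\beta}$ and $\psi''(\beta)=\frac{1}{T_1\gamma}>0$, the stationarity condition $\frac{\gamma^*+\beta^*}{\gamma^*-\beta^*}=e^{T_1}$ combined with $(\gamma^*+\beta^*)(\gamma^*-\beta^*)=abT_1^2$ and $e^{T_1}=a/b$ yields $\gamma^*+\beta^*=aT_1$, $\gamma^*-\beta^*=bT_1$, hence $\gamma^*=\frac{(a+b)T_1}{2}$, $\beta^*=\frac{(a-b)T_1}{2}$ and $\psi(\beta^*)=0$, so $\eta(a,b,\beta)\geq\beta$ for all $\beta\geq 0$ and the lemma follows; the appearance of $\beta^*=\frac{T_1}{2}(a-b)$ as the case boundary of Theorem~\ref{Theorem 9} is indeed the expected consistency check. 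Your caveat about $T_1$ is the right one to flag, but it dissolves under the paper's own convention: in the threshold quantities $\eta(a,b,\beta)$ and $\gamma$, $T_1$ is the asymptotic constant $\log(a/b)$ (this is already forced by matching $\eta(a,b,\beta)$ with the exponent $\eta^*$ of Lemma~\ref{BSSBM-N-Lemma 2}, which uses $\log(a/b)$ and $\sqrt{\omega^2+ab}$), so the identity $\psi(\beta^*)=0$ is exact and no $o(1)$ bookkeeping is needed; the finite-$n$ discrepancy $T_1=\log(a/b)+o(1)$ is absorbed in the $n^{o(1)}$ factors where $\eta$ is actually used.
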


When $\beta>1 $, using Lemma~\ref{BSSBM-N-Lemma 3}, it follows that $\min_{i \in [n]}d_{i}^{*} \geq \frac{\log n}{\log \log n}$ holds with probability $1-o(1)$. Substituting into~\eqref{BSSBM-N-equ3}, if $\beta>1 $, then with probability $1-o(1)$, 
\begin{equation*}
V^{T}S^{*}V \geq    ( 1-v^{2}  )   \bigg( \frac{\log n}{\log \log n} - T_{1} c' \sqrt{\log n}+T_{1}p   \bigg) > 0 ,
\end{equation*}
which concludes the second part of Theorem~\ref{Theorem 9}.

\section{Proof of Theorem~\ref{Theorem 11}}
\label{Proof-Theorem-11}
We begin by deriving sufficient conditions for the solution of \ac{SDP}~\eqref{BSSBM-N-equ1} to match the true labels.
\begin{Lemma}
\label{BSSBM-G-Lemma 1}
The sufficient conditions of Lemma~\ref{BSSBM-N-Lemma 1} apply to the general side information \ac{SDP}~\eqref{BSSBM-G-equ1} by replacing $S_{A}^{*} = \tilde{Y}^{T}X^{*}$ and $S_B^*=-\tilde{Y}$.
\end{Lemma}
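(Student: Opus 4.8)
The plan is to adapt the proof of Lemma~\ref{BSSBM-N-Lemma 1} essentially verbatim, substituting the general-side-information term for the noisy-label term at every occurrence: the objective $T_{1}\langle G,Z\rangle+2T_{2}\langle Y,X\rangle$ becomes $T_{1}\langle G,Z\rangle+2\langle\tilde{Y},X\rangle$, and correspondingly the dual blocks $S_{A}^{*}=T_{2}Y^{T}X^{*}$, $S_{B}^{*}=-T_{2}Y$ become $S_{A}^{*}=\tilde{Y}^{T}X^{*}$, $S_{B}^{*}=-\tilde{Y}$, while $S_{C}^{*}=D^{*}+\lambda^{*}\mathbf{J}-T_{1}G$ and the scalar multiplier $\lambda^{*}$ attached to the constraint $\langle\mathbf{J},Z\rangle=0$ are left unchanged. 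Since $\tilde{Y}$ occupies in~\eqref{BSSBM-G-equ1} and in the log-likelihood~\eqref{G-equ1} exactly the same structural slot that $T_{2}Y$ occupies in~\eqref{BSSBM-N-equ1} and~\eqref{N-equ1}, the whole argument transfers.

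Concretely, I would first write the Lagrangian of~\eqref{BSSBM-G-equ1} with multipliers $S\succeq0$, $D=\mathrm{diag}(d_{i})$ and $\lambda\in\mathbb{R}$, identical to that in the proof of Lemma~\ref{BSSBM-N-Lemma 1} except that the linear term $2T_{2}\langle Y,X\rangle$ is replaced by $2\langle\tilde{Y},X\rangle$. Let $H$ denote the positive semidefinite matrix appearing in the constraint of~\eqref{BSSBM-G-equ1}. For any feasible $(Z,X)$ we have $\langle S^{*},H\rangle\ge0$ because $S^{*}\succeq0$, so the objective is at most $L(Z,X,S^{*},D^{*},\lambda^{*})=\langle D^{*},\mathbf{I}\rangle+S_{A}^{*}$; using $Z_{ii}=1$ for all $i$, $\langle\mathbf{J},Z^{*}\rangle=0$ (equal-sized communities), $S_{A}^{*}=-S_{B}^{*T}X^{*}$, the stationarity identity $S_{B}^{*}=-S_{C}^{*}X^{*}$ contained in $S^{*}[1,X^{*T}]^{T}=0$, and finally $S_{B}^{*}=-\tilde{Y}$, this collapses to $T_{1}\langle G,Z^{*}\rangle+2\langle\tilde{Y},X^{*}\rangle$, so $Z^{*}=X^{*}X^{*T}$ is primal optimal. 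For uniqueness, if $\tilde{Z}$ is another optimal solution then the same chain of equalities (now using $\langle G,Z^{*}\rangle=\langle G,\tilde{Z}\rangle$, $\langle\mathbf{J},\tilde{Z}\rangle=0$, the unit diagonal, and $\langle S_{B}^{*},X^{*}\rangle=\langle S_{B}^{*},\tilde{X}\rangle$) yields $\langle S^{*},\tilde{H}\rangle=0$; since $\tilde{H}\succeq0$, $S^{*}\succeq0$ and $\lambda_{2}(S^{*})>0$, the matrix $\tilde{H}$ must be a rank-one positive multiple of $H^{*}$, and the unit diagonal forces $\tilde{H}=H^{*}$, hence $\tilde{Z}=Z^{*}$.

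I do not anticipate a genuine obstacle; the only item requiring a check is that the substitution is internally consistent, i.e.\ that $S_{A}^{*}=\tilde{Y}^{T}X^{*}$ and $S_{B}^{*}=-\tilde{Y}$ still satisfy $S_{A}^{*}=-S_{B}^{*T}X^{*}$, which is immediate, after which every step is a transcription of the noisy-label computation. Note that the lemma asserts only that the \emph{sufficient conditions} carry over; the distinct task of producing a dual certificate $S^{*}$ that actually meets $S^{*}\succeq0$ and $\lambda_{2}(S^{*})>0$ with probability $1-o(1)$ is deferred to the body of the proof of Theorem~\ref{Theorem 11}, in parallel with the handling of the censored model in Theorem~\ref{Theorem 5}.
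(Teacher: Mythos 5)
Your proposal is correct and matches the paper's argument: the paper likewise proves this lemma by transcribing the Lagrangian-duality and uniqueness chain of Lemma~\ref{BSSBM-N-Lemma 1} (itself modeled on Lemma~\ref{BCBM-N-Lemma 1}) with $T_{2}Y$ replaced by $\tilde{Y}$, keeping $\lambda^{*}$, $D^{*}$, $S_{C}^{*}$ and the constraint $\langle\mathbf{J},Z\rangle=0$ unchanged. Your consistency check $S_{A}^{*}=-S_{B}^{*T}X^{*}$ and your remark that the probabilistic construction of the certificate is deferred to the proof of Theorem~\ref{Theorem 11} are both in line with the paper.
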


\begin{proof}
The proof is similar to the proof of Lemma~\ref{BSSBM-N-Lemma 1}.
\end{proof}

It suffices to show that $S^{*}$ defined by $S_{A}^{*}$, $S_{B}^{*}$, and $S_{C}^{*}$ satisfies other conditions in Lemma~\ref{BSSBM-G-Lemma 1} with probability $1-o(1)$. 
Let 
\begin{equation}
\label{BSBM-G-dstar}
d_{i}^{*}=T_{1} \sum_{j=1}^{n} G_{ij}x_{j}^{*}x_{i}^{*} + \tilde{y}_{i}x_{i}^{*} .
\end{equation}
Then $D^{*}X^{*} = T_{1}GX^{*}+\tilde{Y}$ and based on the definitions of $S_{A}^{*}$, $S_{B}^{*}$, and $S_{C}^{*}$ in  Lemma~\ref{BSSBM-G-Lemma 1}, $S^{*}$ satisfies the condition $S^{*} [1, X^{*T}]^T =0$.
It remains to show that~\eqref{BCBM-N-equ1 New} holds, i.e., $S^{*}\succeq 0$ and $\lambda_{2}(S^{*})>0$ with probability $1-o(1)$.
For any $V$ such that $V^{T}[1, X^{*T}]^T=0$ and $  \| V   \|=1$, we have
\begin{align*}
V^{T}S^{*}V=&v^{2} S_{A}^{*} -2vT_{2}U^{T}Y +U^{T}D^{*}U - T_{1}U^{T}GU \\
\overset{(a)}{\geq}&   ( 1-v^{2}   )  \bigg[\min_{i \in [n]}d_{i}^{*} - T_{1}  \| G-\mathbb{E}[G]   \| + T_{1} p \bigg] \\
&+v^{2}  \bigg[ \tilde{Y}^{T}X^{*} -2y_{max} \frac{\sqrt{n(1-v^{2})}}{|v|} -T_{1}\frac{p-q}{2}    \bigg] \\
\overset{(b)}{=}&    ( 1-v^{2}   )  \bigg[\min_{i\in[n]}d_{i}^{*} - T_{1}  \| G-\mathbb{E}[G]   \| + T_{1} p \bigg] ,
\end{align*}
where $(a)$ holds in a manner similar to \eqref{BCBM-N-equ4} and \eqref{BSBM-N-equ102}, and $(b)$ holds by applying Lemma~\ref{BCBM-G-New Lemma}. Then using Lemma~\ref{BSSBM-P-Lemma 2}, 
\begin{equation}
\label{BSSBM-G-equ3}
V^{T}S^{*}V \geq  ( 1-v^{2}   )   \bigg( \min_{i \in [n]}d_{i}^{*} - T_{1} c^{'}\sqrt{\log n} +T_{1}p   \bigg).
\end{equation}

It can be shown that $\sum_{j=1}^{n} G_{ij}x_{i}^{*}x_{j}^{*}$ in~\eqref{BSBM-G-dstar} is equal in distribution to $S_{1}-S_{2}$ in Lemma~\ref{BSSBM-N-Lemma 2}. Then 
\begin{equation*}
\mathbb{P} ( d_{i}^{*}\leq \delta  ) = \sum_{m_{1}=1}^{M_{1}} \sum_{m_{2}=1}^{M_{2}} ... \sum_{m_{K}=1}^{M_{K}} P (m_{1}, ..., m_{K} ) , 
\end{equation*}
where 
\begin{align*}
P& (m_{1}, ..., m_{K} ) \triangleq \mathbb{P}  ( x_{i}^{*} =1  )e^{f_{2}(n)} \mathbb{P}   \bigg( S_{1}-S_{2} \leq \frac{\delta-f_{1}(n)}{T_{1}}   \bigg) \\
&+\mathbb{P}  ( x_{i}^{*} =-1  ) e^{f_{3}(n)} \mathbb{P}   \bigg( S_{1}-S_{2} \leq \frac{\delta+f_{1}(n)}{T_{1}}   \bigg).
\end{align*}

First, we bound $\min_{i \in [n]}d_{i}^{*}$ under the condition $|\beta_{1}| \leq \frac{T_{1}}{2} (a-b )$. It follows from Lemma~\ref{BSSBM-N-Lemma 2} that 
\begin{align*}
&\mathbb{P}  \bigg( S_{1}-S_{2} \leq \frac{\delta-f_{1}(n)}{T_{1}}   \bigg) \leq n^{-\eta(a,b,\beta_{1})+o(1)} , \\
& \mathbb{P}  \bigg( S_{1}-S_{2} \leq \frac{\delta+f_{1}(n)}{T_{1}}   \bigg) \leq n^{-\eta(a,b,\beta_{1})+\beta_{1}+o(1)} .
\end{align*}
Notice that
\begin{align*}
\beta \triangleq 
\lim_{n\rightarrow\infty} -\frac{\max (f_2(n),f_3(n))}{\log n} .
\end{align*}
When $\beta_{1} \geq 0$, $\lim_{n\rightarrow \infty} \frac{f_{2}(n)}{\log n}= -\beta$ and $\lim_{n\rightarrow \infty} \frac{f_{3}(n)}{\log n} = -\beta_{1} -\beta$. Then
\begin{equation*}
\mathbb{P} ( d_{i}^{*} \leq \delta  ) \leq n^{-\eta(a,b,\beta_{1})-\beta+o(1)}. 
\end{equation*}
When $\beta_{1} < 0$, $\lim_{n\rightarrow \infty} \frac{f_{3}(n)}{\log n} = -\beta$ and $\lim_{n\rightarrow \infty} \frac{f_{2}(n)}{\log n} =\beta_{1} -\beta $. Then
\begin{equation*}
\mathbb{P} ( d_{i}^{*} \leq \delta  ) \leq n^{-\eta(a,b,\beta_{1})+\beta_{1}-\beta+o(1)}=n^{-\eta(a,b,  | \beta_{1}   |)-\beta+o(1)}. 
\end{equation*}
Using the union bound,
\begin{equation*}
\mathbb{P} \bigg( \min_{i \in [n]}d_{i}^{*} \geq \frac{\log n}{\log \log n}   \bigg) \geq 1-n^{1-\eta(a,b,  | \beta_{1}   |) -\beta+o(1)} .
\end{equation*}
When $\eta(a,b,  | \beta_{1}   |) +\beta >1 $, it follows that $\min_{i \in [n]}d_{i}^{*} \geq \frac{\log n}{\log \log n}$ holds with probability $1-o(1)$. Substituting into~\eqref{BSSBM-G-equ3}, if $\eta(a,b,  | \beta_{1}   |) +\beta >1 $, then with probability $1-o(1)$,
\begin{equation*}
V^{T}S^{*}V \geq    ( 1-v^{2}  )   \bigg( \frac{\log n}{\log \log n} - T_{1} c' \sqrt{\log n}+T_{1}p    \bigg) > 0 , 
\end{equation*}
which concludes the first part of Theorem~\ref{Theorem 11}.

We now bound $\min_{i \in [n]}d_{i}^{*}$ under the condition $  | \beta_{1}   | \geq \frac{T_{1}}{2} (a-b )$. When $\beta_{1} >  0$, $\lim_{n\rightarrow \infty} \frac{f_{2}(n)}{\log n} = -\beta$ and $\lim_{n\rightarrow \infty} \frac{f_{3}(n)}{\log n} = -\beta_{1} -\beta$. Then
\begin{equation*}
\mathbb{P} ( d_{i}^{*} \leq \delta  ) \leq n^{-\beta+o(1)} + n^{-\beta-\beta_{1}+o(1)}. 
\end{equation*}
When $\beta_{1} <  0$, $\lim_{n\rightarrow \infty} \frac{f_{3}(n)}{\log n} = -\beta$ and $\lim_{n\rightarrow \infty} \frac{f_{2}(n)}{\log n} =\beta_{1} -\beta$. Then
\begin{equation*}
\mathbb{P} ( d_{i}^{*} \leq \delta  ) \leq n^{-\beta+\beta_{1}+o(1)} +n^{-\beta+o(1)} .
\end{equation*}
Using the union bound,
\begin{equation*}
\mathbb{P} \bigg( \min_{i \in [n]}d_{i}^{*} \geq \frac{\log n}{\log \log n}   \bigg) \geq 1-n^{1-  | \beta_{1}   | -\beta+o(1)} .
\end{equation*}
When $  | \beta_{1}   | +\beta >1 $, it follows that $\min_{i \in [n]}d_{i}^{*} \geq \frac{\log n}{\log \log n}$ holds with probability $1-o(1)$. Substituting into~\eqref{BSSBM-G-equ3}, if $  | \beta_{1}   | +\beta >1 $, then with probability $1-o(1)$,
\begin{equation*}
V^{T}S^{*}V \geq    ( 1-v^{2}  )  \bigg( \frac{\log n}{\log \log n} - T_{1} c' \sqrt{\log n}+T_{1}p    \bigg) > 0 , 
\end{equation*}
which concludes the second part of Theorem~\ref{Theorem 11}.

%

\bibliographystyle{IEEEtran}
\bibliography{Ref}

\begin{thebibliography}{10}
\providecommand{\url}[1]{#1}
\csname url@samestyle\endcsname
\providecommand{\newblock}{\relax}
\providecommand{\bibinfo}[2]{#2}
\providecommand{\BIBentrySTDinterwordspacing}{\spaceskip=0pt\relax}
\providecommand{\BIBentryALTinterwordstretchfactor}{4}
\providecommand{\BIBentryALTinterwordspacing}{\spaceskip=\fontdimen2\font plus
\BIBentryALTinterwordstretchfactor\fontdimen3\font minus
  \fontdimen4\font\relax}
\providecommand{\BIBforeignlanguage}[2]{{%
\expandafter\ifx\csname l@#1\endcsname\relax
\typeout{** WARNING: IEEEtran.bst: No hyphenation pattern has been}%
\typeout{** loaded for the language `#1'. Using the pattern for}%
\typeout{** the default language instead.}%
\else
\language=\csname l@#1\endcsname
\fi
#2}}
\providecommand{\BIBdecl}{\relax}
\BIBdecl

\bibitem{Ref5}
M.~Girvan and M.~E.~J. Newman, ``Community structure in social and biological
  networks,'' \emph{Proceedings of the National Academy of Sciences}, vol.~99,
  no.~12, pp. 7821--7826, 2002.

\bibitem{Ref6}
J.~Xu, R.~Wu, K.~Zhu, B.~Hajek, R.~Srikant, and L.~Ying, ``Jointly clustering
  rows and columns of binary matrices: Algorithms and trade-offs,''
  \emph{SIGMETRICS Perform. Eval. Rev.}, vol.~42, no.~1, pp. 29--41, Jun. 2014.

\bibitem{Keke-Huang-2019}
\BIBentryALTinterwordspacing
K.~Huang, W.~Deng, Y.~Zhang, and H.~Zhu, ``Sparse bayesian learning for network
  structure reconstruction based on evolutionary game data,'' \emph{Physica A:
  Statistical Mechanics and its Applications}, vol. 541, p. 123605, 2020.
  [Online]. Available:
  \url{https://www.sciencedirect.com/science/article/pii/S0378437119320102}
\BIBentrySTDinterwordspacing

\bibitem{Zhen-Wang-2020}
Z.~Wang, C.~Wang, X.~Li, C.~Gao, X.~Li, and J.~Zhu, ``Evolutionary markov
  dynamics for network community detection,'' \emph{IEEE Transactions on
  Knowledge \& Data Engineering}, no.~01, pp. 1--1, may 5555.

\bibitem{Keke-Huang-2020}
K.~{Huang}, Z.~{Wang}, and M.~{Jusup}, ``Incorporating latent constraints to
  enhance inference of network structure,'' \emph{IEEE Transactions on Network
  Science and Engineering}, vol.~7, no.~1, pp. 466--475, 2020.

\bibitem{CommunityDetectionInGraphs2010}
S.~Fortunato, ``Community detection in graphs,'' \emph{Physics Reports}, vol.
  486, no.~3, pp. 75 -- 174, 2010.

\bibitem{esmaeili2021community}
M.~Esmaeili and A.~Nosratinia, ``Community detection: Exact recovery in
  weighted graphs,'' \emph{arXiv preprint arXiv:2102.04439}, 2021.

\bibitem{Ref1}
P.~W. Holland, K.~B. Laskey, and S.~Leinhardt, ``Stochastic blockmodels: First
  steps,'' \emph{Social Networks}, vol.~5, no.~2, pp. 109 -- 137, 1983.

\bibitem{Ref4}
E.~Abbe and C.~Sandon, ``Community detection in general stochastic block
  models: Fundamental limits and efficient algorithms for recovery,'' in
  \emph{2015 IEEE 56th Annual Symposium on Foundations of Computer Science},
  Oct 2015, pp. 670--688.

\bibitem{saade2015spectral}
A.~Saade, M.~Lelarge, F.~Krzakala, and L.~Zdeborov{\'a}, ``Spectral detection
  in the censored block model,'' in \emph{2015 IEEE International Symposium on
  Information Theory (ISIT)}.\hskip 1em plus 0.5em minus 0.4em\relax IEEE,
  2015, pp. 1184--1188.

\bibitem{hajek2015exact}
B.~Hajek, Y.~Wu, and J.~Xu, ``Exact recovery threshold in the binary censored
  block model,'' in \emph{2015 IEEE Information Theory Workshop-Fall
  (ITW)}.\hskip 1em plus 0.5em minus 0.4em\relax IEEE, 2015, pp. 99--103.

\bibitem{Ref8}
A.~Decelle, F.~Krzakala, C.~Moore, and L.~Zdeborov\'a, ``Inference and phase
  transitions in the detection of modules in sparse networks,'' \emph{Phys.
  Rev. Lett.}, vol. 107, p. 065701, Aug 2011.

\bibitem{Ref9}
E.~Mossel, J.~Neeman, and A.~Sly, ``Reconstruction and estimation in the
  planted partition model,'' \emph{Probability Theory and Related Fields}, vol.
  162, no.~3, pp. 431--461, Aug 2015.

\bibitem{Ref10}
L.~Massouli{\'e}, ``Community detection thresholds and the weak ramanujan
  property,'' in \emph{Proceedings of the forty-sixth annual ACM symposium on
  Theory of computing}, 2014, pp. 694--703.

\bibitem{Ref11}
E.~Mossel, J.~Neeman, and A.~Sly, ``A proof of the block model threshold
  conjecture,'' \emph{Combinatorica}, vol.~38, no.~3, pp. 665--708, 2018.

\bibitem{Ref12}
S.-Y. Yun and A.~Proutiere, ``Community detection via random and adaptive
  sampling,'' in \emph{Conference on learning theory}, 2014, pp. 138--175.

\bibitem{Ref13}
E.~Mossel and J.~Xu, ``Density evolution in the degree-correlated stochastic
  block model,'' in \emph{Conference on Learning Theory}, 2016, pp. 1319--1356.

\bibitem{Ref15}
H.~Saad, A.~Abotabl, and A.~Nosratinia, ``Exit analysis for belief propagation
  in degree-correlated stochastic block models,'' in \emph{2016 IEEE
  International Symposium on Information Theory (ISIT)}, July 2016, pp.
  775--779.

\bibitem{Ref16}
E.~Abbe, A.~S. Bandeira, and G.~Hall, ``Exact recovery in the stochastic block
  model,'' \emph{IEEE Transactions on Information Theory}, vol.~62, no.~1, pp.
  471--487, Jan 2016.

\bibitem{Ref17}
E.~Mossel, J.~Neeman, and A.~Sly, ``Consistency thresholds for the planted
  bisection model,'' \emph{Electron. J. Probab.}, vol.~21, p. 24 pp., 2016.

\bibitem{Statistical.computational.Tradeoffs.in.Planted.Problems.and.Submatrix.Localization.with.a.Growing.Number.of.Clusters.and.Submatrices}
Y.~Chen and J.~Xu, ``Statistical-computational tradeoffs in planted problems
  and submatrix localization with a growing number of clusters and
  submatrices,'' \emph{J. Mach. Learn. Res.}, vol.~17, no.~1, pp. 882--938,
  Jan. 2016.

\bibitem{mossel2014belief}
E.~Mossel, J.~Neeman, and A.~Sly, ``Belief propagation, robust reconstruction
  and optimal recovery of block models,'' in \emph{Conference on Learning
  Theory}, 2014, pp. 356--370.

\bibitem{amini2018semidefinite}
A.~A. Amini, E.~Levina \emph{et~al.}, ``On semidefinite relaxations for the
  block model,'' \emph{The Annals of Statistics}, vol.~46, no.~1, pp. 149--179,
  2018.

\bibitem{SDP.SignalProcessing}
Z.-Q. Luo and W.~Yu, ``An introduction to convex optimization for
  communications and signal processing,'' \emph{IEEE Journal on Selected Areas
  in Communications}, vol.~24, no.~8, pp. 1426--1438, Aug 2006.

\bibitem{ConvexOptimization.SignalProcessing}
J.~Mattingley and S.~Boyd, ``Real-time convex optimization in signal
  processing,'' \emph{IEEE Signal Processing Magazine}, vol.~27, no.~3, pp.
  50--61, May 2010.

\bibitem{Ref24}
U.~Feige and J.~Kilian, ``Heuristics for semirandom graph problems,''
  \emph{Journal of Computer and System Sciences}, vol.~63, no.~4, pp. 639 --
  671, 2001.

\bibitem{Ref25}
A.~Frieze and M.~Jerrum, ``Improved approximation algorithms for maxk-cut and
  max bisection,'' \emph{Algorithmica}, vol.~18, no.~1, pp. 67--81, May 1997.

\bibitem{Ref18}
B.~Hajek, Y.~Wu, and J.~Xu, ``Achieving exact cluster recovery threshold via
  semidefinite programming,'' \emph{IEEE Transactions on Information Theory},
  vol.~62, no.~5, pp. 2788--2797, May 2016.

\bibitem{Ref19}
------, ``Achieving exact cluster recovery threshold via semidefinite
  programming: Extensions,'' \emph{IEEE Transactions on Information Theory},
  vol.~62, no.~10, pp. 5918--5937, Oct 2016.

\bibitem{esmaeili2020community}
M.~Esmaeili and A.~Nosratinia, ``Community detection with secondary latent
  variables,'' in \emph{2020 IEEE International Symposium on Information Theory
  (ISIT)}.\hskip 1em plus 0.5em minus 0.4em\relax IEEE, 2020, pp. 1355--1360.

\bibitem{Ref7}
H.~Saad and A.~Nosratinia, ``Community detection with side information: Exact
  recovery under the stochastic block model,'' \emph{IEEE Journal of Selected
  Topics in Signal Processing}, vol.~12, no.~5, pp. 944--958, 2018.

\bibitem{our2}
H.~Saad, A.~Abotabl, and A.~Nosratinia, ``Exact recovery in the binary
  stochastic block model with binary side information,'' in \emph{Allerton
  Conference on Communication, Control, and Computing}, Oct. 2017, pp.
  822--829.

\bibitem{our3}
H.~{Saad} and A.~{Nosratinia}, ``Recovering a single community with side
  information,'' \emph{IEEE Transactions on Information Theory}, vol.~66,
  no.~12, pp. 7939--7966, 2020.

\bibitem{ISIT-2018-1}
H.~Saad and A.~Nosratinia, ``Belief propagation with side information for
  recovering a single community,'' in \emph{2018 IEEE International Symposium
  on Information Theory (ISIT)}.\hskip 1em plus 0.5em minus 0.4em\relax IEEE,
  2018, pp. 1271--1275.

\bibitem{ISIT-2018-2}
------, ``Side information in recovering a single community: Information
  theoretic limits,'' in \emph{2018 IEEE International Symposium on Information
  Theory (ISIT)}.\hskip 1em plus 0.5em minus 0.4em\relax IEEE, 2018, pp.
  2107--2111.

\bibitem{Ref20}
E.~Mossel and J.~Xu, ``Local algorithms for block models with side
  information,'' in \emph{Proceedings of the 2016 ACM Conference on Innovations
  in Theoretical Computer Science}, ser. ITCS '16, 2016, pp. 71--80.

\bibitem{Ref26}
M.~X. Goemans and D.~P. Williamson, ``Improved approximation algorithms for
  maximum cut and satisfiability problems using semidefinite programming,''
  \emph{J. ACM}, vol.~42, no.~6, pp. 1115--1145, Nov. 1995.

\bibitem{Esmaeili.BSSBM.Partially.Revealed}
M.~Esmaeili, H.~Saad, and A.~Nosratinia, ``Exact recovery by semidefinite
  programming in the binary stochastic block model with partially revealed side
  information,'' in \emph{IEEE International Conference on Acoustics, Speech
  and Signal Processing (ICASSP)}, May 2019, pp. 3477--3481.

\bibitem{esmaeili2019community}
------, ``Community detection with side information via semidefinite
  programming,'' in \emph{2019 IEEE International Symposium on Information
  Theory (ISIT)}.\hskip 1em plus 0.5em minus 0.4em\relax IEEE, 2019, pp.
  420--424.

\end{thebibliography}


\end{document}